\documentclass[11pt, notitlepage]{article}      

\usepackage{amsfonts,amsmath,amssymb,amscd,amsthm,euscript,graphicx,latexsym,mathrsfs,verbatim,setspace} 

\usepackage{authblk}

\theoremstyle{plain}
\newtheorem{thm}{Theorem}[section]
\newtheorem{lem}[thm]{Lemma}
\newtheorem{cor}[thm]{Corollary}
\newtheorem{conj}[thm]{Conjecture}

\newtheorem{prop}[thm]{Proposition}
\usepackage{mathtools}
\usepackage{amsthm}
\usepackage{amssymb}
\usepackage{amsfonts}
\usepackage{graphicx}
\usepackage{ytableau}
\usepackage{hyperref}
\usepackage{cleveref}
\usepackage[utf8]{inputenc}
\usepackage{todonotes}
\usepackage{tikz}
\usepackage{thmtools}
\usepackage{thm-restate}
\usepackage{comment}

\usepackage{tikz}
\usepackage{pgfplots}
\pgfplotsset{compat=1.5}

\newcommand{\opt}{\operatorname{opt}}

\setlength{\textheight}{8.8in}
\setlength{\textwidth}{6.5in}
\voffset = -14mm
\hoffset = -18mm		

\begin{document}

\title{Online Learning of Smooth Functions} 

\author{Jesse Geneson and Ethan Zhou}

\maketitle

\begin{abstract}
In this paper, we study the online learning of real-valued functions where the hidden function is known to have certain smoothness properties. Specifically, for $q \ge 1$, let $\mathcal F_q$ be the class of absolutely continuous functions $f: [0,1] \to \mathbb R$ such that $\|f'\|_q \le 1$. For $q \ge 1$ and $d \in \mathbb Z^+$, let $\mathcal F_{q,d}$ be the class of functions $f: [0,1]^d \to \mathbb R$ such that any function $g: [0,1] \to \mathbb R$ formed by fixing all but one parameter of $f$ is in $\mathcal F_q$. For any class of real-valued functions $\mathcal F$ and $p>0$, let $\opt_p(\mathcal F)$ be the best upper bound on the sum of $p^{\text{th}}$ powers of absolute prediction errors that a learner can guarantee in the worst case. In the single-variable setup, we find new bounds for $\opt_p(\mathcal F_q)$ that are sharp up to a constant factor. We show for all $\varepsilon \in (0, 1)$ that $\opt_{1+\varepsilon}(\mathcal{F}_{\infty}) = \Theta(\varepsilon^{-\frac{1}{2}})$ and $\opt_{1+\varepsilon}(\mathcal{F}_q) = \Theta(\varepsilon^{-\frac{1}{2}})$ for all $q \ge 2$. We also show for $\varepsilon \in (0,1)$ that $\opt_2(\mathcal F_{1+\varepsilon})=\Theta(\varepsilon^{-1})$. In addition, we obtain new exact results by proving that $\opt_p(\mathcal F_q)=1$ for $q \in (1,2)$ and $p \ge 2+\frac{1}{q-1}$. In the multi-variable setup, we establish inequalities relating $\opt_p(\mathcal F_{q,d})$ to $\opt_p(\mathcal F_q)$ and show that $\opt_p(\mathcal F_{\infty,d})$ is infinite when $p<d$ and finite when $p>d$. We also obtain sharp bounds on learning $\mathcal F_{\infty,d}$ for $p < d$ when the number of trials is bounded.
\end{abstract}

\section{Introduction}

Consider a learner that wants to predict the next day’s temperature range at a given location based on inputs such as the current day's temperature range, humidity, atmospheric pressure, precipitation, wind speed, solar radiation, location, and time of year. In our model, this learner is tested daily. On a given day, the learner gets inputs for that day, which it uses to output a prediction for the next day’s temperature range; when the next day arrives, it sees the correct temperature range, then uses this feedback to update future predictions. As this is repeated, the learner accumulates information to help it make better predictions. A natural question arises: can the learner guarantee that its predictions become better over time, and if so, how quickly?

We investigate a model of online learning of real-valued functions previously studied in \cite{kl,long,mycielski,angluin,littlestone,lw} where an algorithm $A$ learns a real-valued function $f$ from some class $\mathcal F$ in trials. Past research on this model focused on functions of one input, for example, predicting the temperature range solely based on the time of year. The research showed that, as long as the function is sufficiently smooth, the learner can become a good predictor fairly rapidly. Suppose that $\mathcal F$ consists of functions $f: S \to \mathbb R$ for some set $S$, and fix some $f \in \mathcal F$. In each trial $t=0,\ldots,m$, $A$ receives an input $s_t \in S$, guesses $\hat y_t$ for the value of $f(s_t)$, and receives the actual value of $f(s_t)$. 

Following \cite{kl}, we focus on an error function which measures how difficult it is for a learner to predict functions accurately in the worst case. The error function depends on two parameters, $p$ and $q$, which determine how harshly the learner is punished for errors and the types of functions that the learner might encounter, respectively. Small values of $p$ and $q$ are more difficult for the learner, leading to higher values of the error function. For each algorithm $A$, $p > 0$, $f \in \mathcal F$, and $\sigma=(s_0,\ldots,s_m) \in S^{m+1}$, define \[ \mathscr L_p(A,f,\sigma)=\sum_{t=1}^m|\hat y_t-f(s_t)|^p. \] When $f$ and $\sigma$ are clear from the context, we refer to $\mathscr L_p(A,f,\sigma)$ as \textit{the total $p$-error of $A$}. Define \[ \mathscr L_p(A,\mathcal F)=\displaystyle\sup_{f \in \mathcal F,\sigma \in \cup_{m \in \mathbb Z^+}S^m}\mathscr L_p(A,f,\sigma) \] and $\opt_p(\mathcal F)=\displaystyle\inf_A \mathscr L_p(A,\mathcal F)$. Note that unlike the definition of $\opt$ presented in \cite{kl,long,geneson}, $\mathcal F$ may consist of real-valued functions on any domain, not just functions from $[0,1]$ to $\mathbb R$.

The case where $\mathcal F$ contains functions $f: [0,1] \to \mathbb R$ whose derivatives have various bounded norms was studied in \cite{kl,long,geneson}. For $q \ge 1$, let $\mathcal F_q$ be the class of absolutely continuous functions $f: [0,1] \to \mathbb R$ such that $\int_0^1|f'(x)|^q \text{d}x \le 1$, and let $\mathcal F_\infty$ be the class of absolutely continuous functions $f: [0,1] \to \mathbb R$ such that $\displaystyle \sup_{x \in (0,1)}|f'(x)| \le 1$. As noted in \cite{long}, $\mathcal F_\infty$ contains exactly those $f: [0,1] \to \mathbb R$ such that $|f(x)-f(y)| \le |x-y|$ for all $x,y \in [0,1]$. Also, by Jensen's inequality, $\mathcal F_\infty \subseteq \mathcal F_q \subseteq \mathcal F_r$ for all $q \ge r \ge 1$. Hence $\opt_p(\mathcal F_\infty) \le \opt_p(\mathcal F_q) \le \opt_p(\mathcal F_r)$ for all $p \ge 1$ and $q \ge r \ge 1$. Previous papers determined the exact values of $\opt_p(\mathcal F_q)$ for $p = 1$, $q = 1$, and $p, q \ge 2$, as well as bounds on $\opt_p(\mathcal F_q)$ for $p \in (1, 2)$ and $q \ge 2$.

\begin{figure}[t]
        \centering
        \begin{tikzpicture}
\begin{axis}[
    tick label style={font=\scriptsize},
    axis y line*=left,
    axis x line*=bottom,
    xmin=1,
    xmax=5,
    ymin=1,
    ymax=5,
    xtick={1,2},
    xtick style={draw=none},
    ytick style={draw=none},
    ytick={1,2},
	xlabel=$p$,
	ylabel=$q$
]
\addplot [black] coordinates {
	(2,2)
	(2,5)
};
\addplot [black] coordinates {
	(1,2)
	(5,2)
};
\addplot [black] coordinates {
	(1,1)
	(1,5)
};
\addplot [black] coordinates {
	(1,1)
	(5,1)
};
\node at (axis cs:3.5,3.5) { $1$ };
\node[scale=0.8] at (axis cs:1.5,3.5) { $O\left(\frac{1}{p-1}\right)$ };
\end{axis}
\end{tikzpicture}
        \caption{Exact values and bounds on $\opt_p(\mathcal{F}_q)$ for $p, q > 1$ prior to the results in this paper}
        \label{old_bounds}
    \end{figure}
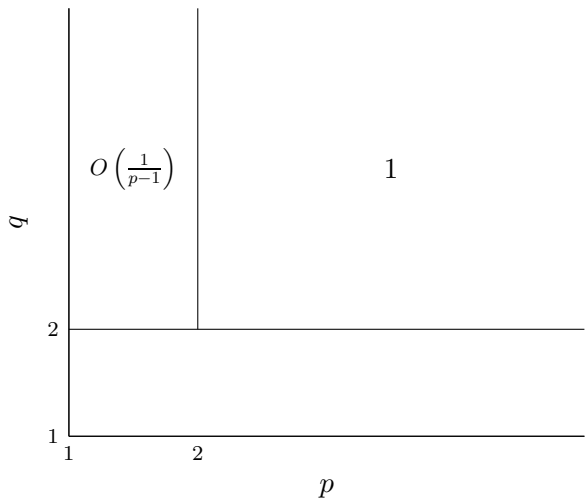

The paper \cite{kl} proved that $\opt_p(\mathcal{F}_1) = \infty$ for all $p \ge 1$. They also showed that $\opt_1(\mathcal{F}_q) = \opt_1(\mathcal{F}_{\infty}) = \infty$ for all $q \ge 1$. In contrast, they found that $\opt_p(\mathcal{F}_q) = \opt_p(\mathcal{F}_{\infty}) = 1$ for all $p \ge 2$ and $q \ge 2$. This was also proved in \cite{FM} using a different algorithm based on a generalization of the Widrow-Hoff algorithm \cite{kaczmarz, wh}, and a noisy version of this problem was studied in \cite{clw}. In this paper, we extend the region of values of $p, q$ for which it is known that $\opt_p(\mathcal{F}_q) = 1$.

\begin{thm}\label{more1bound}
For any reals $q>1$ and $p \ge 2+\frac{1}{q-1}$, we have $\opt_p(\mathcal F_q)=1$.
\end{thm}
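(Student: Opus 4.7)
The lower bound $\opt_p(\mathcal{F}_q) \ge 1$ is a short two-query adversary argument: reveal $f(0) = 0$ at trial $0$, query $s_1 = 1$, and use the fact that $\pm x \in \mathcal{F}_q$ to force $|\hat y_1 - f(1)| \ge 1$, so the total $p$-error is at least $1$. For the upper bound, since $\opt_p(\mathcal{F}_q) \le \opt_p(\mathcal{F}_2) = 1$ is already known for $q \ge 2$, $p \ge 2$ (via \cite{kl}), the genuinely new case is $q \in (1,2)$, where $p \ge 2 + 1/(q-1) > 3 > q$.

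My plan for the upper bound is an amortized analysis via a potential function. The natural potential is
\[
\Phi = 1 - \sum_{i=1}^{k-1} \frac{|y_{i+1} - y_i|^q}{(x_{i+1} - x_i)^{q-1}},
\]
summed over consecutive pairs $(x_i, y_i), (x_{i+1}, y_{i+1})$ of known data points. By H\"older's inequality applied to each subinterval $[x_i, x_{i+1}]$, the sum is at most $\int_0^1 |f'|^q \le 1$, so $\Phi \ge 0$ throughout and $\Phi = 1$ initially. If the learner's algorithm can be designed so that each trial's $p$-error $|\hat y_t - f(s_t)|^p$ is at most the corresponding decrement in $\Phi$, then telescoping gives $\sum_t |\hat y_t - f(s_t)|^p \le 1$.

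For an interior query $s \in (x_i, x_{i+1})$ with $a = x_i$, $b = x_{i+1}$, $u = y_i$, $v = y_{i+1}$, $d = v-u$, $t = (s-a)/(b-a)$, $w = f(s)$, the decrement is $\Delta\Phi = [|w - u|^q t^{1-q} + |v - w|^q (1-t)^{1-q} - d^q]/(b-a)^{q-1}$. The learner's prediction $\hat y$ should be chosen so that $|w - \hat y|^p \le \Delta\Phi$ for every feasible $w$; I expect the right choice to be the midpoint of the feasibility region $[w_-, w_+]$ defined by $\Delta\Phi(w) \le 1$ (which coincides with linear interpolation $\hat y = u + t(v-u)$ when $q = 2$ but differs otherwise). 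After the rescaling $\tilde\delta = (w - \hat y)/(b-a)^\alpha$ with $\alpha = (q-1)/q$, using $p\alpha = 2 - 1/q$ at the boundary $p = 2 + 1/(q-1)$ together with $b - a \le 1$, the inequality becomes a scale-invariant condition on parameters $(\tilde d, t) = (d/(b-a)^\alpha, t)$ in the compact region $\{\tilde d^q \le 1,\ t \in (0,1)\}$. For endpoint queries (outside the data range), a one-sided H\"older estimate gives an easier inequality that uses $p \ge q$.

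The main obstacle is establishing the scale-invariant per-trial inequality uniformly over $(\tilde d, t) \in [0,1] \times (0,1)$. I expect the extremal case to occur at a specific $(\tilde d^*, t^*)$ where the inequality becomes tight, and the critical exponent $p = 2 + 1/(q-1)$ to be exactly where this uniform bound holds. The analysis should exploit the convexity of the function $\tilde F(\tilde\delta; \tilde d, t) := |t\tilde d + \tilde\delta|^q t^{1-q} + |(1-t)\tilde d - \tilde\delta|^q (1-t)^{1-q} - \tilde d^q$ at its minimum (where $\tilde F = \tilde F' = 0$, $\tilde F'' > 0$), together with the feasibility constraint $\tilde F + \tilde d^q \le 1$. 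Once this is verified, summing the per-trial inequality over all trials completes the proof.
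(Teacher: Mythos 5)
Your framework matches the paper's at the structural level: the lower bound is the same two-point adversary, and your potential $\Phi = 1 - \sum_i |y_{i+1}-y_i|^q (x_{i+1}-x_i)^{1-q}$ is exactly $1 - J_q[f_S]$, the $q$-action of the piecewise-linear interpolant through the revealed points, with the same amortization plan (bound each trial's $p$-error by the drop in $\Phi$, then telescope against $J_q[f] \le 1$, which is the paper's Lemma~\ref{fsmin}). But the entire difficulty of the theorem lives in the per-trial inequality, and you have only asserted it (``I expect the extremal case to occur at\ldots'', ``Once this is verified\ldots''). As you have set it up, you would need $|\tilde\delta|^p \le \tilde F(\tilde\delta;\tilde d,t)$ uniformly over a three-parameter family with $t$ ranging over the open interval $(0,1)$, and you give no argument for it, no identification of the extremal configuration, and no explanation of why $p = 2+\frac{1}{q-1}$ is the threshold exponent. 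You also introduce a nonstandard predictor (the midpoint of the feasibility interval) whose analysis you do not carry out, and the passage from the critical $p$ to all larger $p$ requires knowing that every error is at most $1$ (the paper's Lemma~\ref{linint1} and Corollary~\ref{linintp}), which you have not recorded.

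For comparison, the paper closes this gap with a dichotomy rather than a single scale-invariant extremal inequality, and it uses plain linear interpolation (LININT) as the predictor. First it proves (Lemmas~\ref{2qin}, \ref{2qout}, and \ref{2qboth}) that the increment of $J_q$ is always at least $(q-1)d^2$, where $d$ is the prediction error, via a convexity analysis of $a(1+x/a)^q + b(1-x/b)^q - (a+b)$ inside and outside $(-a,b)$. Separately (Lemmas~\ref{uv} and \ref{pqboth}) it shows that an interior query can never produce an error exceeding $(q-1)^{q-1}$ without forcing $J_q > 1$, i.e.\ without contradicting feasibility; and $(q-1)^{q-1}$ is precisely the value of $|d|$ at which $(q-1)d^2 = |d|^{2+\frac{1}{q-1}}$ --- that is where the critical exponent actually comes from. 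Queries outside the convex hull of previous inputs are handled by a one-sided estimate much as you suggest. So your plan points in the right direction and uses the right potential, but the step you defer is the real content of the theorem, and the mechanism you guess for it (a single tight interior extremal point certified by convexity) is not the one that makes the argument go through in the paper.
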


For $p = 1+\varepsilon$ with $\varepsilon \in (0, 1)$, the paper \cite{kl} proved that $\opt_p(\mathcal{F}_q) = O(\varepsilon^{-1})$ for all $q \ge 2$, which implies that $\opt_p(\mathcal{F}_{\infty}) = O(\varepsilon^{-1})$. However, these bounds are not sharp. In this paper, we determine $\opt_{1+\varepsilon}(\mathcal{F}_q)$ up to a constant factor for all $\varepsilon \in (0, 1)$ and $q \ge 2$. 

\begin{thm}\label{mainth}
For all $\varepsilon \in (0, 1)$, we have $\opt_{1+\varepsilon}(\mathcal{F}_{\infty}) = \Theta(\varepsilon^{-\frac{1}{2}})$ and $\opt_{1+\varepsilon}(\mathcal{F}_q) = \Theta(\varepsilon^{-\frac{1}{2}})$ for all $q \ge 2$, where the constants in the bound do not depend on $q$.
\end{thm}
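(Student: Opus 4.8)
The plan is to prove the theorem by sandwiching. Since $\mathcal F_\infty \subseteq \mathcal F_q \subseteq \mathcal F_2$ for every $q \ge 2$ by Jensen's inequality (as already noted in the introduction), we have $\opt_{1+\varepsilon}(\mathcal F_\infty) \le \opt_{1+\varepsilon}(\mathcal F_q) \le \opt_{1+\varepsilon}(\mathcal F_2)$, so it suffices to prove the upper bound $\opt_{1+\varepsilon}(\mathcal F_2) = O(\varepsilon^{-1/2})$ and the lower bound $\opt_{1+\varepsilon}(\mathcal F_\infty) = \Omega(\varepsilon^{-1/2})$, each with an absolute constant. The $q$-independence is then automatic, since the algorithm used for $\mathcal F_2$ is also a valid algorithm for every $\mathcal F_q \subseteq \mathcal F_2$, and the hard instance constructed in $\mathcal F_\infty$ lies in every $\mathcal F_q$.

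For the upper bound I would have the learner maintain the set $P$ of inputs on which $f$ is already known, and on a new input $s_t$ predict the linear interpolation of $f$ through the two neighbors of $s_t$ in $P$ (or, if $s_t$ lies outside the convex hull of $P$, the value at the nearest point of $P$). For a bounded gap $I=[\ell,r]$ of width $w$ define the \emph{wiggle budget} $\tilde B_I = \int_\ell^r |f'|^2 - (f(r)-f(\ell))^2/(r-\ell) \ge 0$; by Cauchy--Schwarz this is the $L^2$-energy available to $f$ for deviating from the secant line, the worst-case error on a point landing in $I$ at distances $a$ and $b=w-a$ from the endpoints equals $\sqrt{(ab/w)\tilde B_I}\le \tfrac12\sqrt{w\tilde B_I}$, and subdividing $I$ at that point decreases the total wiggle budget by exactly $(w/ab)\,e_t^2 \ge 4e_t^2/w$. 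I would then track the potential $\Phi = \sum_{I}\phi(w_I,\tilde B_I)$ over the bounded gaps, with $\phi(w,b)=C\,w^{(1+\varepsilon)/2}b^{(1+\varepsilon)/2}$, show that a trial landing in a bounded gap drops $\Phi$ by at least $e_t^{1+\varepsilon}$ while a trial in an end gap raises $\Phi$ by the $\phi$-value of the one new bounded gap it creates, and telescope. Since $\sum(\text{gap widths})\le 1$ and $\sum(\text{budgets})\le \int_0^1|f'|^2 \le 1$, a short H\"older estimate bounds $\Phi_0$ and the total of the end-gap contributions by $O(C)$. After normalizing by $w\tilde B_I$, the per-trial inequality reduces to checking $C\big[1-2^{-\varepsilon}(1-\tau)^{(1+\varepsilon)/2}\big] \ge 2^{-(1+\varepsilon)}\tau^{(1+\varepsilon)/2}$ for all $\tau\in[0,1]$; using $1-2^{-\varepsilon}(1-\tau)^{(1+\varepsilon)/2}\gtrsim \varepsilon+\tau$ and $\tau^{(1+\varepsilon)/2}\le\sqrt\tau$, the extremal $\tau\asymp\varepsilon$ shows $C=\Theta(\varepsilon^{-1/2})$ suffices, giving $\opt_{1+\varepsilon}(\mathcal F_2)\le\Phi_0=O(\varepsilon^{-1/2})$.

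For the lower bound I would run an adaptive binary-search adversary on $[0,1]$. After revealing $f(0)=f(1)=0$ on an uncounted trial and on a zero-error trial, maintain a family of dyadic subintervals together with their secant slopes; whenever an interval $[\ell,r]$ of width $w$ has secant slope $s$ with $|s|\le\tfrac12$, present its midpoint: the learner's optimal guess is the secant value, with worst-case feasible radius $\tfrac{w}{2}(1-|s|)\ge \tfrac w4$, so the adversary reveals $f$ there at distance exactly $\eta w$ from the learner's guess (on the far side), forcing error $\ge\eta w$ and creating two children with secant slopes $s\pm2\eta$, into which it recurses whenever the child slope stays in $[-\tfrac12,\tfrac12]$. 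Taking $\eta\asymp\sqrt\varepsilon$ keeps every secant slope below $1$, so the piecewise-linear $f$ produced lies in $\mathcal F_\infty$ (hence in every $\mathcal F_q$). The intervals actually subdivided at depth $k$ are in bijection with the length-$k$ sign sequences whose partial sums, scaled by $2\eta$, never leave $[-\tfrac1{4\eta},\tfrac1{4\eta}]\supseteq[-\sqrt{1/\varepsilon},\sqrt{1/\varepsilon}]$; for $k\le 1/\varepsilon$ a constant fraction of all $2^k$ such sequences stay within a band of width $\asymp\sqrt k$, so there are $\Omega(2^k)$ of them, each contributing error $\eta 2^{-k}$. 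Summing, $\mathscr L_{1+\varepsilon}\ge \Omega(\eta^{1+\varepsilon})\sum_{k\le 1/\varepsilon}2^{-k\varepsilon}=\Omega(\eta^{1+\varepsilon}/\varepsilon)=\Omega(\varepsilon^{-1/2})$.

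The two steps I expect to be delicate are: (i) verifying the per-trial potential inequality for \emph{all} split positions and budget allocations, not only the midpoint-with-even-split case — the off-center and uneven allocations turn out to require only $C=O(1)$ and so do not affect the order, but this must be checked because for a product-power potential the worst case is not obvious in advance; and (ii) the branching-random-walk survival estimate, namely that for every $k\le 1/\varepsilon$ a fixed positive fraction of length-$k$ sign sequences keep all partial sums below $\sqrt{1/\varepsilon}$, which follows from the reflection principle (convergence to $\sup_{[0,1]}|B|$) but needs a lower bound uniform in $k$. Dispatching the one-sided end-gap intervals in the upper bound is routine, but needs its own short argument since the secant-line potential does not directly apply there.
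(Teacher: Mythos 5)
Your proposal establishes the two one-sided bounds in the same sandwich structure as the paper (upper bound for $\mathcal F_2$, lower bound for $\mathcal F_\infty$, monotonicity in $q$), but the internal arguments for both halves are genuinely different from the paper's. For the upper bound, the paper applies H\"older's inequality directly to $\sum e_i^p = \sum (e_i^p/d_i^{p/2})\cdot d_i^{p/2}$ and invokes two lemmas imported from \cite{kl} (that $\sum e_i^2/d_i \le 1$ and $\sum d_i^x \le 1 + \tfrac{1}{2^x-2}$ for $x>1$), which gives $O(\varepsilon^{-1/2})$ in a few lines. Your product-power potential $\Phi=\sum_I C\,w_I^{(1+\varepsilon)/2}\tilde B_I^{(1+\varepsilon)/2}$ over bounded gaps is a self-contained amortized argument that reaches the same bound; the normalized per-trial inequality you reduce to at the midpoint ($C[1-2^{-\varepsilon}(1-\tau)^{(1+\varepsilon)/2}] \ge 2^{-(1+\varepsilon)}\tau^{(1+\varepsilon)/2}$) and its optimization at $\tau\asymp\varepsilon$ are correct, and the general case can be reduced (via the Lagrange-optimal budget allocation and the identity $\alpha(1+\beta)=\beta$ with $\alpha=\tfrac{1+\varepsilon}{2}$, $\beta=\tfrac{\alpha}{1-\alpha}$) to $C\bigl[1-(1-\tau)^\alpha(a^\beta+b^\beta)^{1-\alpha}\bigr]\ge(ab\tau)^\alpha$, which indeed only needs $C=O(1)$ when $a\ne b$ — so the gap you flag is real work but not a hole. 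Your approach pays for its self-containment by having to dispatch the end-gap trials separately (the errors there total $O(1)$ by AM--GM, since $\sum\delta_t\le 1$ and the disjoint end-gap $L^2$-energies also sum to $\le 1$), whereas the paper's two lemmas already treat all trials uniformly. For the lower bound, both adversaries build piecewise-linear functions with deviation $\asymp\sqrt\varepsilon$ at each scale, but they account for feasibility differently: the paper queries \emph{every} dyadic midpoint in stage $i$ with deviation $\tfrac{\sqrt\varepsilon(1-\varepsilon)^{i/2}}{2^{i+1}}$, caps the $J_2$-action below $\tfrac14$ via Lemma \ref{kl_jlem}, and then a pigeonhole on the high-slope intervals shows at least $2^{i-2}$ trials per stage incur the full deviation — no survival probability is needed. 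Your adversary bisects only intervals with secant slope $\le\tfrac12$ and counts the live intervals at depth $k$ via the number of $\pm 1$ walks staying in a band of radius $\asymp 1/\sqrt\varepsilon$; this works, but it requires the uniform-in-$k$ reflection-principle lower bound you flag as delicate and care that the $\eta\asymp\sqrt\varepsilon$ constant is small enough that $\tfrac12+2\eta\le 1$. The paper's version is thus somewhat more robust: it converts the "does the adversary survive?" question into a deterministic energy budget that is easy to total up.
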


The proof of Theorem \ref{mainth} splits into an upper bound and a lower bound. For the upper bound, we use H\"older's inequality combined with results from \cite{kl}. For the lower bound, we modify a construction used in \cite{long}, which obtained bounds on a finite variant of $\opt_1(\mathcal{F}_q)$ that depends on the number of trials $m$. 

The results of \cite{kl} and \cite{long} left open the problem of determining $\opt_p(F_q)$ for $q \in (1, 2)$. It was not even known up to a constant factor. We make progress on this problem by determining $\opt_2(\mathcal F_{1+\varepsilon})$ up to a constant factor for $\varepsilon \in (0, 1)$. Figure \ref{old_bounds} shows the bounds and exact values known for $p, q > 1$ prior to the results in our paper, while Figure \ref{new_bound} shows the bounds and exact values known for $p, q > 1$ including the results in our paper.

\begin{thm}\label{qin12bound}
For $\varepsilon \in (0,1)$, we have $\opt_2(\mathcal F_{1+\varepsilon})=\Theta(\varepsilon^{-1})$.
\end{thm}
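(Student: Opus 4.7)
The theorem has two parts: the upper bound $\opt_2(\mathcal F_{1+\varepsilon}) = O(\varepsilon^{-1})$ and the matching lower bound.

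For the upper bound, I would analyze the midpoint algorithm that at each trial $t$ predicts $\hat y_t$ to be the midpoint of the interval of $y$-values consistent with past observations under $f \in \mathcal F_{1+\varepsilon}$. Sorting the past queries as $s_{(0)} < s_{(1)} < \cdots < s_{(t-1)}$ with corresponding values $y_{(i)}$, define the potential
\[
\Phi_t = \sum_i \frac{|y_{(i+1)} - y_{(i)}|^{1+\varepsilon}}{(s_{(i+1)} - s_{(i)})^\varepsilon},
\]
which equals the minimum of $\int_0^1 |f'|^{1+\varepsilon}\,dx$ over absolutely continuous interpolants and hence satisfies $\Phi_t \le 1$. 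The crux is to show that each trial makes $\Phi$ grow by at least $c\,\varepsilon\,(y_t - \hat y_t)^2$ for a universal $c > 0$. This follows from a second-order Taylor expansion of the interpolation cost $y \mapsto |y-y_{(i)}|^{1+\varepsilon}/(s_t - s_{(i)})^\varepsilon + |y_{(i+1)} - y|^{1+\varepsilon}/(s_{(i+1)}-s_t)^\varepsilon$ around its Hölder-equality minimizer, combined with the bound $|y_{(i+1)} - y_{(i)}| \le (s_{(i+1)} - s_{(i)})^{\varepsilon/(1+\varepsilon)}$, which ensures that the geometry-dependent ratio $|y_{(i+1)}-y_{(i)}|^{\varepsilon-1}/(s_{(i+1)}-s_{(i)})^\varepsilon$ is bounded below by a universal constant on $[0,1]$. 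Summing over trials yields $\sum_t (y_t - \hat y_t)^2 \le c^{-1}\varepsilon^{-1}$.

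For the lower bound, the plan is an adaptive adversary that pins down a near-linear base function and then forces many trials with small per-trial error. At the free trial $t = 0$, set $(s_0, y_0) = (0, 0)$. At trial 1, query $s_1 = 1$ and reveal $f(1)$ at a value which both forces $\Omega(1)$ squared error and uses a constant fraction of the $L^{1+\varepsilon}$ budget. In each subsequent trial, the adversary queries a near-midpoint of an existing interval and reveals a small perturbation $\eta_t$ of the Hölder-equality interpolant value; by the same second-order expansion, each such perturbation uses only $O(\varepsilon\,\eta_t^2)$ of the remaining budget while contributing $\eta_t^2$ to the total squared error. Playing $\Theta(\varepsilon^{-1})$ such trials with appropriately small $\eta_t$'s exhausts the remaining budget and accumulates $\Omega(\varepsilon^{-1})$ total squared error, and the resulting collection of revealed values can be realized by an explicit piecewise-linear $f \in \mathcal F_{1+\varepsilon}$.

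The main obstacle in both directions is a uniform version of the Taylor identity $(a+\eta)^{1+\varepsilon}+(a-\eta)^{1+\varepsilon}-2a^{1+\varepsilon} = \Theta(\varepsilon\,a^{\varepsilon-1}\,\eta^2)$, valid with constants independent of $\varepsilon \in (0,1)$ across the full relevant range of $a$ and $\eta$. Tuning the adversary's perturbation sizes and query geometry so the perturbations remain in the quadratic regime of this expansion at every trial while still summing to $\Omega(\varepsilon^{-1})$ requires careful balancing between the number of trials, the widths of the sub-intervals, and the per-trial budget consumption.
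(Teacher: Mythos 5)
Your upper bound is essentially the paper's argument: the potential you define is exactly $J_q[f_S]$ for $q=1+\varepsilon$, the bound $\Phi_t\le 1$ is the paper's Lemma \ref{fsmin}, and the per-trial growth $\ge c\,\varepsilon\,(\text{error})^2$ is Lemma \ref{2qboth}, proved via a global version of your second-order inequality (Lemma \ref{2qin} for $y_t$ between the neighboring slopes' range and Lemma \ref{2qout} outside it — a local Taylor expansion alone does not suffice, since the adversary may reveal $y_t$ far from the interpolant value). One slip: the growth inequality is naturally tied to $(y_t-f_S(x_t))^2$, where $f_S(x_t)$ is the \emph{action-minimizing} (linear interpolation) prediction, because that is where the added cost has vanishing derivative. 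For your midpoint-of-consistent-interval predictor the claimed growth $\ge c\varepsilon(y_t-\hat y_t)^2$ does not follow: the adversary can reveal $y_t=f_S(x_t)$, making the potential increment zero while the midpoint error is nonzero (the consistent interval is not centered at $f_S(x_t)$). Use the interpolation predictor (the paper's LININT) and this issue disappears.

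The genuine gap is in the lower bound. A perturbative adversary that stays strictly in the quadratic regime cannot reach $\Omega(\varepsilon^{-1})$: the cheap-cost estimate $O(\varepsilon\,a^{\varepsilon-1}\eta^2)$ requires $\eta_t\lesssim \Delta_t/2$, where $\Delta_t$ is the value-gap of the segment being split, so each trial contributes at most $\Delta_t^2/4$ squared error. If you spread the trials over a dyadic tree above a base function of slope $s$ on an interval of width $\ell$, then level $j$ has $2^{j-1}$ trials with $\eta_j=O(s\ell 2^{-j})$, so the total squared error is $\sum_j 2^{j-1}O(s^2\ell^2 4^{-j})=O(s^2\ell^2)=O(\ell^{2\varepsilon/(1+\varepsilon)})=O(1)$ by the budget constraint $s^{1+\varepsilon}\ell\le 1$ — independent of $\varepsilon$. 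To accumulate $\Omega(\varepsilon^{-1})$ you must keep $\Delta_t=\Theta(1)$ and take $\eta_t=\Theta(\Delta_t)$ (the edge of the quadratic regime, where the cost is still $\Theta(\varepsilon)\Delta_t^{1+\varepsilon}w_t^{-\varepsilon}$ only because $(2a)^{1+\varepsilon}-2a^{1+\varepsilon}=2(2^\varepsilon-1)a^{1+\varepsilon}$) for every one of the $\Theta(\varepsilon^{-1})$ trials; since splitting with $\eta=\Delta/2$ produces children with gaps $0$ and $\Delta$, this forces all trials to be nested inside a single shrinking interval whose value-gap never decreases. That is precisely the paper's construction (Theorem \ref{pqlow}): reveal $f(0)=0$, $f(1)=b$ with $b=e^{-1/2}$, then repeatedly bisect between the rightmost revealed $0$ and the leftmost revealed $b$, revealing $0$ or $b$ to force error $\ge b/2$; after $k=\lfloor -\tfrac{q\log_2 b}{q-1}\rfloor=\Theta(\varepsilon^{-1})$ trials the interpolant has a single sloped segment with $J_q=2^{k(q-1)}b^q\le 1$. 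So the "careful balancing" you defer is not a tuning problem within your scheme — the scheme must degenerate into the binary search for the bound to close.
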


The paper \cite{kl} also discussed the problem of online learning for smooth functions of multiple variables. Previous research on learning multi-variable functions \cite{barron, hardle, haussler} has focused on expected loss rather than worst-case loss, using models where the inputs $x_i$ are determined by a probability distribution. 

\begin{figure}[t]
        \centering
        \begin{tikzpicture}
\begin{axis}[
    tick label style={font=\scriptsize},
    axis y line*=left,
    axis x line*=bottom,
    xmin=1,
    xmax=5,
    ymin=1,
    ymax=5,
    xtick={1,2},
    xtick style={draw=none},
    ytick style={draw=none},
    ytick={1,2},
	xlabel=$p$,
	ylabel=$q$
]
\addplot [black] coordinates {
	(2,1)
	(2,5)
};
\addplot [black] coordinates {
	(1,2)
	(5,2)
};
\addplot [black] coordinates {
	(1,1)
	(1,5)
};
\addplot [black] coordinates {
	(1,1)
	(5,1)
};
\addplot [black,domain=3:5] {
    1+1/(x-2)
};
\node at (axis cs:3.5,3.5) { $1$ };
\node[scale=0.8] at (axis cs:1.5,3.5) { $\Theta\left(\frac{1}{\sqrt{p-1}}\right)$ };
\node at (axis cs:4.5,1.7) { $1$ };
\node[scale=0.8] at (axis cs:2.7,1.5) { $O\left(\frac{1}{q-1}\right)$ };
\end{axis}
\end{tikzpicture}
        \caption{Exact values and bounds on $\opt_p(\mathcal{F}_q)$ for $p, q > 1$ including the results in this paper}
        \label{new_bound}
    \end{figure}
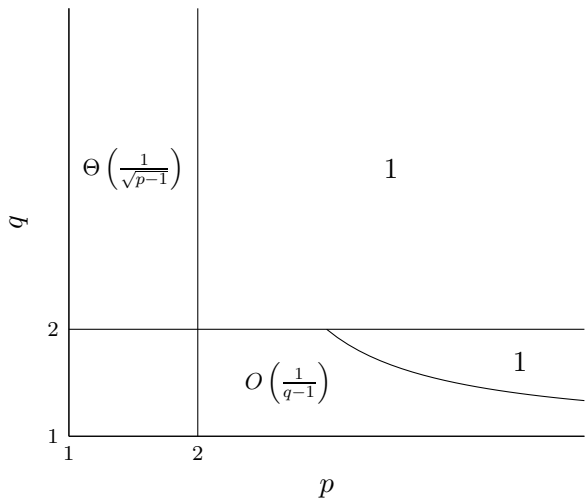

We introduce a natural extension of the single-variable setup from \cite{kl} to multi-variable functions. Specifically, for $q \ge 1$ and $d \in \mathbb Z^+$, let $\mathcal F_{q,d}$ be the class of functions $f: [0,1]^d \to \mathbb R$ such that for any $(d-1)$-tuple $(x_1,\ldots,x_{d-1}) \in [0,1]^{d-1}$ and integer $i$ with $1 \le i \le d$, the function $g: [0,1] \to \mathbb R$ given by $g(x)=f(\mathbf v_{i,x})$ is in $\mathcal F_q$, where $\mathbf v_{i,x} \in [0,1]^d$ is the vector formed when $x$ is inserted at the $i^{\text{th}}$ position of $(x_1,\ldots,x_{d-1})$.

One of the most fundamental questions about $\opt_p(\mathcal F_{q,d})$ is to determine when it is finite and when it is infinite. We answer this question almost completely when $q = \infty$.

\begin{thm}\label{mvinfty}
For any positive integer $d$, $\opt_p(\mathcal F_{\infty,d})$ is finite when $p > d$ and infinite when $0 < p < d$.
\end{thm}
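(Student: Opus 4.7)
My plan is to prove the two directions separately. Both rest on a common observation: every $f \in \mathcal{F}_{\infty, d}$ is $1$-Lipschitz with respect to the $\ell^1$ metric on $[0,1]^d$, since one can interpolate between any two inputs one coordinate at a time and use the one-variable Lipschitz bound, giving $|f(x) - f(y)| \le \sum_{i=1}^d |x_i - y_i| = \|x - y\|_1$. This reduces the multivariate problem to learning a $1$-Lipschitz function in the $\ell^1$ metric on the cube, so the dimensional dependence of $\opt_p$ will come from $\ell^1$-packing estimates for $[0,1]^d$.

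For the finiteness claim ($p > d$), I would analyze the closest-fit algorithm that at trial $t \ge 1$ outputs the midpoint $\hat{y}_t = (L_t + U_t)/2$, where $L_t = \max_{i < t}(f(s_i) - \|s_i - s_t\|_1)$ and $U_t = \min_{i < t}(f(s_i) + \|s_i - s_t\|_1)$. Writing $\eta_t = (U_t - L_t)/2$, the loss on trial $t$ is at most $\eta_t$, and each constraint $|f(s_t) - f(s_i)| \le \|s_i - s_t\|_1$ bounds $U_t - L_t$ by $2\|s_i - s_t\|_1$, giving $\eta_t \le \|s_i - s_t\|_1$ for every $i < t$. Consequently, the points $\{s_t : \eta_t \ge \eta\}$ are pairwise $\ell^1$-separated by at least $\eta$. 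A volume packing argument using disjoint $\ell^1$-balls of radius $\eta/2$ (each of volume $\eta^d/d!$) inside a slightly enlarged cube yields $|\{t : \eta_t \ge \eta\}| \le C_d \max(1, \eta^{-d})$ for a constant $C_d$ depending only on $d$. Using the layer-cake identity $\sum_t \eta_t^p = \int_0^\infty p u^{p-1} |\{t : \eta_t \ge u\}|\, du$ together with $\eta_t \le d$ (the $\ell^1$-diameter of the cube), this integral is finite precisely when $p > d$, giving a bound on $\opt_p(\mathcal{F}_{\infty, d})$ independent of the number of trials.

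For the infiniteness claim ($0 < p < d$), I would construct an adaptive adversary forcing arbitrarily large loss. For each $n \ge 1$ place $n^d$ centers $c_i$ on the grid of spacing $1/n$ inside $[0,1]^d$, and define bumps $\psi_i(x) = \epsilon_i \cdot \max(0, (4n)^{-1} - \|x - c_i\|_\infty)$ for signs $\epsilon_i \in \{-1, +1\}$ to be chosen online. Their supports are pairwise disjoint cubes of side $1/(2n)$, and for any choice of signs the function $f = \sum_i \psi_i$ lies in $\mathcal{F}_{\infty, d}$, because each one-variable slice is piecewise linear with slope in $\{-1, 0, +1\}$. The adversary queries each $c_i$, and after seeing the learner's prediction $\hat{y}_t$ it selects $\epsilon_i = -\mathrm{sign}(\hat{y}_t)$, forcing $|f(c_i) - \hat{y}_t| \ge (4n)^{-1}$. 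The total $p$-loss is then at least $n^d \cdot (4n)^{-p} = 4^{-p} n^{d - p}$, which diverges as $n \to \infty$ whenever $p < d$.

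The most delicate step is the packing estimate in the upper bound: the tail bound on $|\{t:\eta_t\geq \eta\}|$ must be tight enough that the only possible divergence in the resulting integral is the factor $\int_0^1 u^{p-d-1}\, du$, matching the threshold $p = d$ exactly. In the lower bound, the main subtlety is verifying that the adaptive sum of bumps remains $1$-Lipschitz in each coordinate; this works because disjoint supports guarantee that each one-dimensional slice meets at most one active bump at a time, where the slope is $0$ or $\pm 1$.
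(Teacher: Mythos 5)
Your proposal is correct, and its overall architecture matches the paper's: the same $\ell^1$-Lipschitz reduction (the paper's Lemma \ref{tri}, proved exactly by your coordinate-by-coordinate interpolation), and for $0<p<d$ essentially the same adversary --- a grid of $n^d$ cell centers with adaptively signed tent/pyramid bumps of height $\Theta(1/n)$ and disjoint supports, giving loss $\Omega(n^{d-p})$; your check that every axis-parallel slice is piecewise linear with slopes in $\{-1,0,1\}$ is the same verification the paper carries out. Where you genuinely diverge is the upper bound for $p>d$. The paper runs a nearest-neighbor predictor and counts trials via a hierarchy of dyadic subcubes: letting $k_i$ be the finest dyadic scale at which $\mathbf x_i$ shares a cell with an earlier point, it bounds the per-trial error by $d\,2^{-(k_i-1)}$ and the number of trials with $k_i\le K$ by $2^{Kd}-1$, then sums by parts to get the explicit constant $\frac{(2^d-1)d^p}{1-2^{d}/2^{p}}$. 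You instead use the midpoint-of-feasible-interval predictor, observe that $\eta_t\le\min_{i<t}\|s_i-s_t\|_1$ so the points with $\eta_t\ge\eta$ are pairwise $\eta$-separated, and apply a volume packing bound $|\{t:\eta_t\ge\eta\}|\le C_d\eta^{-d}$ followed by the layer-cake integral $\int_0^d p\,u^{p-1}\cdot C_d u^{-d}\,du$, which converges exactly for $p>d$. Both arguments are sound and hit the same threshold; the paper's yields a clean closed-form constant and reuses the same dyadic bookkeeping to get the $\Theta(m^{1-p/d})$ bounds for bounded trial horizons, while yours is slightly more conceptual and makes the reason for the $p=d$ cutoff (a $\int u^{p-d-1}\,du$ singularity) transparent. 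One cosmetic point: in the lower bound, a one-dimensional slice can pass through several bump supports in succession (not just one), but since the supports are disjoint the slice lies in at most one support at each point, which is all your slope argument needs.
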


As a corollary, it immediately follows for $0 < p < d$ that $\opt_p(\mathcal F_{q,d}) = \infty$ for all $q \ge 1$. Moreover, it is easy to see that $\opt_p(\mathcal F_{1,d}) = \infty$ for all positive integers $d$ and $p$.

The papers \cite{kl} and \cite{long} also investigated worst-case mistake bounds for online learning of smooth functions when the number of trials is bounded. In particular, using the same notation as in the first paragraph of this section, define \[ \mathscr L_p(A,f,\sigma, m)=\sum_{t=1}^m|\hat y_t-f(s_t)|^p. \] Moreover, define \[ \mathscr L_p(A,\mathcal F,m)=\displaystyle\sup_{f \in \mathcal F,\sigma \in S^{m+1}}\mathscr L_p(A,f,\sigma,m) \] and $\opt_p(\mathcal F,m)=\displaystyle\inf_A \mathscr L_p(A,\mathcal F,m)$. 

The paper \cite{kl} proved that $\opt_1(\mathcal F_q,m) = O(\log(m))$ for all $q \ge 2$ and $\opt_1(\mathcal F_2,m) = \Omega(\sqrt{\log(m)})$. The paper \cite{long} sharpened these bounds by proving that $\opt_1(\mathcal F_q,m) = \Theta(\sqrt{\log(m)})$ for all $q \ge 2$ and $\opt_1(\mathcal F_2,m) = \frac{\sqrt{\log_2(m)}}{2} \pm O(1)$. We obtain sharp bounds for online learning of smooth functions with a bounded number of trials when $0 < p < d$. In particular, these sharp bounds are also new in the single-variable case.

\begin{thm}\label{bounded_m_trials}
For any positive integer $d$ and real number $p$ with $0 < p<d$, we have $\opt_p(\mathcal F_{\infty,d}, m)=\Theta(m^{1-\frac{p}{d}})$, where the constants in the bounds depend on $p$ and $d$.
\end{thm}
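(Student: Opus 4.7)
The plan is to prove matching upper and lower bounds of order $m^{1-p/d}$, exploiting the fact that $\mathcal F_{\infty,d}$ is exactly the class of functions on $[0,1]^d$ that are $1$-Lipschitz with respect to the $\ell_1$-metric: concatenating axis-aligned single-variable Lipschitz estimates gives $|f(x)-f(y)| \le \|x-y\|_1$, and the converse is immediate by restricting to a coordinate.

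For the upper bound, I would let $A$ be the midpoint-of-feasible-interval strategy. After seeing $(s_i,f(s_i))$ for $i=0,\dots,t-1$, the set of values of $f(s_t)$ consistent with $\mathcal F_{\infty,d}$ is the interval $[L_t,U_t]$ where $L_t=\max_{i<t}(f(s_i)-\|s_t-s_i\|_1)$ and $U_t=\min_{i<t}(f(s_i)+\|s_t-s_i\|_1)$; taking $\hat y_t=(L_t+U_t)/2$ yields error at most $(U_t-L_t)/2 \le r_t$, where $r_t:=\min_{i<t}\|s_t-s_i\|_1$. So it suffices to prove the purely combinatorial statement
\[ \sum_{t=1}^m r_t^p \le C_{p,d}\, m^{1-p/d} \]
for every sequence $(s_0,\dots,s_m)$ in $[0,1]^d$.

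This is the main technical step, and I would handle it by a packing argument. Sort the $r_t$ in decreasing order, and let $r_{(j)}$ denote the $j$-th largest. If $r_{(j)}=r^*$, then the $j$ indices $t$ with $r_t\ge r^*$ produce points that, together with $s_0$, form a pairwise $r^*$-separated set of size $j+1$ in $[0,1]^d$ under $\ell_1$ (for any two such indices $t_a<t_b$ one has $\|s_{t_a}-s_{t_b}\|_1 \ge r_{t_b}\ge r^*$, and $s_0$ is a previous point at time $t_b$). Comparing the volume of disjoint $\ell_1$-balls of radius $r^*/2$ placed at these points against the volume of a slight fattening of $[0,1]^d$ gives $j+1\le C_d\,(r^*)^{-d}$, hence $r_{(j)}\le C_d\,j^{-1/d}$. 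Plugging this in and using $p/d<1$ yields $\sum_t r_t^p \le C_d^p\sum_{j=1}^m j^{-p/d} = O_{p,d}(m^{1-p/d})$.

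For the lower bound, I would set $k=\lceil(m+1)^{1/d}\rceil$, choose $\sigma$ to be an enumeration of $m+1$ distinct grid points from $\{0,1/k,\dots,(k-1)/k\}^d$, and run the adaptive adversary that, after the learner commits to $\hat y_t$, reveals $f(s_t)\in\{-1/(2k),+1/(2k)\}$ to maximize $|\hat y_t-f(s_t)|$. This forces $|\hat y_t-f(s_t)|\ge 1/(2k)$ at every $t\ge1$, hence total $p$-error at least $m\,(2k)^{-p}=\Omega(m^{1-p/d})$. The chosen values extend to a function in $\mathcal F_{\infty,d}$ because any two grid points are at $\ell_1$-distance at least $1/k$ while the assigned values differ by at most $1/k$, so the McShane extension $\tilde f(x)=\min_j(f(s_j)+\|x-s_j\|_1)$ is $\ell_1$-Lipschitz on $[0,1]^d$, hence coordinate-wise $1$-Lipschitz and absolutely continuous in each variable.

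The main obstacle I expect is the packing estimate $r_{(j)}\le C_d\, j^{-1/d}$ driving the upper bound; everything else becomes routine once one notes that coordinate-wise $1$-Lipschitz and $\ell_1$-Lipschitz coincide on $[0,1]^d$, so that both the midpoint feasible interval and the adversary's McShane extension can be expressed cleanly in $\ell_1$ terms.
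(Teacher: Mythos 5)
Your proposal is correct. Both halves rest on the same two ideas as the paper: a $k^d$-grid adversary forcing error $\Theta(1/k)$ per trial for the lower bound, and a packing argument in $[0,1]^d$ for the upper bound. The differences are in packaging. For the lower bound, the paper (Theorem \ref{p_inf_d}, truncated at $n=\lfloor m^{1/d}\rfloor$) exhibits an explicit piecewise-linear sawtooth realizing the revealed values, whereas you invoke the McShane extension $\min_j(f(s_j)+\|x-s_j\|_1)$; both are valid, and your verification that grid separation $1/k$ dominates the value gap $1/k$ is exactly the consistency check the paper's explicit formula encodes. For the upper bound, the paper reuses the nearest-neighbor algorithm of Theorem \ref{dpupper} together with its dyadic-cube counting inequality $\sum_{k\le K}c_k\le 2^{Kd}-1$, truncating at $K=\lceil\log_2(m+1)/d\rceil$; you instead use the midpoint-of-feasible-interval predictor and prove the standalone combinatorial bound $r_{(j)}\le C_d\,j^{-1/d}$ on the sorted nearest-previous-point distances, then sum $j^{-p/d}$. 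Your separation claim is right ($\|s_{t_a}-s_{t_b}\|_1\ge r_{t_b}\ge r^*$ for $t_a<t_b$), though when carrying out the volume comparison note that $r^*$ can be as large as $d$ in $\ell_1$, so the fattened cube has volume $(1+r^*)^d\le(1+d)^d$ rather than $O(2^d)$ --- this only affects the constant $C_d$. The two upper-bound arguments are morally identical (a continuous packing bound versus its discretization over scales $2^{-k}$); yours is self-contained and gives a slightly cleaner constant via the sharper per-trial error $r_t$ in place of the paper's $d/2^{k_t-1}$, while the paper's buys economy by recycling the machinery already built for the infinite-horizon Theorem \ref{dpupper}.
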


In Section \ref{2}, we focus on the single-variable setup. We prove Theorem \ref{qin12bound} in Subsections \ref{pqlower} and \ref{2qupper}. Subsection \ref{pqlower} establishes the lower bound, while Subsection \ref{2qupper} establishes the upper bound along with several useful lemmas. Subsection \ref{pqupper} focuses on proving Theorem \ref{more1bound}. In Subsection \ref{pql2}, we prove Theorem \ref{mainth}. In Section \ref{3}, we focus on the multi-variable setup, establishing various bounds on $\opt_p(\mathcal F_{q,d})$. Finally, in Section \ref{s:open}, we discuss open problems.

\section{Results in the single-variable setup for $q \in (1,2)$}\label{2}

First, we adopt some notation from \cite{kl}. For $f: [0,1] \to \mathbb R$, define the \textit{$q$-action} of $f$, denoted by $J_q[f]$, as \[ J_q[f]=\int_0^1 |f'(x)|^q\text{d}x, \] so that $\mathcal F_q$ is exactly the set of absolutely continuous $f: [0,1] \to \mathbb R$ such that $J_q[f] \le 1$.

Also, for a nonempty set $S=\{(u_i,v_i): 1 \le i \le m\}$ of points in $[0,1] \times \mathbb R$ such that $u_1<\ldots<u_m$, define \[ f_S(x)=\begin{cases} v_1 & x \le u_1 \\ v_i+\frac{(x-u_i)(v_{i+1}-v_i)}{u_{i+1}-u_i} & x \in (u_i,u_{i+1}] \\ v_m & x>u_m \end{cases} \] and set $f_{\emptyset}(x) \equiv 0$.

Finally, define the learning algorithm LININT as follows: on trial $0$, LININT guesses $\hat y_0=0$, and on trial $i>0$, with the points in $S=\{(x_0,f(x_0)),\ldots,(x_{i-1},f(x_{i-1}))\}$ having been revealed and given $x_i$, LININT guesses $\hat y_i=f_S(x_i)$.

\subsection{Lower bounds for $\opt_p(\mathcal F_q)$}\label{pqlower}

First, for all $p,q>1$ we have an obvious lower bound for $\opt_p(\mathcal F_q)$.

\begin{prop}\label{pq1}
For $p,q>1$, we have $\opt_p(\mathcal F_q) \ge 1$.
\end{prop}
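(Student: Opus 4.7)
The plan is a minimal two-function adversary argument. Fix any algorithm $A$. I would use the length-two query sequence $\sigma = (x_0, x_1) = (0, 1)$ together with the two candidate hidden functions $f_+(x) = x$ and $f_-(x) = -x$ on $[0,1]$. Both are absolutely continuous with $J_q[f_\pm] = \int_0^1 1^q\,dx = 1$, so they lie in $\mathcal F_q$, and crucially they agree at $x_0 = 0$.

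On the ``free'' trial $0$ the learner observes $f_\pm(0) = 0$, which is the same regardless of which of the two candidates is the truth, so the learner's trial-$1$ prediction $\hat y_1$ given $x_1 = 1$ is determined by $A$ alone. The adversary then commits to whichever candidate maximizes the error on trial $1$. Since $f_+(1) = 1$ and $f_-(1) = -1$, the triangle inequality gives
\[
|\hat y_1 - 1| + |\hat y_1 + 1| \ge |1 - (-1)| = 2,
\]
so at least one of the two summands is $\ge 1$, forcing a total $p$-error of at least $1^p = 1$ against the chosen candidate. Taking the supremum over $f \in \mathcal F_q$ and $\sigma$ and then the infimum over $A$ yields the proposition.

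There is no serious obstacle here: the witness functions are the simplest possible candidates, and the only substantive observation is the one-line inequality above, which is immediate from the triangle inequality applied to the points $\hat y_1$, $1$, and $-1$ on the real line. In particular, the argument does not use $p > 1$ or $q > 1$ in any essential way; it would go through for any $p > 0$ and any $q \ge 1$, and the bound $1$ is simply the $p$th power of the slack between the two truth candidates.
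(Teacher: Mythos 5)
Your proof is correct and is essentially identical to the paper's own argument: the paper also queries $x_0=0$, $x_1=1$, reveals $f(x_0)=0$, and then chooses between $f(x)=x$ and $f(x)=-x$ (both in $\mathcal F_q$) to force $|\hat y_1-f(x_1)|\ge 1$. Your added remark that the bound needs neither $p>1$ nor $q>1$ is also accurate.
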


\iffalse
\begin{proof}
Consider the adversary strategy where the adversary picks $x_0=0$ and reveals $f(x_0)=0$, then picks $x_1=1$ and reveals $f(x_1)=\pm 1$ such that the error $|\hat y_1-f(x_1)|$ is at least 1. This is consistent with one of the functions $\{f(x)=x,f(x)=-x\} \subset \mathcal F_q$ and guarantees $\sum_{i \ge 1} |\hat y_i-f(x_i)|^p \ge 1$.
\end{proof}
\else

The paper \cite{kl} proved that equality holds when $p,q \ge 2$. As we will see, equality also holds when $q \in (1,2)$ for sufficiently large values of $p$.

For $q \in (1,2)$ and $p>1$, we also prove a lower bound for $\opt_2(\mathcal F_q)$, using an adversary strategy similar to that in Theorem 8 of \cite{kl}.

\begin{thm}\label{pqlow}
For $q \in (1,2)$, we have $\opt_p(\mathcal F_q) \ge \frac{q}{(p2^pe \ln 2)(q-1)}$.
\end{thm}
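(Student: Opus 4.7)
The plan is to construct an adversary strategy in the spirit of Theorem~8 of~\cite{kl} that forces many small errors rather than a single large one, and to squeeze the $1/(q-1)$ factor out of the $J_q$ budget analysis. Since the target bound grows like $1/(q-1)$ as $q \to 1^+$, the adversary should force roughly $N = \Theta(1/(q-1))$ errors, each of roughly comparable magnitude.

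\textbf{Step 1 (Adversary setup).} I would have the adversary query a sequence of midpoints $x_1,\ldots,x_N$ of intervals whose endpoints already carry revealed values. On trial $k$, given the learner's guess $\hat y_k$, the adversary reveals $f(x_k) = \hat y_k + \sigma_k \delta_k$ with $\sigma_k \in \{-1,+1\}$ chosen to be the sign that keeps the revealed point inside the feasible range. Here $\delta_k$ is chosen as large as possible subject to the piecewise linear interpolant $f_S$ still satisfying $J_q[f_S] \le 1$. Since the two feasible values $\hat y_k \pm \delta_k$ differ by $2\delta_k$, at least one is at distance $\delta_k$ from $\hat y_k$, guaranteeing an error of at least $\delta_k$.

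\textbf{Step 2 (Budget accounting).} For a query at the midpoint of a fresh interval of length $\ell_k$ whose endpoints agree (zero base slope), replacing the constant piece by a wiggle of height $\delta_k$ contributes $2^q \delta_k^q \ell_k^{1-q}$ to $J_q[f_S]$. I would arrange the schedule so that each phase operates on a fresh interval (maintained either by packing $N$ disjoint intervals or by resetting to a neutral configuration), making the budget constraint simply $\sum_{k=1}^N 2^q \delta_k^q \ell_k^{1-q} \le 1$. Equalising the contributions and writing $\delta_k = \delta$, this forces $\delta^q \le 2^{-q}\bigl(\sum_k \ell_k^{1-q}\bigr)^{-1}$.

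\textbf{Step 3 (Optimization).} Choosing $\ell_k = 2^{-k}$ (or a comparable geometric schedule), the sum $\sum_{k=1}^N \ell_k^{1-q} = \sum_{k=1}^N 2^{k(q-1)}$ is a geometric series whose value, via $2^{N(q-1)} - 1 \approx N(q-1)\ln 2$ for $N(q-1)$ bounded, is approximately $\frac{2^{N(q-1)}-1}{2^{q-1}-1}$. Taking $N = \lceil 1/(q-1)\rceil$, the numerator contributes the $e$ (from a limit of the form $(1+1/N)^N$) and the denominator supplies the $(q-1)\ln 2$. This produces a lower bound $\delta \ge \bigl(\tfrac{c}{2^q\,e}\bigr)^{1/q}$ for some absolute constant, and the total $p$-error is at least $N\delta^p$, which after collecting the factors of $q$, $p$, $2^p$, $e$, and $\ln 2$ gives the claimed $\frac{q}{(p2^pe\ln 2)(q-1)}$.

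\textbf{Main obstacle.} The main difficulty is making the ``fresh interval'' assumption rigorously work for every phase: when the queries are nested, the background slope entering phase $k$ is nonzero, and the incremental $J_q$ cost of a wiggle of size $\delta$ on a background slope $s$ is only quadratic in $\delta$ (by Taylor expansion $(s+a)^q+(s-a)^q - 2s^q \approx q(q-1)s^{q-2}a^2$), with a coefficient that can blow up as $s \to 0$. Handling this either by packing disjoint ``reset'' intervals, or by tracking the slope evolution along the nested tree, is the delicate step and is exactly what the Theorem~8 argument of~\cite{kl} was engineered to carry out.
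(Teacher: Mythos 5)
Your Step 3 does not follow from your Step 2, and the underlying strategy cannot be repaired to give the claimed bound. From your own budget constraint $\sum_{k=1}^N 2^q\delta^q\ell_k^{1-q}\le 1$ with $\ell_k\le 1$ and $1-q<0$, each term is at least $2^q\delta^q$, so $\delta\le \frac12 N^{-1/q}$ no matter how you schedule the $\ell_k$; with $N=\lceil 1/(q-1)\rceil$ this gives $\delta=O((q-1)^{1/q})\to 0$, not the constant lower bound $\delta\ge(c/(2^qe))^{1/q}$ you assert. The total error is then $N\delta^p\le 2^{-p}N^{1-p/q}$, which for $p>1$ and $q\to 1^+$ tends to $0$ rather than growing like $1/(q-1)$. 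The structural reason is that you charge a separate, non-reusable $J_q$ cost of order $\delta^q$ (at least) for each error, so forcing $\Theta(1/(q-1))$ errors of constant size would need total action $\Theta(1/(q-1))\gg 1$. Your ``main obstacle'' (background slopes on nested intervals) is therefore not the real difficulty; even the idealized fresh-interval accounting fails.

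The paper's adversary avoids this by never adding independent wiggles: it reveals only the two values $0$ and $b$ and performs a binary search for the location of a single jump. All revealed $0$'s lie to the left of all revealed $b$'s, so after $k$ halving steps the interpolant $f_S$ is flat except for one ramp of height $b$ over an interval of length $2^{-k}$, giving $J_q[f_S]=2^{k(q-1)}b^q$. The budget $\le 1$ is thus paid once for the final ramp, not per trial, and permits $k=\lfloor -q\log_2 b/(q-1)\rfloor=\Theta\bigl(\frac{1}{q-1}\bigr)$ trials, each forcing an error of at least $b/2$ (since one of $\{0,b\}$ is at distance $\ge b/2$ from any guess). The total $p$-error is then at least $-\frac{b^p q\log_2 b}{2^p(q-1)}$, and optimizing $b=e^{-1/p}$ yields the stated constant. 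To fix your argument you would need to replace the ``one wiggle per error'' design with such a location-refinement scheme in which many errors are extracted from a single budgeted feature of the function.
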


\begin{proof}
Fix $q \in (1,2)$ and an algorithm $A$ for learning $\mathcal F_q$. Consider the following family of adversary strategies, depending on a parameter $b \in (0,1)$. The adversary picks $x_0=0$ and reveals $f(x_0)=0$, then picks $x_1=1$ and reveals $f(x_1)=\pm b$ such that $|\hat y_1-f(x_1)| \ge b$; without loss of generality, suppose $f(x_1)=b$. Then for the next $k=\left\lfloor -\frac{q\log_2 b}{q-1} \right\rfloor$ trials, the adversary recursively picks $x_i$ and $f(x_i)$ as follows. On trial $2 \le i<k+2$, the adversary sets $l_i$ to be the greatest real $x \in [0,1]$ such that $f(x)=0$ has been previously revealed, and similarly sets $r_i$ to be the least real $x \in [0,1]$ such that $f(x)=b$ has been previously revealed, then sets $x_i=\frac{l_i+r_i}{2}$. Upon receiving $A$'s guess $\hat y_i$, the adversary reveals $f(x_i)=0$ or $f(x_i)=b$ such that $|f(x_i)-\hat y_i| \ge \frac{b}{2}$.

To see that this strategy is well-defined, note that all the $x_i$ are distinct\iffalse (for example, there exists $r \in [0,1]$ such that the sequence $x_2,\ldots,x_{k+1}$ is exactly the sequence of guesses made by a binary search algorithm trying to guess $r$)\else, so it suffices to show that there exists a function $f \in \mathcal F_q$ which is consistent with all $(x_i,f(x_i))$. Indeed, take $f=f_{\{(x_0,f(x_0)),\ldots,(x_{k+1},f(x_{k+1}))\}}$ which linearly interpolates between all points $(x_i,f(x_i))$; then $f$ has only one segment of nonzero slope, with \[ J_q[f]=2^{-k}\left(\frac{b}{2^{-k}}\right)^q=2^{k(q-1)}b^q \le 2^{-\frac{q\log_2 b}{q-1} \cdot (q-1)}b^q=1. \] Thus the adversary guarantees an error of at least \[ \opt_p(\mathcal F_q) \ge \sum_{i=1}^{k+1}|f(x_i)-\hat y_i|^p \ge b^p+k\left(\frac{b}{2}\right)^p \ge -\frac{b^pq\log_2 b}{2^p(q-1)}. \] Picking $b=e^{-\frac{1}{p}}$ yields $\opt_p(\mathcal F_q) \ge \frac{q}{(p2^pe \ln 2)(q-1)}$.
\end{proof}

In particular, when $p=2$ we get the following:

\begin{cor}\label{2qlow}
For $q \in (1,2)$, we have \[\opt_2(\mathcal F_q) \ge \frac{q}{(8e\ln 2)(q-1)}>\frac{1}{(8e\ln 2)(q-1)}.\]
\end{cor}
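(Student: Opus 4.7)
The plan is to derive this as an immediate consequence of Theorem \ref{pqlow}, which has already been established. The corollary simply specializes $p=2$ in the general bound $\opt_p(\mathcal F_q) \ge \frac{q}{(p2^pe \ln 2)(q-1)}$.

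First I would substitute $p=2$ into the conclusion of Theorem \ref{pqlow}. Since $p \cdot 2^p = 2 \cdot 4 = 8$ at $p=2$, the denominator simplifies and the bound becomes $\opt_2(\mathcal F_q) \ge \frac{q}{(8e\ln 2)(q-1)}$, giving the first inequality of the statement.

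For the second inequality, I would observe that $q \in (1,2)$ implies $q > 1$, so $\frac{q}{q-1} > \frac{1}{q-1}$, and dividing by the positive constant $8e\ln 2$ preserves this strict inequality. That yields $\frac{q}{(8e\ln 2)(q-1)} > \frac{1}{(8e\ln 2)(q-1)}$.

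There is no real obstacle here: the corollary is a direct substitution plus a trivial strengthening using $q>1$, so the proof will be only a sentence or two. The purpose of writing the second form of the inequality is presumably to highlight the $(q-1)^{-1}$ dependence, which will match the order of magnitude appearing in the upper bound of Theorem \ref{qin12bound}.
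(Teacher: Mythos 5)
Your proposal is correct and matches the paper exactly: the corollary is presented there as the immediate specialization of Theorem \ref{pqlow} at $p=2$, where $p\cdot 2^p = 8$, with the second inequality following from $q>1$. Nothing further is needed.
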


\subsection{Bounds for $\opt_2(\mathcal F_q)$}\label{2qupper}

The main result of this section is that for $\varepsilon \in (0,1)$, $\opt_2(\mathcal F_{1+\varepsilon})=\Theta(\varepsilon^{-1})$. For $q \in (1,2)$, Corollary \ref{2qlow} gives a lower bound for $\opt_2(\mathcal F_q)$; we now prove an upper bound for $\opt_2(\mathcal F_q)$ and use this to derive the desired result. First, we show that a similar fact to Lemma 9 in \cite{kl} holds.

\begin{lem}\label{fsmin}
Let $u_1<\ldots<u_m$ be reals in $[0,1]$ and $v_1,\ldots,v_m$ be reals, and define $S=\{(u_1,v_1),\ldots,(u_m,v_m)\}$. Then for any $q \in (1,2)$ and absolutely continuous $f: [0,1] \to \mathbb R$ such that $f(u_i)=v_i$ for $1 \le i \le m$, we have $J_q[f] \ge J_q[f_S]$.
\end{lem}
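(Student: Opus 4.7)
The plan is to reduce the inequality to a segment-by-segment comparison and then apply Jensen's inequality (convexity of $t \mapsto t^q$ for $q \ge 1$) on each segment. The proof should work uniformly for all $q \ge 1$; nothing about it actually uses $q < 2$, so the lemma mirrors Lemma 9 of \cite{kl} with essentially the same proof idea.

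First, I would observe that $f_S$ is constant on $[0,u_1]$ and $[u_m,1]$, so $f_S'$ vanishes there, while on each interval $(u_i,u_{i+1}]$ we have $f_S'(x) = (v_{i+1}-v_i)/(u_{i+1}-u_i)$. Hence
\[
J_q[f_S] = \sum_{i=1}^{m-1} (u_{i+1}-u_i)\left|\frac{v_{i+1}-v_i}{u_{i+1}-u_i}\right|^q = \sum_{i=1}^{m-1} \frac{|v_{i+1}-v_i|^q}{(u_{i+1}-u_i)^{q-1}}.
\]
For the function $f$, I would first throw away the (nonnegative) contribution from $[0,u_1]\cup[u_m,1]$ to get $J_q[f] \ge \sum_{i=1}^{m-1}\int_{u_i}^{u_{i+1}}|f'(x)|^q\,dx$.

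Next, on each segment $[u_i,u_{i+1}]$, I would apply Jensen's inequality to the convex function $t \mapsto t^q$ with respect to the normalized Lebesgue measure on that interval:
\[
\frac{1}{u_{i+1}-u_i}\int_{u_i}^{u_{i+1}}|f'(x)|^q\,dx \;\ge\; \left(\frac{1}{u_{i+1}-u_i}\int_{u_i}^{u_{i+1}}|f'(x)|\,dx\right)^{\!q}.
\]
Since $f$ is absolutely continuous with $f(u_i)=v_i$ and $f(u_{i+1})=v_{i+1}$, the fundamental theorem of calculus combined with the triangle inequality yields
\[
\int_{u_i}^{u_{i+1}}|f'(x)|\,dx \;\ge\; \left|\int_{u_i}^{u_{i+1}} f'(x)\,dx\right| = |v_{i+1}-v_i|.
\]
Combining these two inequalities gives $\int_{u_i}^{u_{i+1}}|f'(x)|^q\,dx \ge \frac{|v_{i+1}-v_i|^q}{(u_{i+1}-u_i)^{q-1}}$, and summing over $i$ matches the expression for $J_q[f_S]$ computed above.

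There is really no main obstacle here: the proof is routine once one recognizes that the piecewise linear interpolant minimizes the $q$-action among interpolants, which is a standard fact based on Jensen/H\"older. The only thing to be mildly careful about is bookkeeping of the segments and observing that the contributions outside $[u_1,u_m]$ only help, so dropping them preserves the inequality direction.
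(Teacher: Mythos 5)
Your proof is correct and follows essentially the same route as the paper's: decompose $J_q[f_S]$ over the segments $[u_i,u_{i+1}]$, drop the nonnegative contribution of $f$ outside $[u_1,u_m]$, and apply Jensen's inequality together with the fundamental theorem of calculus on each segment. Your observation that the argument works for all $q \ge 1$ (not just $q \in (1,2)$) is also accurate, as the paper itself notes this is an analogue of Lemma 9 in \cite{kl}.
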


\begin{proof}
If $m=1$, then $J_q[f_S]=0$ and the result is clear. Otherwise, fix some absolutely continuous $f: [0,1] \to \mathbb R$ which is consistent with the $(u_i,v_i)$, and fix $1 \le i<m$. Then by Jensen's inequality, \[ \frac{\int_{u_i}^{u_{i+1}}|f'(x)|^q\text{d}x}{u_{i+1}-u_i} \ge \left(\frac{\int_{u_i}^{u_{i+1}}|f'(x)|\text{d}x}{u_{i+1}-u_i}\right)^q \ge \left(\frac{\left|\int_{u_i}^{u_{i+1}}f'(x)\text{d}x\right|}{u_{i+1}-u_i}\right)^q=\left|\frac{v_{i+1}-v_i}{u_{i+1}-u_i}\right|^q. \] Thus we obtain \[ J_q[f] \ge \int_{u_1}^{u_m}|f'(x)|^q\text{d}x \ge \sum_{i=1}^{m-1}(u_{i+1}-u_i)\left|\frac{v_{i+1}-v_i}{u_{i+1}-u_i}\right|^q=J_q[f_S] \] by summing over all $1 \le i<m$.
\end{proof}

This leads to the following useful fact.

\begin{lem}\label{linint1}
For any $q>1$, target function $f \in \mathcal F_q$, integer $m \ge 1$, and sequence of inputs $x_0,\ldots,x_m \in [0,1]$, \textnormal{LININT} never produces an error $|\hat y_i-f(x_i)|>1$ on any trial $i \ge 1$.
\end{lem}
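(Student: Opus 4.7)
I plan to prove the bound $|\hat y_i - f(x_i)| \le 1$ directly from H\"older's inequality, without needing the full strength of Lemma \ref{fsmin}. The crucial subclaim is that for any $f \in \mathcal F_q$ and any $a,b \in [0,1]$ with $a < b$, absolute continuity together with H\"older gives
\[
|f(b) - f(a)| \le \int_a^b |f'(x)|\,\text{d}x \le (b-a)^{1-1/q}\left(\int_a^b |f'(x)|^q\,\text{d}x\right)^{1/q} \le 1,
\]
since both $b - a \le 1$ and $J_q[f] \le 1$. This is the workhorse bound that replaces any appeal to Lemma \ref{fsmin}.

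\textbf{Case analysis on $x_i$.} Let $u_1 < \cdots < u_{m'}$ be the previously queried points $x_0,\ldots,x_{i-1}$ sorted, with $v_k = f(u_k)$, so $\hat y_i = f_S(x_i)$. If $x_i = u_k$ for some $k$, then the definition of $f_S$ yields $f_S(x_i) = v_k = f(x_i)$ and the error is $0$. If $x_i \le u_1$ (respectively $x_i > u_{m'}$), then $f_S(x_i) = v_1$ (resp.\ $v_{m'}$), and the subclaim applied to the pair $(x_i, u_1)$ (resp.\ $(u_{m'}, x_i)$) bounds the error by $1$. Otherwise $x_i$ lies strictly between two consecutive points $u_j < x_i < u_{j+1}$, in which case $f_S(x_i) = (1-\alpha)v_j + \alpha v_{j+1}$ for $\alpha = (x_i - u_j)/(u_{j+1} - u_j) \in (0,1)$, so I would write
\[
\hat y_i - f(x_i) = (1-\alpha)\bigl(v_j - f(x_i)\bigr) + \alpha\bigl(v_{j+1} - f(x_i)\bigr),
\]
and apply the triangle inequality together with the subclaim to each of $|v_j - f(x_i)|$ and $|v_{j+1} - f(x_i)|$ to conclude $|\hat y_i - f(x_i)| \le (1-\alpha) + \alpha = 1$.

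\textbf{Obstacle.} I do not foresee a substantive obstacle: the argument rests solely on the facts that $J_q[f]$ restricted to any subinterval of $[0,1]$ remains at most $1$ and that $|x-y| \le 1$ for $x,y \in [0,1]$, which together make the H\"older bound yield exactly $1$. The only bookkeeping is matching the three subcases to the piecewise definition of $f_S$ (in particular, verifying that when $x_i$ coincides with a boundary point such as $u_1$ or $u_{m'}$, the formula collapses to the known value $f(x_i)$). Notably, the conclusion is uniform in $q > 1$, with no $q \ge 2$ assumption required.
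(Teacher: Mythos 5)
Your proof is correct. It rests on the same two pillars as the paper's argument --- (i) LININT's guess is a convex combination of at most two previously revealed values of $f$, and (ii) any $f \in \mathcal F_q$ with $q > 1$ satisfies $|f(b)-f(a)| \le 1$ for all $a,b \in [0,1]$ --- but you derive (ii) by applying H\"older's inequality directly to $f$ on $[a,b]$, whereas the paper argues by contradiction and reuses its Lemma \ref{fsmin}: if the error exceeded $1$, some revealed value $f(x_j)$ at a neighboring query point would satisfy $|f(x_i)-f(x_j)|>1$, and then $J_q[f] \ge J_q[f_S] \ge |x_i-x_j|\,|{(f(x_i)-f(x_j))}/{(x_i-x_j)}|^q > 1$ contradicts $f \in \mathcal F_q$. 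The two routes are mathematically equivalent (Jensen's inequality for $t \mapsto t^q$ applied to the average of $|f'|$ over an interval is exactly your H\"older step), so the difference is one of economy: the paper leans on a lemma it has already proved and needs elsewhere, while your version is self-contained and makes the convex-combination decomposition explicit rather than leaving it implicit in the phrase ``there exists $j$ with $|f(x_i)-f(x_j)|>1$.'' Your bookkeeping on the boundary cases (including $x_i$ coinciding with a previous query point, where $f_S(x_i)=f(x_i)$) is also handled correctly.
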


\begin{proof}
Suppose otherwise, so that LININT produces an error $|\hat y_i-f(x_i)|>1$ for $i \ge 1$; then there exists $0 \le j<i$ such that $|f(x_i)-f(x_j)|>1$. Letting $S=\{(x_0,f(x_0)),\ldots,(x_i,f(x_i))\}$, by Lemma \ref{fsmin} \[ J_q[f] \ge J_q[f_S] \ge |x_i-x_j|\left|\frac{f(x_i)-f(x_j)}{x_i-x_j}\right|^q \ge |f(x_i)-f(x_j)|>1, \] upon which $f \not \in \mathcal F_q$, contradiction.
\end{proof}

\begin{cor}\label{linintp}
For any $q>1$ and $p'>p>1$, we have $\mathscr L_{p'}(\textnormal{LININT},\mathcal F_q) \le \mathscr L_p(\textnormal{LININT},\mathcal F_q)$.
\end{cor}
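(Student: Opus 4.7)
The plan is to show the pointwise inequality $|\hat y_i - f(x_i)|^{p'} \le |\hat y_i - f(x_i)|^p$ on every trial and then sum. This rests entirely on Lemma \ref{linint1}, which guarantees that LININT's errors on $\mathcal F_q$ never exceed $1$ (after trial $0$).

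First I would fix $q > 1$, an arbitrary target $f \in \mathcal F_q$, and an arbitrary input sequence $\sigma = (x_0, \ldots, x_m) \in [0,1]^{m+1}$. By Lemma \ref{linint1}, for each trial $i \ge 1$ we have $|\hat y_i - f(x_i)| \le 1$. Since $p' > p > 1$, the function $t \mapsto t^{p'-p}$ is at most $1$ on $[0,1]$, so $|\hat y_i - f(x_i)|^{p'} = |\hat y_i - f(x_i)|^p \cdot |\hat y_i - f(x_i)|^{p'-p} \le |\hat y_i - f(x_i)|^p$. Summing this inequality over $i = 1, \ldots, m$ yields $\mathscr L_{p'}(\textnormal{LININT}, f, \sigma) \le \mathscr L_p(\textnormal{LININT}, f, \sigma)$.

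Finally I would take the supremum over all $f \in \mathcal F_q$ and all finite input sequences $\sigma$ on both sides to conclude $\mathscr L_{p'}(\textnormal{LININT}, \mathcal F_q) \le \mathscr L_p(\textnormal{LININT}, \mathcal F_q)$. There is essentially no obstacle here, since the work is already done in Lemma \ref{linint1}; the only thing to verify is the elementary inequality $t^{p'} \le t^p$ for $t \in [0,1]$, which is immediate.
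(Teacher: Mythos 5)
Your proposal is correct and is essentially identical to the paper's own proof: both invoke Lemma \ref{linint1} to bound each error by $1$, apply the elementary inequality $t^{p'} \le t^p$ for $t \in [0,1]$, and sum over trials. No issues.
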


\begin{proof}
On every trial $i \ge 1$, \textnormal{LININT} produces an error $|\hat y_i-f(x_i)| \le 1$, so $|\hat y_i-f(x_i)|^{p'} \le |\hat y_i-f(x_i)|^{p}$ for all $i \ge 1$.
\end{proof}

With this, the proof proceeds similarly to the proof of Theorem 11 in \cite{kl}. Specifically, we will compare changes in $J_q[f_S]$ as new points are added to $S$ to the squared errors $(\hat y-f_S(x))^2$ produced by LININT to bound $\mathscr L_2(\text{LININT},\mathcal F_q)$. This requires the following inequalities.

\begin{lem}\label{2qin}
For reals $a > 0$, $b>0$, $q \in (1,2)$, and $x \in (-a,b)$, we have \[ a\left(1+\frac{x}{a}\right)^q+b\left(1-\frac{x}{b}\right)^q-(a+b) \ge \frac{2q(q-1)}{a+b} \cdot x^2. \]
\end{lem}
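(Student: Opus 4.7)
The plan is to reduce the target inequality to a uniform lower bound on the second derivative of the left-hand side and then invoke Taylor's theorem with Lagrange remainder. Define $g(x) = a\left(1+\frac{x}{a}\right)^q + b\left(1-\frac{x}{b}\right)^q - (a+b)$ for $x \in (-a, b)$. Direct differentiation gives $g(0) = 0$, $g'(0) = q - q = 0$, and
\[ g''(x) = q(q-1)\left[\frac{(1+x/a)^{q-2}}{a} + \frac{(1-x/b)^{q-2}}{b}\right]. \]
By Taylor's theorem with Lagrange remainder, $g(x) = \tfrac{1}{2} g''(\xi)\, x^2$ for some $\xi$ strictly between $0$ and $x$, so it suffices to establish the uniform lower bound
\[ \frac{(1+y/a)^{q-2}}{a} + \frac{(1-y/b)^{q-2}}{b} \ge \frac{4}{a+b} \]
for every $y \in (-a, b)$.

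To prove this auxiliary bound, set $u = 1 + y/a > 0$ and $v = 1 - y/b > 0$, and observe the linear identity $au + bv = a + b$. Apply Cauchy--Schwarz to the pairs $\bigl(u^{(q-2)/2}/\sqrt{a},\, v^{(q-2)/2}/\sqrt{b}\bigr)$ and $\bigl(\sqrt{a}\,u^{(2-q)/2},\, \sqrt{b}\,v^{(2-q)/2}\bigr)$, whose componentwise products are both $1$. This yields
\[ \left(\frac{u^{q-2}}{a} + \frac{v^{q-2}}{b}\right)\bigl(au^{2-q} + bv^{2-q}\bigr) \ge (1+1)^2 = 4. \]
Since $2 - q \in (0, 1)$, the map $t \mapsto t^{2-q}$ is concave on $(0,\infty)$, so Jensen's inequality with the weights $\tfrac{a}{a+b}$ and $\tfrac{b}{a+b}$ applied to $u$ and $v$ gives $au^{2-q} + bv^{2-q} \le (a+b)\bigl(\tfrac{au + bv}{a+b}\bigr)^{2-q} = a+b$. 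Dividing the Cauchy--Schwarz estimate by this finishes the auxiliary bound, from which $g''(\xi) \ge \tfrac{4q(q-1)}{a+b}$ and hence $g(x) \ge \tfrac{2q(q-1)}{a+b}\,x^2$.

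The main obstacle is identifying the right Cauchy--Schwarz pairing. Because $q - 2 \in (-1, 0)$, the function $t \mapsto t^{q-2}$ is convex, so a direct Jensen's inequality on the bracketed sum goes the wrong way under the affine constraint $au + bv = a+b$. The key idea is to use Cauchy--Schwarz to dualize the exponent from $q-2$ to $2-q$, at which point the opposite concavity of $t \mapsto t^{2-q}$ makes Jensen's inequality deliver precisely the bound needed.
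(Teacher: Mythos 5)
Your proof is correct, and the key step is handled quite differently from the paper. Both arguments ultimately reduce the lemma to the same pointwise bound on the second derivative, namely $\frac{1}{a}\left(1+\frac{y}{a}\right)^{q-2}+\frac{1}{b}\left(1-\frac{y}{b}\right)^{q-2} \ge \frac{4}{a+b}$ for all $y \in (-a,b)$ (the paper phrases this as $f''\ge 0$ for the difference function and concludes via convexity and $f'(0)=0$; your Taylor--Lagrange step is an equivalent packaging). The difference lies in how that bound is established. The paper computes the third and fourth derivatives, shows the third derivative is increasing, solves explicitly for its unique zero, and evaluates the second derivative at that critical point, finishing with $(u+v)^2 \ge 4uv$ --- a fairly heavy computation. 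You instead dualize the negative exponent $q-2$ to the positive exponent $2-q$ via Cauchy--Schwarz with componentwise products equal to $1$, and then exploit the concavity of $t\mapsto t^{2-q}$ together with the linear constraint $au+bv=a+b$ to bound the dual sum by $a+b$ via Jensen. Your route is shorter, avoids all critical-point analysis, and makes transparent why the constant $\frac{4}{a+b}$ appears (it is the equality case $u=v=1$, i.e.\ $y=0$, of both Cauchy--Schwarz and Jensen); the paper's route is more mechanical but requires no inspiration in choosing the pairing. All details in your argument check out: $u,v>0$ on the stated interval, the identity $au+bv=a+b$ is immediate, and $2-q\in(0,1)$ gives the concavity you need.
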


\begin{proof}
Fix $a,b,q$, and define the function \[ f(x)=a\left(1+\frac{x}{a}\right)^q+b\left(1-\frac{x}{b}\right)^q-(a+b)-\frac{2q(q-1)}{a+b} \cdot x^2 \] for $x \in (-a,b)$, so that we wish to show $f(x) \ge 0$ for all $x \in (-a,b)$. Compute \begin{align*} f'(x) &= q\left[\left(1+\frac{x}{a}\right)^{q-1}-\left(1-\frac{x}{b}\right)^{q-1}-\frac{4(q-1)}{a+b}x\right] \\ f''(x) &= q(q-1)\left[\frac{1}{a}\left(1+\frac{x}{a}\right)^{q-2}+\frac{1}{b}\left(1-\frac{x}{b}\right)^{q-2}-\frac{4}{a+b}\right] \\ f^{(3)}(x) &= q(q-1)(q-2)\left[\frac{1}{a^2}\left(1+\frac{x}{a}\right)^{q-3}-\frac{1}{b^2}\left(1-\frac{x}{b}\right)^{q-3}\right] \\ f^{(4)}(x) &= q(q-1)(q-2)(q-3)\left[\frac{1}{a^3}\left(1+\frac{x}{a}\right)^{q-4}+\frac{1}{b^3}\left(1-\frac{x}{b}\right)^{q-4}\right]. \end{align*} First, we show $f''(x) \ge 0$ for all $x \in (-a,b)$. Note that $f^{(4)}(x)>0$ for all $x$ and \[ \lim_{x \to -a^+}f''(x)=\lim_{x \to b-}f''(x)=\infty, \] so $f^{(3)}(x)$ is increasing on $(-a,b)$ and it suffices to check that $f''(x) \ge 0$ at the point where $f^{(3)}(x)=0$. Solving for this $x$ yields \begin{align*} f^{(3)}(x)=0 & \iff \frac{1}{a^2}\left(1+\frac{x}{a}\right)^{q-3}=\frac{1}{b^2}\left(1-\frac{x}{b}\right)^{q-3} \\ & \iff \left(\frac{1+\frac{x}{a}}{1-\frac{x}{b}}\right)^{3-q}=\frac{b^2}{a^2} \iff \frac{1+\frac{x}{a}}{1-\frac{x}{b}}=\frac{b^\frac{2}{3-q}}{a^\frac{2}{3-q}} \\ & \iff x=\frac{-a^\frac{2}{3-q}+b^\frac{2}{3-q}}{a^\frac{q-1}{3-q}+b^\frac{q-1}{3-q}}. \end{align*} At this $x$, \begin{align*} f''(x) &= q(q-1)\left[\frac{1}{a}\left(1+\frac{-a^\frac{2}{3-q}+b^\frac{2}{3-q}}{a^\frac{2}{3-q}+ab^\frac{q-1}{3-q}}\right)^{q-2}+\frac{1}{b}\left(1-\frac{-a^\frac{2}{3-q}+b^\frac{2}{3-q}}{a^\frac{q-1}{3-q}b+b^\frac{2}{3-q}}\right)^{q-2}-\frac{4}{a+b}\right] \\ &= q(q-1)\left[\frac{1}{a}\left(\frac{(a+b)b^\frac{q-1}{3-q}}{a\left(a^\frac{q-1}{3-q}+b^\frac{q-1}{3-q}\right)}\right)^{q-2}+\frac{1}{b}\left(\frac{(a+b)a^\frac{q-1}{3-q}}{b\left(a^\frac{q-1}{3-q}+b^\frac{q-1}{3-q}\right)}\right)^{q-2}-\frac{4}{a+b}\right] \\ &= q(q-1)\left[\frac{\left(a^\frac{q-1}{3-q}+b^\frac{q-1}{3-q}\right)^{2-q}\left(a^{1-q}b^\frac{(q-1)(q-2)}{3-q}+a^\frac{(q-1)(q-2)}{3-q}b^{1-q}\right)}{(a+b)^{2-q}}-\frac{4}{a+b}\right] \\ & \ge q(q-1)\left[\frac{2^{2-q}(ab)^\frac{(q-1)(2-q)}{2(3-q)} \cdot 2(ab)^\frac{(q-1)(2q-5)}{2(3-q)}}{(a+b)^{2-q}}-\frac{4}{a+b}\right] \\ &= 4q(q-1)\left[\frac{1}{(a+b)^{2-q}(4ab)^\frac{q-1}{2}}-\frac{1}{a+b}\right] \\ & \ge 4q(q-1)\left[\frac{1}{(a+b)^{2-q}(a+b)^{q-1}}-\frac{1}{a+b}\right]=0, \end{align*} where all inequalities follow from the inequality $(u+v)^2 \ge 4uv \iff (u-v)^2 \ge 0$ for all reals $u,v$. Since $f''(x)$ is minimized here, it follows that $f''(x) \ge 0$ for all $x \in (-a,b)$.

Since $f'(0)=0$ and $f''(x) \ge 0$ for all $x \in (-a,b)$, it follows that $f'(x) \le 0$ for $x<0$ and $f'(x) \ge 0$ for $x>0$, so $f(x) \ge f(0)=0$ for all $x \in (-a,b)$.
\end{proof}

\begin{lem}\label{2qout}
For reals $a,b \in (0,1)$, $q \in (1,2)$, and $x \not \in (-a,b)$, we have \[ a\left|\frac{x}{a}+1\right|^q+b\left|\frac{x}{b}-1\right|^q-(a+b) \ge \frac{(q-1)|x|^q}{(a+b)^{q-1}}. \]
\end{lem}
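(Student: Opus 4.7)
The plan is to reduce by symmetry to the single case $x \ge b$ and then prove the inequality by a one-variable calculus argument. The transformation $(a, b, x) \mapsto (b, a, -x)$ leaves both sides of the inequality invariant while mapping the case $x \le -a$ to the case $x \ge b$ for the swapped parameters; so without loss of generality I may assume $x \ge b$. In this range the absolute values disappear, and the claim becomes
\[ a(1+x/a)^q + b(x/b-1)^q - (a+b) \ge (q-1)x^q/(a+b)^{q-1}. \]

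Define $h(x) := a(1+x/a)^q + b(x/b-1)^q - (a+b) - (q-1)x^q/(a+b)^{q-1}$. I will show that $h$ is non-decreasing on $[b, \infty)$ and that $h(b) \ge 0$; together these give $h(x) \ge 0$ for all $x \ge b$. For the monotonicity, compute
\[ h'(x) = q(1+x/a)^{q-1} + q(x/b-1)^{q-1} - q(q-1)x^{q-1}/(a+b)^{q-1}, \]
drop the non-negative middle term, and reduce to $(1+x/a)^{q-1} \ge (q-1)x^{q-1}/(a+b)^{q-1}$. Rearranging, this is equivalent to $(a+x)(a+b)/(ax) \ge (q-1)^{1/(q-1)}$. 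Since $(a+x)(a+b)/(ax) = 1 + b/a + (a+b)/x \ge 1$ while $(q-1)^{1/(q-1)} < 1$ for $q \in (1,2)$, the bound follows.

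For the boundary check, a short computation gives $h(b) = (a+b)\phi(d)$ with $d = a/(a+b) \in (0,1)$ and $\phi(d) := d^{1-q} - 1 - (q-1)(1-d)^q$. Since $\phi(1) = 0$, it suffices to show that $\phi$ is non-increasing on $(0,1)$. Differentiating, $\phi'(d) = (q-1)[-d^{-q} + q(1-d)^{q-1}]$, which is $\le 0$ iff $qd^q(1-d)^{q-1} \le 1$. Bounding $d^q \le d$ for $d \in (0,1)$, this follows from $qd(1-d)^{q-1} \le 1$; optimizing the left side in $d$ gives a maximum of $((q-1)/q)^{q-1} < 1$ at $d = 1/q$.

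The main obstacle is the boundary case $h(b) \ge 0$: introducing the one-parameter function $\phi$ and carrying out the calculus needed to show $\phi \ge 0$ form the most intricate part of the argument. The monotonicity step, by contrast, follows cleanly from the trivial pointwise bound $(a+x)(a+b)/(ax) \ge 1$, reflecting the fact that the leading-order growth of the left side already dominates the right for large $x$.
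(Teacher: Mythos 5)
Your proof is correct and follows essentially the same route as the paper: reduce to $x \ge b$ by symmetry, show the difference is non-decreasing via its derivative, and reduce the boundary case $h(b) \ge 0$ to the one-variable inequality $d^{1-q}-1-(q-1)(1-d)^q \ge 0$, which is verified by showing its derivative is non-positive on $(0,1)$. The only (minor) divergence is the final sub-step $qd^q(1-d)^{q-1}\le 1$: the paper maximizes $d^{q-1}(1-d)^{q-1}$ at $d=\tfrac12$ and then checks $q\,4^{1-q}<1$ by a further calculus argument, whereas your bound $d^q\le d$ followed by maximizing $qd(1-d)^{q-1}$ at $d=1/q$ is slightly more streamlined.
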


\begin{proof}
Fix $a,b,q$; by symmetry, it suffices to consider $x \ge b$. Define the function \[ f(x)=a\left(\frac{x}{a}+1\right)^q+b\left(\frac{x}{b}-1\right)^q-(a+b)-\frac{(q-1)x^q}{(a+b)^{q-1}} \] for $x \ge b$, so that we wish to show $f(x) \ge 0$ for all $x \ge b$. Since \begin{align*} f'(x) &= q\left(\frac{x}{a}+1\right)^{q-1}+q\left(\frac{x}{b}-1\right)^{q-1}-\frac{q(q-1)x^{q-1}}{(a+b)^{q-1}} \\ & \ge q\left(\frac{x}{a+b}\right)^{q-1}+q\left(\frac{x}{b}-1\right)^{q-1}-\frac{qx^{q-1}}{(a+b)^{q-1}}>0 \end{align*} for all $x>b$, $f$ is increasing, so it suffices to show \[ f(b)=a\left(\frac{a+b}{a}\right)^q-(a+b)-\frac{(q-1)b^q}{(a+b)^{q-1}} \ge 0. \] Dividing by $a+b$ and substituting $r=\frac{a}{a+b}$, we see that this is equivalent to \[ g(r)=\frac{1}{r^{q-1}}-1-(q-1)(1-r)^q \ge 0 \] for $r \in (0,1)$. As $g(1)>0$, it suffices for \begin{align*}
    & g'(r)=r^{-q}(q-1)\left(qr^q(1-r)^{q-1}-1\right)<0
    \\ & \iff qr^{q-1}(1-r)^{q-1}<\frac{1}{r}.
\end{align*} For $r \in (0, 1)$, the quantity $r^{q-1}(1-r)^{q-1}$ is maximized when $r = \frac{1}{2}$, so it suffices to prove that $q\left(\frac{1}{2}\right)^{2q-2} < 1$ for $q \in (1,2)$. Define $h(q)=q\left(\frac{1}{2}\right)^{2q-2}$, so $h'(q) = 4^{1-q} (1- q \ln{4}) < 0$ for $q \in (1, 2)$. Since $h(1) = 1$, we have $h(q) < 1$ for $q \in (1,2)$.
\end{proof}

\begin{cor}\label{2qoutcor}
For reals $a,b \in (0,1)$ such that $a+b \le 1$, $q \in (1,2)$, and $x \not \in (-a,b)$, we have \[ a\left|\frac{x}{a}+1\right|^q+b\left|\frac{x}{b}-1\right|^q-(a+b) \ge (q-1)|x|^q. \]
\end{cor}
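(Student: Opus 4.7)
The plan is to simply invoke Lemma \ref{2qout} and observe that the extra hypothesis $a+b \le 1$ makes the denominator on the right-hand side at most $1$. More precisely, the hypotheses of Lemma \ref{2qout}, namely $a,b \in (0,1)$, $q \in (1,2)$, and $x \notin (-a,b)$, are identical to the hypotheses of Corollary \ref{2qoutcor} (the latter only adds $a+b \le 1$), so the lemma applies and gives
\[ a\left|\frac{x}{a}+1\right|^q+b\left|\frac{x}{b}-1\right|^q-(a+b) \ge \frac{(q-1)|x|^q}{(a+b)^{q-1}}. \]

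Then I would note that since $q-1 > 0$ and $0 < a+b \le 1$, we have $(a+b)^{q-1} \le 1$, hence $\frac{1}{(a+b)^{q-1}} \ge 1$, so
\[ \frac{(q-1)|x|^q}{(a+b)^{q-1}} \ge (q-1)|x|^q, \]
which chained with the previous inequality yields the desired bound.

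There is no real obstacle here: all the work has already been done in Lemma \ref{2qout}, and the corollary just records the clean form of the bound under the additional normalization $a+b \le 1$, which is the form that will be convenient when comparing to $J_q[f_S]$ in the subsequent analysis of LININT.
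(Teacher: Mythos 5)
Your proposal is correct and is exactly the intended derivation: the paper states Corollary \ref{2qoutcor} without proof as an immediate consequence of Lemma \ref{2qout}, relying precisely on the observation that $0 < a+b \le 1$ and $q-1>0$ give $(a+b)^{q-1} \le 1$, so the lemma's right-hand side dominates $(q-1)|x|^q$.
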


Combining the above yields the following key result.

\begin{lem}\label{2qboth}
Fix $q \in (1,2)$, a nonempty set $S=\{(u_1,v_1),\ldots,(u_k,v_k)\}$ of points in $[0,1] \times \mathbb R$, and $(x,y) \in [0,1] \times \mathbb R$ such that $u_1<\ldots<u_k$, $x \neq u_i$ for any $1 \le i \le k$, and $J_q\left[f_{S \cup \{(x,y)\}}\right] \le 1$. Then \[ J_q\left[f_{S \cup \{(x,y)\}}\right]-J_q[f_S] \ge (q-1)(y-f_S(x))^2. \]
\end{lem}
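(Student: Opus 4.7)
The plan is to localize the comparison between $J_q[f_{S\cup\{(x,y)\}}]$ and $J_q[f_S]$: these two piecewise linear functions differ only on a single interval $I$ determined by where $x$ sits relative to $u_1,\ldots,u_k$, so I split into three cases accordingly.

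The main case is $x\in(u_i,u_{i+1})$ for some $1\le i<k$. Setting $a=x-u_i$, $b=u_{i+1}-x$, $s=(v_{i+1}-v_i)/(a+b)$, and $\delta=y-f_S(x)$, a direct computation of the slopes of $f_{S\cup\{(x,y)\}}$ on $[u_i,x]$ and $[x,u_{i+1}]$ yields
\[
J_q[f_{S\cup\{(x,y)\}}]-J_q[f_S]=a\left|s+\tfrac{\delta}{a}\right|^q+b\left|s-\tfrac{\delta}{b}\right|^q-(a+b)|s|^q.
\]
When $s\ne 0$, I would factor out $|s|^q$ and substitute $z=\delta/s$, reducing the right-hand side to $|s|^q\bigl(a|1+z/a|^q+b|1-z/b|^q-(a+b)\bigr)$, and then apply Lemma~\ref{2qin} if $z\in(-a,b)$ or Corollary~\ref{2qoutcor} if $z\notin(-a,b)$. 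The constraint $a+b\le 1$ (so in particular $a,b\in(0,1)$) makes both lemmas applicable.

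Closing each sub-case requires two a priori bounds. First, Lemma~\ref{fsmin} applied to $f_{S\cup\{(x,y)\}}$ gives $J_q[f_S]\le J_q[f_{S\cup\{(x,y)\}}]\le 1$; reading off the contribution of $[u_i,u_{i+1}]$ alone yields $|s|\le(a+b)^{-1/q}$. Second, the Jensen-type estimate $|f(u)-f(v)|^q\le|u-v|^{q-1}J_q[f]$ (the same computation used in Lemma~\ref{linint1}) applied to $f_{S\cup\{(x,y)\}}$ at the pairs $(x,u_i)$ and $(x,u_{i+1})$ gives $|y-v_i|,|y-v_{i+1}|\le 1$, and writing $\delta=(b(y-v_i)-a(v_{i+1}-y))/(a+b)$ as a convex combination then forces $|\delta|\le 1$. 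In the $z\in(-a,b)$ sub-case, Lemma~\ref{2qin} produces the lower bound $\tfrac{2q(q-1)|s|^{q-2}\delta^2}{a+b}$; using $|s|^{q-2}\ge(a+b)^{(2-q)/q}$ reduces the target to $2q\ge(a+b)^{(2q-2)/q}$, which is immediate since $a+b\le 1$ and the exponent is positive. In the $z\notin(-a,b)$ sub-case, Corollary~\ref{2qoutcor} yields $(q-1)|\delta|^q$, and $|\delta|\le 1$ combined with $q<2$ upgrades $|\delta|^q$ to $\delta^2$.

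It remains to handle the degenerate sub-case $s=0$ and the tail cases $x<u_1$ or $x>u_k$. If $s=0$, then $v_i=v_{i+1}$ and a direct computation gives $\Delta J=|\delta|^q(a^{1-q}+b^{1-q})\ge 2|\delta|^q\ge(q-1)\delta^2$, using $a,b\le 1$ and $|\delta|\le 1$. For $x<u_1$, setting $a=u_1-x\le 1$, the difference reduces to $|y-v_1|^q/a^{q-1}$; the Jensen bound gives $|\delta|=|y-v_1|\le a^{(q-1)/q}\le 1$, from which $|\delta|^q/a^{q-1}\ge(q-1)\delta^2$ follows after a brief exponent manipulation using $a\le 1$ and $q-1<1$. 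The case $x>u_k$ is symmetric. I expect the main obstacle to be the $z\in(-a,b)$ sub-case: the factor $|s|^{q-2}$ in Lemma~\ref{2qin} shrinks as $|s|$ grows, so one must combine the constraint $|s|\le(a+b)^{-1/q}$ with precisely the right exponent identity to verify that the coefficient produced by the lemma genuinely dominates $(q-1)$.
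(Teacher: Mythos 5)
Your proof is correct and follows essentially the same route as the paper's: localize the change in $J_q$ to the one affected interval, substitute $a,b,s,\delta$, factor out $|s|^q$, apply Lemma \ref{2qin} or Corollary \ref{2qoutcor}, and close with the a priori bounds $|\delta|\le 1$ and $(a+b)|s|^q\le J_q[f_S]\le 1$ coming from Lemma \ref{fsmin} and the Jensen estimate. If anything you are slightly more careful than the paper in two spots: you treat the degenerate slope $s=0$ explicitly (the paper's factoring of $|m|^q$ silently assumes $m\ne 0$), and your single bound $|s|\le(a+b)^{-1/q}$ replaces the paper's case split into $|m|\le 1$ and $|m|\ge 1$.
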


\begin{proof}
First, suppose $x<u_1$. Then compared to $f_S$, the function $f_{S \cup \{(x,y)\}}$ contains a new line segment of (possibly) nonzero slope between $(x,y)$ and $(u_1,v_1)$, so as $|y-f_S(x)|=|v_1-y| \le 1$ (by Lemma \ref{linint1}) and $|u_1-x| \le 1$, \[ J_q\left[f_{S \cup \{(x,y)\}}\right]-J_q[f_S]=(u_1-x)\left|\frac{v_1-y}{u_1-x}\right|^q \ge (v_1-y)^2=(y-f_S(x))^2 \ge (q-1)(y-f_S(x))^2. \] The case $x>u_k$ is similar.

Now suppose there exists an integer $1 \le i<k$ such that $u_i<x<u_{i+1}$. In this case, \begin{align*} J_q\left[f_{S \cup \{(x,y)\}}\right]-J_q[f_S] &= (x-u_i)\left|\frac{y-v_i}{x-u_i}\right|^q+(u_{i+1}-x)\left|\frac{v_{i+1}-y}{u_{i+1}-x}\right|^q-(u_{i+1}-u_i)\left|\frac{v_{i+1}-v_i}{u_{i+1}-u_i}\right|^q. \end{align*} Substituting $a=x-u_i,b=u_{i+1}-x,d=y-f_S(x)$, and $m=\frac{v_{i+1}-v_i}{u_{i+1}-u_i}=\frac{f_S(x)-v_i}{a}=\frac{v_{i+1}-f_S(x)}{b}$, we can rewrite the above as \begin{align*} J_q\left[f_{S \cup \{(x,y)\}}\right]-J_q[f_S] &= a\left|m+\frac{d}{a}\right|^q+b\left|m-\frac{d}{b}\right|^q-(a+b)|m|^q \\ &= |m|^q\left(a\left|1+\frac{d}{ma}\right|^q+b\left|1-\frac{d}{mb}\right|^q-(a+b)\right). \end{align*} Then applying either Lemma \ref{2qin} or Corollary \ref{2qoutcor} (depending on whether $\frac{d}{m} \in (-a,b)$) yields \begin{align*} J_q\left[f_{S \cup \{(x,y)\}}\right]-J_q[f_S] & \ge |m|^q\min\left\{\frac{2q(q-1)}{a+b} \cdot \left(\frac{d}{m}\right)^2,(q-1)\left|\frac{d}{m}\right|^q\right\} \\ &= \min\left\{\frac{2q(q-1)}{|m|^{2-q}(a+b)} \cdot d^2,(q-1)|d|^q\right\}. \end{align*} If $0<|m| \le 1$, then $a+b=u_{i+1}-u_i \le 1 \implies |m|^{2-q}(a+b) \le 1$, while if $|m| \ge 1$, then \[ |m|^{2-q}(a+b) \le |m|^q(a+b) \le J_q\left[f_{S}\right] \le J_q[f_{S \cup \{(x,y)\}}] \le 1 \] by Lemma \ref{fsmin}, so in either case \[ \frac{2q(q-1)}{|m|^{2-q}(a+b)} \cdot d^2 \ge 2q(q-1)d^2 \ge (q-1)d^2. \] Moreover, since $J_q\left[f_{S \cup \{(x,y)\}}\right] \le 1$, by Lemma \ref{linint1} $|d| \le 1.$ Hence $(q-1)|d|^q \ge (q-1)d^2$ as well.
\end{proof}

This directly yields the desired upper bound.

\begin{thm}\label{2qup}
For $q \in (1,2)$, we have $\mathscr L_2(\normalfont\text{LININT},\mathcal F_q) \le \frac{1}{q-1}$.
\end{thm}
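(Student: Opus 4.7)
The plan is to apply Lemma \ref{2qboth} telescopically across the trials. First define $S_t = \{(x_i, f(x_i)) : 0 \le i \le t\}$, the set of distinct data points revealed through trial $t$. By the definition of LININT, its guess on any trial $t \ge 1$ is $\hat y_t = f_{S_{t-1}}(x_t)$, so the squared LININT error on that trial equals $(f_{S_{t-1}}(x_t) - f(x_t))^2$. Observe that whenever $x_t$ coincides with some earlier input $x_j$ we have $\hat y_t = f(x_j) = f(x_t)$, so repeated-input trials contribute $0$ to $\mathscr L_2$ and can be ignored when summing.

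The main step is to invoke Lemma \ref{2qboth} with $S = S_{t-1}$ and new point $(x_t, f(x_t))$ for every trial $t$ on which $x_t$ is new. This immediately yields
\[ J_q[f_{S_t}] - J_q[f_{S_{t-1}}] \ge (q-1)\bigl(\hat y_t - f(x_t)\bigr)^2. \]
Summing over $t = 1, \ldots, m$ (restricted to trials with new inputs) telescopes to
\[ \sum_{t=1}^m \bigl(\hat y_t - f(x_t)\bigr)^2 \le \frac{J_q[f_{S_m}] - J_q[f_{S_0}]}{q-1}. \]
Since $S_0$ is a single point, $f_{S_0}$ is constant and $J_q[f_{S_0}] = 0$. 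Since $f \in \mathcal F_q$ agrees with $f_{S_m}$ at every point of $S_m$, Lemma \ref{fsmin} gives $J_q[f_{S_m}] \le J_q[f] \le 1$. Combining these gives the bound $\frac{1}{q-1}$; taking the supremum over $f$ and the input sequence finishes the proof.

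The two hypotheses of Lemma \ref{2qboth} that must be verified at each stage are that $x_t$ differs from the earlier inputs (handled by the skip-duplicates reduction above) and that $J_q[f_{S_t}] \le 1$. The latter is really the only delicate ingredient and is supplied uniformly by Lemma \ref{fsmin}: the $q$-action of the piecewise-linear interpolant cannot exceed that of any $\mathcal F_q$-function passing through the same data, so the assumption $J_q[f] \le 1$ propagates to each $S_t$ along the run. Once Lemmas \ref{fsmin} and \ref{2qboth} are in hand, the proof reduces to a short telescoping bookkeeping and I do not expect any further technical obstacle.
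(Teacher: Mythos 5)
Your proposal is correct and follows essentially the same route as the paper's proof: telescoping the increments $J_q[f_{S_t}]-J_q[f_{S_{t-1}}]$ via Lemma \ref{2qboth} and bounding the total by $J_q[f]\le 1$ using Lemma \ref{fsmin}. Your explicit verification that $J_q[f_{S_t}]\le 1$ at each stage (needed as a hypothesis of Lemma \ref{2qboth}) and your handling of repeated inputs are slightly more careful than the paper's ``without loss of generality'' phrasing, but the argument is the same.
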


\begin{proof}
Fix a target function $f \in \mathcal F_q$, an integer $m \ge 1$, and a sequence of inputs $\sigma=(x_0,\ldots,x_m) \in [0,1]^{m+1}$. Assume without loss of generality that all $x_i$ are distinct. For $0 \le i \le m$, define $S_i=\{(x_0,f(x_0)),\ldots,(x_i,f(x_i))\}$, and suppose LININT produces guesses $\hat y_0,\ldots,\hat y_m \in \mathbb R$. By Lemma \ref{fsmin} and Lemma \ref{2qboth}, \[ 1 \ge J_q[f] \ge J_q\left[f_{S_m}\right]=\sum_{i=1}^m\left(J_q\left[f_{S_i}\right]-J_q\left[f_{S_{i-1}}\right]\right) \ge (q-1)\sum_{i=1}^m(\hat y_i-f(x_i))^2, \] so \[ \mathscr L_2(\text{LININT},f,\sigma)=\sum_{i=1}^m(\hat y_i-f(x_i))^2 \le \frac{1}{q-1} \] for any $f \in \mathcal F_q$, integer $m \ge 1$, and $\sigma \in [0,1]^{m+1}$. Thus $\mathscr L_2(\text{LININT},\mathcal F_q) \le \frac{1}{q-1}$.
\end{proof}

Finally, combining the above with the lower bound in Corollary \ref{2qlow}, we get the following result.

\begin{thm}\label{2q}
For $\varepsilon \in (0,1)$, we have $\opt_2(\mathcal F_{1+\varepsilon})=\Theta(\varepsilon^{-1})$.
\end{thm}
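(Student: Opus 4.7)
The plan is to observe that Theorem \ref{2q} is essentially an immediate corollary of the two bounds established earlier in the section, which are the actual substantive results. Specifically, I would substitute $q = 1+\varepsilon$ into both Corollary \ref{2qlow} (for the lower bound) and Theorem \ref{2qup} (for the upper bound), and verify that each side gives $\Theta(\varepsilon^{-1})$.

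For the lower bound, Corollary \ref{2qlow} yields
\[
\opt_2(\mathcal F_{1+\varepsilon}) \ge \frac{1}{(8e\ln 2)\varepsilon},
\]
which is $\Omega(\varepsilon^{-1})$ with explicit constant. For the upper bound, since $\opt_2(\mathcal F_{1+\varepsilon}) \le \mathscr L_2(\textnormal{LININT},\mathcal F_{1+\varepsilon})$ by definition of $\opt_2$, Theorem \ref{2qup} gives
\[
\opt_2(\mathcal F_{1+\varepsilon}) \le \frac{1}{\varepsilon},
\]
which is $O(\varepsilon^{-1})$.

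Combining both bounds yields $\opt_2(\mathcal F_{1+\varepsilon}) = \Theta(\varepsilon^{-1})$. There is no real obstacle here, since all the work (the adversary argument for the lower bound, and the LININT-plus-convexity argument culminating in Lemma \ref{2qboth} for the upper bound) has already been carried out. The only thing to confirm is that $\varepsilon \in (0,1)$ places $q = 1+\varepsilon$ into $(1,2)$, which is precisely the range of validity of both Corollary \ref{2qlow} and Theorem \ref{2qup}, so both are directly applicable.
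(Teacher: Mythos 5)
Your proposal is correct and matches the paper's own proof: the paper likewise obtains $\frac{\varepsilon^{-1}}{8e\ln 2} < \frac{1+\varepsilon}{(8e\ln 2)\varepsilon} \le \opt_2(\mathcal F_{1+\varepsilon}) \le \varepsilon^{-1}$ by substituting $q = 1+\varepsilon$ into Corollary \ref{2qlow} and Theorem \ref{2qup}. Nothing further is needed.
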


\begin{proof}
Combining Corollary \ref{2qlow} and Theorem \ref{2qup}, \[\frac{\varepsilon^{-1}}{8e \ln 2}<\frac{1+\varepsilon}{(8e \ln 2)\varepsilon} \le \opt_2(\mathcal F_{1+\varepsilon}) \le \varepsilon^{-1}. \] Hence, $\opt_2(\mathcal F_{1+\varepsilon})=\Theta(\varepsilon^{-1})$.\end{proof}

It is simple to generalize the upper bound in Theorem \ref{2qup} to all $p \ge 2$. 

\begin{cor}\label{finitepg2}
For $\varepsilon \in (0,1)$ and $p \ge 2$, we have $\opt_p(\mathcal F_{1+\varepsilon})=O(\varepsilon^{-1})$.
\end{cor}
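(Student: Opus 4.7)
The plan is to show that for $p \ge 2$ and $\varepsilon \in (0,1)$, the algorithm \textnormal{LININT} already achieves the desired bound. In fact, I expect no new algorithm or analysis to be required—everything follows by combining two facts already proved earlier in the subsection.

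First, I would invoke Theorem \ref{2qup} with $q = 1+\varepsilon \in (1,2)$, which directly gives the bound $\mathscr L_2(\textnormal{LININT}, \mathcal F_{1+\varepsilon}) \le \frac{1}{\varepsilon}$. This handles the $p=2$ case on the nose, and also provides a ceiling on the sum of squared errors made by \textnormal{LININT} against any target function in $\mathcal F_{1+\varepsilon}$, for any sequence of inputs.

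Next, for $p > 2$ I would apply Corollary \ref{linintp} with $p' = p$ and the ``base'' exponent equal to $2$. The corollary's proof used the crucial observation (via Lemma \ref{linint1}) that \textnormal{LININT} never commits a per-trial error exceeding $1$ when the target lies in $\mathcal F_q$ for $q>1$; hence $|\hat y_i - f(x_i)|^p \le |\hat y_i - f(x_i)|^2$ on every trial $i \ge 1$. Summing over trials yields
\[
\mathscr L_p(\textnormal{LININT}, \mathcal F_{1+\varepsilon}) \le \mathscr L_2(\textnormal{LININT}, \mathcal F_{1+\varepsilon}) \le \frac{1}{\varepsilon}.
\]
Since $\opt_p(\mathcal F_{1+\varepsilon}) \le \mathscr L_p(\textnormal{LININT}, \mathcal F_{1+\varepsilon})$ by definition of $\opt_p$, we conclude that $\opt_p(\mathcal F_{1+\varepsilon}) \le \varepsilon^{-1}$ for every $p \ge 2$ and $\varepsilon \in (0,1)$, which is exactly $O(\varepsilon^{-1})$ with an implied constant of $1$ independent of $p$.

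There is really no obstacle here: Corollary \ref{linintp} is stated for $p' > p > 1$, so it covers $p > 2$ directly with the base exponent $2$, and the $p = 2$ case is just Theorem \ref{2qup}. The only mild care is the boundary case $p = 2$, which requires nothing beyond Theorem \ref{2qup} itself, so no separate argument is needed.
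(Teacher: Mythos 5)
Your proposal is correct and is essentially identical to the paper's own proof: the paper also bounds $\opt_p(\mathcal F_{1+\varepsilon}) \le \mathscr L_p(\textnormal{LININT},\mathcal F_{1+\varepsilon}) \le \mathscr L_2(\textnormal{LININT},\mathcal F_{1+\varepsilon}) = O(\varepsilon^{-1})$ via Corollary \ref{linintp} and Theorem \ref{2qup}. Your explicit handling of the boundary case $p=2$ (where Corollary \ref{linintp} is not needed) is a minor point of care that the paper glosses over.
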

\begin{proof}
By Lemma \ref{linintp}, $\opt_p(\mathcal F_{1+\varepsilon}) \le \mathscr L_p(\text{LININT},\mathcal F_{1+\varepsilon}) \le \mathscr L_2(\text{LININT},\mathcal F_{1+\varepsilon})=O(\varepsilon^{-1})$.
\end{proof}

\subsection{An exact result for large $p$}\label{pqupper}

In this section, we prove that for $q \in (1,2)$ and $p \ge 2+\frac{1}{q-1}$, $\opt_p(\mathcal F_q)=1$. This first requires the following lemma.

\begin{lem}\label{uv}
For reals $q \in (1,2)$, $a \in (0,1)$, and $u,v$ satisfying $|u-v| \ge \frac{(q-1)^{q-1}}{a(1-a)}$, we have \[ a|u|^q+(1-a)|v|^q>1. \]
\end{lem}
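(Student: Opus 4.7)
The plan is to reduce the lemma to a clean inequality in $q$ and $a$ via Hölder's inequality, and then verify that inequality by one-variable calculus.

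First, I would apply Hölder's inequality with conjugate exponents $q$ and $q/(q-1)$ to the decomposition $|u|+|v| = a^{1/q}|u|\cdot a^{-1/q} + (1-a)^{1/q}|v|\cdot (1-a)^{-1/q}$. Combined with $|u-v|\le|u|+|v|$ and the identity
\[
a^{-1/(q-1)}+(1-a)^{-1/(q-1)} = \frac{a^{1/(q-1)}+(1-a)^{1/(q-1)}}{[a(1-a)]^{1/(q-1)}},
\]
this yields the sharp lower bound
\[
a|u|^q+(1-a)|v|^q \ge \frac{|u-v|^q \cdot a(1-a)}{\bigl[a^{1/(q-1)}+(1-a)^{1/(q-1)}\bigr]^{q-1}}.
\]
Substituting the hypothesis $|u-v|\ge (q-1)^{q-1}/[a(1-a)]$ and grouping powers of $q-1$ shows that the lemma reduces to the strict inequality
\[
(q-1)^q > a(1-a)\bigl[a^{1/(q-1)}+(1-a)^{1/(q-1)}\bigr] \qquad \text{for all } q\in(1,2),\ a\in(0,1).
\]

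Next, I would bound the right side by a function of a single variable. Writing $M=\max(a,1-a)\in[1/2,1)$ and $t = 1/(q-1) \ge 1$, the elementary bound $x^t \le x\cdot M^{t-1}$ applied to $x\in\{a,1-a\}$ yields $a^{1/(q-1)}+(1-a)^{1/(q-1)}\le M^{(2-q)/(q-1)}$, so the right side is at most $(1-M)\,M^{1/(q-1)}$. A short calculus computation shows that on $[1/2,1)$ this quantity is maximized at $M=1/q$ (which is admissible because $q\in(1,2)$), with maximum value $(q-1)\,q^{-1-1/(q-1)}$. Hence it suffices to prove $(q-1)^q > (q-1)\,q^{-1-1/(q-1)}$, which after taking logarithms and clearing fractions is equivalent to $\phi(q):=(q-1)^2\ln(q-1)+q\ln q>0$ for $q\in(1,2)$.

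Finally, I would verify $\phi>0$ by a monotonicity argument. Since $(q-1)^2\ln(q-1)\to 0$ as $q\to 1^+$, we have $\phi(1^+)=0$, so it is enough to show $\phi'(q)>0$ on $(1,2)$. Differentiation gives $\phi'(q)=2(q-1)\ln(q-1)+q+\ln q$. The standard bound $\min_{x\in(0,1)} x\ln x = -1/e$ yields $2(q-1)\ln(q-1)\ge -2/e$, while $q+\ln q\ge 1$ for $q\ge 1$, so $\phi'(q)\ge 1-2/e>0$.

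The main obstacle will be identifying the correct Hölder weights to obtain a lower bound that matches the hypothesis exactly, and then navigating the cascade of algebraic simplifications down to the clean one-variable inequality $\phi(q)>0$; once Hölder is applied with the right decomposition, each subsequent step is routine.
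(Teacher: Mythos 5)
Your proof is correct. The opening move coincides with the paper's: the weighted power mean (H\"older) inequality gives
$a|u|^q+(1-a)|v|^q \ge (|u|+|v|)^q\big/\bigl(a^{-1/(q-1)}+(1-a)^{-1/(q-1)}\bigr)^{q-1}$,
but from there the two arguments genuinely diverge. The paper splits into cases according to the signs of $u$ and $v$ (it can only use $|u|+|v|=|u-v|$ when $v<0<u$, and handles the same-sign cases by a separate direct estimate), then bounds the denominator crudely by $2^{q-1}\max\{a^{-1},(1-a)^{-1}\}$, applies a weighted AM--GM to control $\max\{a^{q-1}(1-a)^q,\,a^q(1-a)^{q-1}\}$, and is left verifying $(q-1)^2\ln(q-1)+(2q-1)\ln(2q-1)-(q-1)\ln 2-q\ln q>0$, which it does by checking the sign of the derivative on four subintervals of $(1,2)$. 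You instead use $|u-v|\le|u|+|v|$ to treat all sign configurations uniformly, rewrite the denominator so the problem becomes the clean two-variable inequality $(q-1)^q>a(1-a)\bigl(a^{1/(q-1)}+(1-a)^{1/(q-1)}\bigr)$, and then optimize exactly over $M=\max(a,1-a)$ (critical point $M=1/q$, which does lie in $[1/2,1)$ for $q\in(1,2)$), leaving the simpler target $\phi(q)=(q-1)^2\ln(q-1)+q\ln q>0$, dispatched by the uniform bound $\phi'(q)\ge 1-2/e>0$ together with $\phi(1^+)=0$. Your route buys a case-free argument and replaces the paper's interval-by-interval derivative check with a one-line monotonicity argument; the paper's version pays for its looser intermediate bounds with the extra casework and the longer final verification, but uses only off-the-shelf inequalities rather than an optimization step. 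I verified the individual steps: the H\"older weights produce exactly the stated bound, $a(1-a)M^{(2-q)/(q-1)}=(1-M)M^{1/(q-1)}$, the maximum value is $(q-1)q^{-1-1/(q-1)}$, and $q+\ln q\ge 1$ combined with $x\ln x\ge -1/e$ gives the claimed derivative bound, so the strict inequality survives the whole chain.
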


\begin{proof}
Without loss of generality, suppose $u>v$, so that $u \ge v+\frac{(q-1)^{q-1}}{a(1-a)}$.

First, suppose $v<0<u$; then $|u|+|v| \ge \frac{(q-1)^{q-1}}{a(1-a)}$. By the weighted power mean inequality, \begin{align*} \frac{|u|(a|u|^{q-1})+|v|((1-a)|v|^{q-1})}{|u|+|v|} & \ge \left(\frac{a^{-\frac{1}{q-1}}+(1-a)^{-\frac{1}{q-1}}} {|u|+|v|}\right)^{-(q-1)} \\ \implies a|u|^q+(1-a)|v|^q & \ge \frac{(|u|+|v|)^q}{\left(a^{-\frac{1}{q-1}}+(1-a)^{-\frac{1}{q-1}}\right)^{q-1}} \\ & \ge \frac{(q-1)^{q(q-1)}}{a^q(1-a)^q\left(a^{-\frac{1}{q-1}}+(1-a)^{-\frac{1}{q-1}}\right)^{q-1}} \\ & \ge \frac{(q-1)^{q(q-1)}}{2^{q-1}a^q(1-a)^q\max\left\{a^{-1},(1-a)^{-1}\right\}} \\ &= \frac{(q-1)^{q(q-1)}}{2^{q-1}\max\left\{a^{q-1}(1-a)^q,a^q(1-a)^{q-1}\right\}}. \end{align*} 
By the weighted arithmetic mean - geometric mean inequality, for $r \in (0, 1)$ we have \begin{align*} r^q(1-r)^{q-1} &= \frac{q^q}{(q-1)^q}\left(\frac{(q-1)r}{q}\right)^q(1-r)^{q-1} \\ & \le \frac{q^q}{(q-1)^q}\left(\frac{q \cdot \frac{(q-1)r}{q}+(q-1)(1-r)}{(q-1)+q}\right)^{(q-1)+q} \\ &= \frac{q^q}{(q-1)^q}\left(\frac{q-1}{2q-1}\right)^{2q-1}. \end{align*}
Thus \[ \max\left\{a^{q-1}(1-a)^q,a^q(1-a)^{q-1}\right\} \le \frac{q^q}{(q-1)^q}\left(\frac{q-1}{2q-1}\right)^{2q-1}=\frac{q^q(q-1)^{q-1}}{(2q-1)^{2q-1}}, \] so \[ a|u|^q+(1-a)|v|^q \ge \frac{(q-1)^{(q-1)^2}(2q-1)^{2q-1}}{2^{q-1}q^q}. \] Consider \[ f(q)=(q-1)^2\ln(q-1)+(2q-1)\ln(2q-1)-(q-1)\ln 2-q\ln q \] over $q \in (1,2)$. Note that \begin{align*} f'(q) &= (2(q-1)\ln(q-1)+(q-1))+(2\ln(2q-1)+2)-\ln 2-(\ln q+1) \\ &= 1-\ln 2+2(q-1)\ln(q-1)+(q-1-\ln q)+2\ln(2q-1) \\ & \ge 1-\ln 2+2(q-1)\ln(q-1)+2\ln(2q-1), \end{align*} as $e^x \ge 1+x$ implies that $x \ge \ln(1+x)$ for $x>-1$. Since $x\ln x$ is decreasing on $\left(0,\frac{1}{e}\right)$ and increasing on $\left(\frac{1}{e},\infty\right)$ (so in particular $x \ln x \ge -\frac{1}{e}$ for $x>0$), \begin{align*} q \in (1,1.004] & \implies f'(q) \ge 1-\ln 2+0.008\ln 0.004>0 \\ q \in [1.004,1.055] & \implies f'(q) \ge 1-\ln 2+0.11\ln 0.055+2\ln 1.008>0 \\ q \in [1.055,1.12] & \implies f'(q) \ge 1-\ln 2+0.24\ln 0.12+2\ln 1.11>0 \\ q \in [1.12,2) & \implies f'(q) \ge 1-\ln 2-\frac{2}{e}+2\ln 1.24>0, \end{align*} so for all $q \in (1,2)$, $f'(q)>0$. As $\displaystyle\lim_{q \to 1^+}f(q)=0$, it follows that $f(q)>0$ for $q \in (1,2)$, so $a|u|^q+(1-a)|v|^q \ge e^{f(q)}>1$ whenever $v<0<u$.

Now suppose $v \ge 0$. As $|x|^q$ is increasing for $x \ge 0$, \[ a|u|^q+(1-a)|v|^q \ge a\left(\frac{(q-1)^{q-1}}{a(1-a)}\right)^q=\frac{(q-1)^{q(q-1)}}{a^{q-1}(1-a)^q}. \] Using the work above, \[ \frac{(q-1)^{q(q-1)}}{a^{q-1}(1-a)^q} \ge \frac{(q-1)^{q(q-1)}}{\max\left\{a^{q-1}(1-a)^q,a^q(1-a)^{q-1}\right\}}>2^{q-1}>1, \] so the inequality holds whenever $v \ge 0$. The case $u \le 0$ is identical, which completes the proof.
\end{proof}

With this, we have the following key result.

\begin{lem}\label{pqboth}
Fix $q \in (1,2)$, a nonempty set $S=\{(u_1,v_1),\ldots,(u_k,v_k)\}$ of points in $[0,1] \times \mathbb R$, and $(x,y) \in [0,1] \times \mathbb R$ such that $u_1<\ldots<u_k$, $x \neq u_i$ for any $1 \le i \le k$, and $J_q\left[f_{S \cup \{(x,y)\}}\right] \le 1$. Let $p=2+\frac{1}{q-1}$. Then \[ J_q\left[f_{S \cup \{(x,y)\}}\right]-J_q[f_S] \ge |y-f_S(x)|^p. \]
\end{lem}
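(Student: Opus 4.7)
The plan is to mimic the proof of Lemma \ref{2qboth}, keeping its case split (boundary $x<u_1$ or $x>u_k$ versus interior $u_i<x<u_{i+1}$), but injecting an extra \emph{a priori} bound on $d := y - f_S(x)$ extracted from Lemma \ref{uv} that will boost the $d^2$ and $|d|^q$ estimates to the desired $|d|^p$. For the boundary case $x<u_1$ (with $x>u_k$ symmetric), I'd note that the increment equals exactly $(u_1-x)^{1-q}|d|^q$, the hypothesis $J_q[f_{S \cup \{(x,y)\}}] \le 1$ forces $|d| \le 1$, and since $(u_1-x)^{1-q} \ge 1$ and $p \ge q$ for $q \in (1,2)$, the desired inequality follows immediately.

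For the interior case I'd adopt Lemma \ref{2qboth}'s parameters $a = x-u_i$, $b = u_{i+1}-x$, $m = (v_{i+1}-v_i)/(a+b)$, $s_1 = m+d/a$, $s_2 = m-d/b$, so the increment equals $a|s_1|^q + b|s_2|^q - (a+b)|m|^q$. The subcase $m = 0$ reduces to $|d|^q(a^{1-q}+b^{1-q}) \ge 2|d|^q \ge |d|^p$ via $a,b \le 1$ and $|d| \le 1$. For $m \ne 0$ I split on whether $d/m \in (-a,b)$, applying Lemma \ref{2qin} (interior subcase) or Corollary \ref{2qoutcor} (exterior subcase). The decisive new ingredient is to rescale $\tilde s_j := s_j(a+b)^{1/q}$: since the two new sub-segments contribute $a|s_1|^q + b|s_2|^q$ to $J_q[f_{S \cup \{(x,y)\}}] \le 1$, we get $\frac{a}{a+b}|\tilde s_1|^q + \frac{b}{a+b}|\tilde s_2|^q \le 1$, and the contrapositive of Lemma \ref{uv} (with $a' = a/(a+b)$) unpacks to $|d| < (q-1)^{q-1}(a+b)^{(q-1)/q}$.

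In the exterior subcase, Corollary \ref{2qoutcor} yields a lower bound of $(q-1)|d|^q$, reducing the claim to $|d|^{p-q} \le q-1$; raising the $|d|$-bound to the power $p-q$ and using the identity $(q-1)(p-q) = (q-1)(2-q) + 1 \ge 1$ for $q \in (1,2)$ gives $|d|^{p-q} < (q-1)^{(q-1)(p-q)} \le q-1$ (since $q-1 \in (0,1)$). In the interior subcase, Lemma \ref{2qin} yields a lower bound of $2q(q-1)d^2/(|m|^{2-q}(a+b))$, reducing the claim to $|m|^{2-q}(a+b)|d|^{p-2} \le 2q(q-1)$; Lemma \ref{fsmin} gives $|m|^q(a+b) \le 1$, hence $|m|^{2-q}(a+b) \le (a+b)^{(2q-2)/q}$, while raising the $|d|$-bound to $p-2 = 1/(q-1)$ and using $(q-1)(p-2) = 1$ yields $|d|^{p-2} < (q-1)(a+b)^{1/q}$. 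Multiplying, $|m|^{2-q}(a+b)|d|^{p-2} < (q-1)(a+b)^{(2q-1)/q} \le q-1 \le 2q(q-1)$. The hard part is spotting that Lemma \ref{uv}, seemingly a technical slope inequality, should be repurposed as an a priori cap on $|d|$, and verifying that the exponent identities $(q-1)(p-q) \ge 1$ and $(q-1)(p-2) = 1$ align the two resulting estimates with the Lemma \ref{2qin} and Corollary \ref{2qoutcor} bounds precisely.
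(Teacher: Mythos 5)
Your proof is correct and follows essentially the same route as the paper: the key shared idea is repurposing Lemma \ref{uv} (via its contrapositive) as an a priori cap on $|d|=|y-f_S(x)|$ in the interior case, combined with the boundary computation and the increment estimates from Lemmas \ref{2qin} and \ref{2qoutcor}. The only organizational difference is that the paper derives the cruder cap $|d|\le (q-1)^{q-1}$, notes this makes $(q-1)d^2\ge|d|^p$, and then invokes Lemma \ref{2qboth} as a black box, whereas you unfold that lemma's case analysis and thread a slightly sharper cap $|d|<(q-1)^{q-1}(a+b)^{(q-1)/q}$ through each branch.
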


\begin{proof}
We first show that $|y-f_S(x)|>(q-1)^{q-1}$ and $x \in (u_1,u_k)$ cannot both hold. Suppose otherwise, so that there exists an integer $1 \le i<k$ such that $u_i<x<u_{i+1}$. We will derive a contradiction by showing $J_q\left[f_{S \cup \{(x,y)\}}\right]>1$. Clearly \[ J_q\left[f_{S \cup \{(x,y)\}}\right] \ge (x-u_i)\left|\frac{y-v_i}{x-u_i}\right|^q+(u_{i+1}-x)\left|\frac{v_{i+1}-y}{u_{i+1}-x}\right|^q. \]
Substituting $a=x-u_i,b=u_{i+1}-x,d=y-f_S(x)$, and $m=\frac{v_{i+1}-v_i}{u_{i+1}-u_i}$ as in Lemma \ref{2qboth}, this rewrites as \[ J_q\left[f_{S \cup \{(x,y)\}}\right] \ge a\left|m+\frac{d}{a}\right|^q+b\left|m-\frac{d}{b}\right|^q. \] As $a+b=u_{i+1}-u_i \le 1$ and $q>1$, \[ J_q\left[f_{S \cup \{(x,y)\}}\right] \ge \frac{a}{a+b}\left|(a+b)m+\frac{d(a+b)}{a}\right|^q+\frac{b}{a+b}\left|(a+b)m-\frac{d(a+b)}{b}\right|^q, \] and because \[ |d|>(q-1)^{q-1} \implies \left|d(a+b)\left(\frac{1}{a}+\frac{1}{b}\right)\right|=\frac{|d|(a+b)^2}{ab} \ge \frac{(q-1)^{q-1}}{\frac{a}{a+b} \cdot \frac{b}{a+b}}, \] applying Lemma \ref{uv} yields $J_q\left[f_{S \cup \{(x,y)\}}\right]>1$, contradiction.

Thus at least one of $|y-f_S(x)| \le (q-1)^{q-1}$ and $x \not \in (u_1,u_k)$ holds. If \[ |y-f_S(x)| \le (q-1)^{q-1} \implies (q-1)(y-f_S(x))^2 \ge |y-f_S(x)|^p, \] the result follows from Lemma \ref{2qboth}. Otherwise, assume without loss of generality that $x<u_1$ (the case $x>u_k$ is similar); then \[ J_q\left[f_{S \cup \{(x,y)\}}\right]-J_q[f_S]=(u_1-x)\left|\frac{v_1-y}{u_1-x}\right|^q \ge |v_1-y|^q \ge |v_1-y|^p=|y-f_S(x)|^p \] by Lemma \ref{linint1} (as $q<2<p$) and the result holds in this case as well.
\end{proof}

This immediately yields the following.

\begin{thm}\label{pq}
For any reals $q>1$ and $p \ge 2+\frac{1}{q-1}$, we have $\opt_p(\mathcal F_q)=1$.
\end{thm}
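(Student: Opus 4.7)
The plan is to establish the matching bounds $\opt_p(\mathcal F_q) \ge 1$ and $\opt_p(\mathcal F_q) \le 1$. The lower bound is immediate from Proposition \ref{pq1}, so all the work is in the upper bound.

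For the upper bound, I would separate two cases. When $q \ge 2$, the hypothesis $p \ge 2 + \frac{1}{q-1}$ implies $p \ge 2$, and $\opt_p(\mathcal F_q) = 1$ in this range is already known from \cite{kl}. The genuinely new case is $q \in (1,2)$, where $p^* := 2 + \frac{1}{q-1} > 3$. For this case I would show that LININT achieves the upper bound, following the same template as the proof of Theorem \ref{2qup} but substituting Lemma \ref{pqboth} in place of Lemma \ref{2qboth}.

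Concretely, fix $f \in \mathcal F_q$, an integer $m \ge 1$, and an input sequence $\sigma = (x_0, \ldots, x_m)$, and assume without loss of generality that the $x_i$ are distinct. Let $S_i = \{(x_0, f(x_0)), \ldots, (x_i, f(x_i))\}$, so LININT's guess on trial $i \ge 1$ is $\hat y_i = f_{S_{i-1}}(x_i)$. By Lemma \ref{fsmin}, $J_q[f_{S_i}] \le J_q[f] \le 1$ for every $i$, which verifies the hypothesis $J_q[f_{S_{i-1} \cup \{(x_i, f(x_i))\}}] \le 1$ of Lemma \ref{pqboth} at every step. Applying that lemma with $S = S_{i-1}$ and $(x, y) = (x_i, f(x_i))$ and summing over $i$ telescopes to
\[
\sum_{i=1}^m |\hat y_i - f(x_i)|^{p^*} \le \sum_{i=1}^m \bigl(J_q[f_{S_i}] - J_q[f_{S_{i-1}}]\bigr) = J_q[f_{S_m}] - J_q[f_{S_0}] \le J_q[f] \le 1,
\]
where $J_q[f_{S_0}] = 0$ because $f_{S_0}$ is constant. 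This gives $\mathscr L_{p^*}(\text{LININT}, \mathcal F_q) \le 1$, hence $\opt_{p^*}(\mathcal F_q) \le 1$.

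To extend to $p > p^*$, I would invoke Corollary \ref{linintp}, which yields $\mathscr L_p(\text{LININT}, \mathcal F_q) \le \mathscr L_{p^*}(\text{LININT}, \mathcal F_q) \le 1$. The main conceptual obstacle has already been cleared by Lemma \ref{pqboth}; the present argument is just a clean telescoping identity with no further computational difficulty, and the only step requiring care is splitting off the $q \ge 2$ case so that one can cite prior work for it.
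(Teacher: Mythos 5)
Your proposal is correct and follows essentially the same route as the paper: lower bound from Proposition \ref{pq1}, upper bound via LININT by telescoping $J_q[f_{S_i}]$ with Lemma \ref{fsmin} and Lemma \ref{pqboth}, and extension to larger $p$ via Corollary \ref{linintp}. Your explicit verification of the hypothesis $J_q[f_{S_i}]\le 1$ and the separation of the $q\ge 2$ case are details the paper leaves implicit, but the argument is the same.
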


\begin{proof}
By Proposition \ref{pq1} and Corollary \ref{linintp}, it suffices to prove that for $q \in (1,2)$ and $p=2+\frac{1}{q-1}$, $\mathscr L_p(\text{LININT},\mathcal F_q) \le 1$. Fix $p=2+\frac{1}{q-1}$, a target function $f \in \mathcal F_q$, an integer $m \ge 1$, and a sequence of inputs $\sigma=(x_0,\ldots,x_m) \in [0,1]^{m+1}$. Assume without loss of generality that all $x_i$ are distinct. For $0 \le i \le m$, define $S_i=\{(x_0,f(x_0)),\ldots,(x_i,f(x_i))\}$, and suppose LININT produces guesses $\hat y_0,\ldots,\hat y_m \in \mathbb R$. By Lemma \ref{fsmin} and Lemma \ref{pqboth}, \[ 1 \ge J_q[f] \ge J_q\left[f_{S_m}\right]=\sum_{i=1}^m\left(J_q\left[f_{S_i}\right]-J_q\left[f_{S_{i-1}}\right]\right) \ge \sum_{i=1}^m|\hat y_i-f(x_i)|^p, \] so $\mathscr L_p(\text{LININT},f,\sigma) \le 1$ for any $f \in \mathcal F_q$ and $\sigma$. Thus $\mathscr L_p(\text{LININT},\mathcal F_q) \le 1$.
\end{proof}

\subsection{Sharp bounds for $p \in (1, 2)$}\label{pql2}

The paper \cite{kl} showed that $\opt_{1+\varepsilon}(\mathcal{F}_q) = O(\varepsilon^{-1})$ for all $\varepsilon \in (0, 1)$ and $q \ge 2$. In this section, we first improve their upper bound by proving that $\opt_{1+\varepsilon}(\mathcal{F}_q) = O(\varepsilon^{-\frac{1}{2}})$ for all $\varepsilon \in (0, 1)$ and $q \ge 2$. Then we show that this bound is sharp by proving that $\opt_{1+\varepsilon}(\mathcal{F}_q) = \Omega(\varepsilon^{-\frac{1}{2}})$ for all $q \ge 1$. In order to prove the upper bound, we use two lemmas from \cite{kl}. To state the lemmas and prove our upper bound, we use the following notation. Let $x_0, \ldots, x_m$ be any sequence of distinct elements of $[0, 1]$, and let $f \in \mathcal{F}_2$. Let $\hat{y}_1, \ldots, \hat{y}_m$ be LININT's predictions on trials $1, \ldots, m$. For each $i > 1$, let $d_i = \min_{j < i} |x_j - x_i|$ and let $e_i = |\hat{y}_i - f(x_i)|$.

\begin{lem}[\cite{kl}] \label{ei2di} 
For all positive integers $m$, we have $\sum_{i = 1}^m \frac{e_i^2}{d_i} \le 1$.
\end{lem}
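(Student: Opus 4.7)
The plan is to use a potential-function argument with potential $J_2[f_{S_i}]$, where $S_i = \{(x_0, f(x_0)), \ldots, (x_i, f(x_i))\}$ denotes the revealed data through trial $i$. Combining the $q = 2$ version of Lemma \ref{fsmin}, which gives $J_2[f] \ge J_2[f_{S_m}]$, with the assumption $J_2[f] \le 1$, I would reduce the problem to showing that $J_2[f_{S_i}] - J_2[f_{S_{i-1}}] \ge e_i^2 / d_i$ holds for each $i \ge 1$; telescoping then finishes the proof.

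For the per-trial estimate, I would split into cases based on where $x_i$ lies among the previously revealed $x$-coordinates $u_1 < \ldots < u_k$ in $S_{i-1}$. In the interior case $u_a < x_i < u_{a+1}$, write $A = x_i - u_a$ and $B = u_{a+1} - x_i$, let $m = (v_{a+1} - v_a)/(u_{a+1} - u_a)$ denote the slope of the old segment, and set $d = f(x_i) - \hat{y}_i$, noting that $|d| = e_i$ since LININT predicts $\hat{y}_i = f_{S_{i-1}}(x_i)$. Replacing the single old segment of length $A+B$ and slope $m$ by two new segments of slopes $m + d/A$ and $m - d/B$ and expanding, one gets
\[
J_2[f_{S_i}] - J_2[f_{S_{i-1}}] = A\bigl(m + \tfrac{d}{A}\bigr)^2 + B\bigl(m - \tfrac{d}{B}\bigr)^2 - (A + B)m^2 = d^2\Bigl(\tfrac{1}{A} + \tfrac{1}{B}\Bigr),
\]
where the cross terms in $m$ cancel. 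Since $u_a, u_{a+1}$ are the nearest previously revealed points on either side of $x_i$, we have $d_i = \min(A, B)$, so $1/A + 1/B \ge 1/d_i$ and the increment is at least $e_i^2 / d_i$.

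For the boundary case $x_i < u_1$ (the case $x_i > u_k$ is symmetric), LININT predicts $\hat{y}_i = v_1$, so $e_i = |f(x_i) - v_1|$ and $d_i = u_1 - x_i$; the only change from $f_{S_{i-1}}$ to $f_{S_i}$ is a new segment from $(x_i, f(x_i))$ to $(u_1, v_1)$ contributing exactly $(u_1 - x_i) \cdot ((f(x_i) - v_1)/(u_1 - x_i))^2 = e_i^2/d_i$. Summing the per-trial increments over $i = 1, \ldots, m$ and applying $J_2[f] \le 1$ yields the desired bound.

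The proof is fundamentally an algebraic computation and poses no serious obstacle. The step that most deserves care is the identification $d_i = \min(A, B)$ in the interior case: it uses only the definition $d_i = \min_{j < i}|x_j - x_i|$ together with the fact that all other previously revealed $x$-coordinates lie further from $x_i$ than $u_a$ and $u_{a+1}$.
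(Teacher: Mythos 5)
Your proof is correct and is essentially the standard argument from Kimber--Long (the paper only cites this lemma rather than reproving it): the telescoping potential $J_2[f_{S_i}]$, the exact identity $J_2[f_{S_i}]-J_2[f_{S_{i-1}}]=d^2(\tfrac{1}{A}+\tfrac{1}{B})$ in the interior case, and the bound $\tfrac{1}{A}+\tfrac{1}{B}\ge \tfrac{1}{\min(A,B)}=\tfrac{1}{d_i}$ are exactly the mechanism behind the paper's Lemma \ref{kl_jlem} and Lemma \ref{2qboth}. The one step worth flagging is your appeal to ``the $q=2$ version of Lemma \ref{fsmin},'' which the paper states only for $q\in(1,2)$, but its Jensen-inequality proof goes through verbatim at $q=2$, so this is fine.
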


\begin{lem}[\cite{kl}] \label{sumditop}
For all positive integers $m$ and real numbers $x > 1$, we have $\sum_{i = 1}^m d_i^x \le 1 + \frac{1}{2^x - 2}$.
\end{lem}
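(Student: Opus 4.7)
The plan is a telescoping potential argument. For $i \ge 0$, define $\Phi_i = \sum_k \ell_k^x$ where $\ell_k$ ranges over the lengths of the subintervals between consecutive placed points $x_0, \ldots, x_i$ in sorted order; in particular $\Phi_0 = 0$. For $i \ge 1$, classify the move as \emph{boundary} if $x_i$ lies outside the convex hull of $\{x_0,\ldots,x_{i-1}\}$ (which subsumes the $i = 1$ case), or as \emph{interior} otherwise. A boundary move creates a new subinterval of length $d_i$, so $\Delta \Phi_i = d_i^x$; an interior move splits an existing subinterval of length $\ell = a + b$ into pieces $a, b$ with $d_i = \min(a, b)$, so $\Delta \Phi_i = a^x + b^x - \ell^x \le 0$.

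The crux is the pointwise inequality
\[
\min(a, b)^x \bigl(2^x - 2\bigr) \;\le\; (a + b)^x - a^x - b^x
\]
for $a, b > 0$ and $x > 1$. Normalizing via $t = \min(a, b)/(a + b) \in [0, \tfrac12]$, this reduces to $g(t) := (2t)^x + (1 - t)^x - 1 - t^x \le 0$ on $[0, \tfrac12]$. I would check that $g(0) = g(\tfrac12) = 0$, compute $g'(t) = x\bigl[(2^x - 1)\,t^{x-1} - (1 - t)^{x-1}\bigr]$, and observe that the bracketed expression is strictly increasing in $t$, negative at $t = 0$, and positive at $t = \tfrac12$; thus $g$ is first decreasing and then increasing on $[0, \tfrac12]$ and $g \le 0$ on the whole interval.

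Summing the key inequality over interior moves gives $(2^x - 2)\,I \le T$, where $I$ is the total interior contribution and $T := \sum_{\text{interior}} (\ell_i^x - a_i^x - b_i^x)$. Let $B$ be the total boundary contribution, so $S := \sum_{i=1}^m d_i^x = B + I$. A telescoping computation of $\Phi_m$ gives $\Phi_m = B - T$, and $\Phi_m \ge 0$ yields $T \le B$; hence $S \le B + B/(2^x - 2) = B\cdot(2^x - 1)/(2^x - 2)$. The intervals tallied by $B$ (the initial interval from $i = 1$ together with those created by subsequent boundary moves) are pairwise disjoint subsets of $[0, 1]$, so their lengths sum to at most $1$; combined with the inequality $\sum_i a_i^x \le (\sum_i a_i)^x$ for $x \ge 1$ and $a_i \ge 0$, this gives $B \le 1$. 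Therefore $S \le (2^x - 1)/(2^x - 2) = 1 + 1/(2^x - 2)$.

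The main obstacle is the pointwise inequality above, which is where the denominator $2^x - 2$ enters and which drives the constant in the bound; the rest of the argument is careful bookkeeping through the potential. The bound is tight and is approached by the binary-subdivision placement $x_0 = 0, x_1 = 1, x_2 = \tfrac12, x_3 = \tfrac14, x_4 = \tfrac34, \ldots$, for which every interior move occurs in the middle of its interval and saturates the key inequality.
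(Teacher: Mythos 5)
Your proposal is correct, and it is worth noting that the paper itself offers no proof of this lemma at all: it is imported verbatim from \cite{kl}, so there is no in-paper argument to compare against. Your potential-function proof is a valid, self-contained derivation. The decomposition into boundary and interior moves is sound (a boundary move adds exactly one new gap of length $d_i$, an interior move splits a gap of length $a+b$ with $d_i=\min(a,b)$), the pointwise inequality $\min(a,b)^x(2^x-2)\le (a+b)^x-a^x-b^x$ is correctly reduced to $g(t)=(2^x-1)t^x+(1-t)^x-1\le 0$ on $[0,\tfrac12]$ and correctly verified via the sign pattern of $g'$, and the bookkeeping $S=B+I$, $\Phi_m=B-T\ge 0$, $(2^x-2)I\le T\le B$, $B\le 1$ (disjoint boundary intervals plus superadditivity of $t\mapsto t^x$) closes the argument. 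The extremal binary-subdivision example confirming $1+\sum_{k\ge 1}2^{k-1}2^{-kx}=1+\frac{1}{2^x-2}$ is also right. The one point I would tighten in a written version: state explicitly that for an interior move the nearest earlier point to $x_i$ is one of the two endpoints of the containing gap (so that $d_i=\min(a,b)$ is genuinely the minimum over \emph{all} $j<i$), and that the boundary-created intervals tile the final convex hull of the inputs, which is why their lengths sum to at most $1$.
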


By combining Lemmas \ref{ei2di} and \ref{sumditop} with H\"older's inequality, we obtain the following sharp upper bound.

\begin{thm}\label{upperbound}
If $p = 1+\varepsilon \in (1,2)$, then $\opt_{p}(\mathcal{F}_2) = O(\varepsilon^{-\frac{1}{2}})$.
\end{thm}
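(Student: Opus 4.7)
The plan is to apply H\"older's inequality to a product decomposition of $e_i^{1+\varepsilon}$ tuned so that the two resulting factors are precisely the quantities controlled by Lemmas \ref{ei2di} and \ref{sumditop}. Specifically, I would write
\[
e_i^{1+\varepsilon} = \left(\frac{e_i^2}{d_i}\right)^{(1+\varepsilon)/2} \cdot d_i^{(1+\varepsilon)/2}
\]
and then apply H\"older with conjugate exponents $r = \frac{2}{1+\varepsilon}$ and $s = \frac{2}{1-\varepsilon}$ (one checks $\frac{1}{r}+\frac{1}{s}=1$ and that $a_i^r = e_i^2/d_i$). This yields
\[
\sum_{i=1}^m e_i^{1+\varepsilon} \;\le\; \left(\sum_{i=1}^m \frac{e_i^2}{d_i}\right)^{(1+\varepsilon)/2} \left(\sum_{i=1}^m d_i^{(1+\varepsilon)/(1-\varepsilon)}\right)^{(1-\varepsilon)/2}.
\]

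By Lemma \ref{ei2di}, the first factor is at most $1$. For the second factor, set $x = \frac{1+\varepsilon}{1-\varepsilon}$, which is strictly greater than $1$, so Lemma \ref{sumditop} gives $\sum_{i=1}^m d_i^{x} \le 1 + \frac{1}{2^x - 2}$. It remains to verify that $\left(1 + \frac{1}{2^x - 2}\right)^{(1-\varepsilon)/2} = O(\varepsilon^{-1/2})$. Since $x - 1 = \frac{2\varepsilon}{1-\varepsilon} \ge 2\varepsilon$, the elementary bound $2^t - 1 \ge t\ln 2$ for $t \ge 0$ gives $2^x - 2 = 2(2^{x-1} - 1) \ge 4\varepsilon\ln 2$, so $\sum d_i^x = O(1/\varepsilon)$. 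Raising to the power $(1-\varepsilon)/2$ and using that $\varepsilon^{\varepsilon/2}$ is bounded on $(0,1)$ yields the desired $O(\varepsilon^{-1/2})$ bound.

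Since this bound holds for every target $f \in \mathcal{F}_2$ and every input sequence, we conclude $\mathscr L_{1+\varepsilon}(\text{LININT}, \mathcal{F}_2) = O(\varepsilon^{-1/2})$, hence $\opt_{1+\varepsilon}(\mathcal{F}_2) = O(\varepsilon^{-1/2})$. I do not anticipate a serious obstacle: the one creative step is choosing H\"older exponents so that the two available ingredients slot in cleanly, and after that everything reduces to a short asymptotic estimate of $1/(2^x - 2)$ near $x = 1$. The inclusion $\mathcal{F}_q \subseteq \mathcal{F}_2$ for $q \ge 2$ will then extend the upper bound to all such $q$ (matching the $q = \infty$ case stated in Theorem \ref{mainth}).
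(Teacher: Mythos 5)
Your proposal is correct and follows essentially the same route as the paper: the same H\"older decomposition (your $\bigl(e_i^2/d_i\bigr)^{(1+\varepsilon)/2}\cdot d_i^{(1+\varepsilon)/2}$ is identical to the paper's $\frac{e_i^p}{d_i^{p/2}}\cdot d_i^{p/2}$ with the same conjugate exponents), the same use of Lemmas \ref{ei2di} and \ref{sumditop}, and the same elementary estimate $2^x-2=\Omega(\varepsilon)$ near $x=1$. No gaps.
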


\begin{proof}
First, note that \[\sum_{i = 1}^m e_i^p = \sum_{i = 1}^m \frac{e_i^p}{d_i^{\frac{p}{2}}} \cdot d_i^{\frac{p}{2}}.\] By H\"older's inequality, we have \begin{align*}
 \sum_{i = 1}^m \frac{e_i^p}{d_i^{\frac{p}{2}}} \cdot d_i^{\frac{p}{2}} \le
\left(\sum_{i = 1}^m \frac{e_i^2}{d_i}\right)^{\frac{p}{2}} \left( \sum_{i = 1}^m  d_i^{\frac{p}{2-p}}\right)^{1 - \frac{p}{2}}.
\end{align*} Note that $\sum_{i = 1}^m \frac{e_i^2}{d_i} \le 1$ by Lemma \ref{ei2di} and \[\sum_{i = 1}^m  d_i^{\frac{p}{2-p}}  \le 1 + \frac{1}{2^{\frac{p}{2-p}}-2}\] by Lemma \ref{sumditop}, since $p > 1$ implies that $\frac{p}{2-p} > 1$. Thus \[\left(\sum_{i = 1}^m  d_i^{\frac{p}{2-p}}\right)^{1 - \frac{p}{2}} \le \left(1 + \frac{1}{2^{\frac{p}{2-p}}-2}\right)^{1-\frac{p}{2}}.\] Let $\delta = \frac{p}{2-p}-1$, and note that $\frac{1}{\delta} = \frac{2-p}{2p - 2}$. Thus \[\left(1 + \frac{1}{2^{\frac{p}{2-p}}-2}\right)^{1-\frac{p}{2}} = \left(1 + \frac{1}{2^{1+\delta}-2}\right)^{1-\frac{p}{2}} = O\left(\left(1+\frac{1}{\delta}\right)^{\frac{2-p}{2}}\right)=O\left(\left(\frac{p}{2p-2}\right)^{\frac{2-p}{2}}\right),\] where the upper bound follows from the fact that $e^{\delta \ln{2}} \ge 1+\delta \ln{2}$. Thus we have proved that \[\sum_{i = 1}^m e_i^p  = O\left(\left(\frac{p}{2p-2}\right)^{\frac{2-p}{2}}\right),\] so $\opt_{p}(\mathcal{F}_2) = O\left(\left(2p-2\right)^{-\frac{2-p}{2}}\right)$, where we use the fact that $p^{\frac{2-p}{2}} = \Theta(1)$ for $p \in (1,2)$ to obtain the last bound. Since $p = 1+\varepsilon$, we have \[\opt_{p}(\mathcal{F}_2) = O\left(\left(2p-2\right)^{-\frac{2-p}{2}}\right) = O\left(\varepsilon^{-\frac{1-\varepsilon}{2}}\right) = O(\varepsilon^{-\frac{1}{2}}),\] where we use the fact that $\varepsilon^{\varepsilon} = \Theta(1)$ for $\varepsilon \in (0, 1)$ to obtain the last bound.
\end{proof}

We obtain the next corollary since $\opt_p(\mathcal{F}_{\infty}) \le \opt_p(\mathcal{F}_r) \le \opt_p(\mathcal{F}_q)$ whenever $1 \le q \le r$.

\begin{cor}
\label{mainupper}
If $\varepsilon \in (0, 1)$, then $\opt_{1+\varepsilon}(\mathcal{F}_{\infty}) = O(\varepsilon^{-\frac{1}{2}})$ and $\opt_{1+\varepsilon}(\mathcal{F}_q) = O(\varepsilon^{-\frac{1}{2}})$ for all $q \ge 2$, where the constant does not depend on $q$.
\end{cor}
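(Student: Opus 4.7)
The corollary is a direct bookkeeping consequence of Theorem \ref{upperbound}, which already provides $\opt_{1+\varepsilon}(\mathcal{F}_2) = O(\varepsilon^{-\frac{1}{2}})$ with an implicit constant that depends only on $\varepsilon$. The plan is simply to invoke the class inclusions $\mathcal{F}_\infty \subseteq \mathcal{F}_q \subseteq \mathcal{F}_2$ for every $q \ge 2$, which were recorded in the introduction and follow from Jensen's inequality applied to $|f'|$ on $[0,1]$ (that is, $q \ge r$ implies $J_r[f] \le J_q[f]^{r/q}$, so $J_q[f] \le 1$ forces $J_r[f] \le 1$). These inclusions mean that any target available in the smaller class is also available in the larger one, hence the adversary in the smaller class is strictly weaker and $\opt_p(\mathcal{F}_\infty) \le \opt_p(\mathcal{F}_q) \le \opt_p(\mathcal{F}_2)$ for every $p \ge 1$ and every $q \ge 2$.

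Chaining with Theorem \ref{upperbound} therefore yields both $\opt_{1+\varepsilon}(\mathcal{F}_q) \le \opt_{1+\varepsilon}(\mathcal{F}_2) = O(\varepsilon^{-\frac{1}{2}})$ and $\opt_{1+\varepsilon}(\mathcal{F}_\infty) \le \opt_{1+\varepsilon}(\mathcal{F}_2) = O(\varepsilon^{-\frac{1}{2}})$. Uniformity in $q$ is automatic: the constant produced in Theorem \ref{upperbound} only arises from H\"older's inequality together with Lemmas \ref{ei2di} and \ref{sumditop}, none of which depend on $q$, so nothing in the bound degrades as $q$ varies over $[2, \infty]$. There is no real obstacle; the proof amounts to two applications of the monotonicity $q \mapsto \opt_p(\mathcal{F}_q)$ in the containment order, followed by quoting the previous theorem.
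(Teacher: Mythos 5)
Your proposal is correct and matches the paper's argument exactly: the paper also deduces the corollary from Theorem \ref{upperbound} via the monotonicity $\opt_p(\mathcal{F}_{\infty}) \le \opt_p(\mathcal{F}_q) \le \opt_p(\mathcal{F}_2)$ for $q \ge 2$, which follows from the Jensen-based inclusions $\mathcal{F}_\infty \subseteq \mathcal{F}_q \subseteq \mathcal{F}_2$ recorded in the introduction. Your observation that uniformity in $q$ is automatic because the single constant from Theorem \ref{upperbound} dominates all cases is also the intended justification.
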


In order to show that the last corollary is sharp up to a constant factor, we construct a family of functions in $\mathcal{F}_{\infty}$. Our proof uses the following lemma from \cite{kl} which was also used in \cite{long}.

\begin{lem}[\cite{kl}] \label{kl_jlem} 
Let $S \subseteq [0, 1] \times \mathbb{R}$ with $S = \left\{(u_i, v_i): 1 \le i \le m\right\}$ and $u_1 < u_2 < \cdots < u_m$. If $(x,y) \in [0, 1] \times \mathbb{R}$ and there exists $1 \le j \le m$ such that $|x-u_j| = |x-u_{j+1}| = \min_i |x-u_i|$, then $J_2[f_{S \cup \left\{(x,y) \right\}}] = J_2[f_S] + \frac{2(y-f_S(x))^2}{\min_i |x-u_i|}$.
\end{lem}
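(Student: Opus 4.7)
The plan is to observe that the hypothesis $|x - u_j| = |x - u_{j+1}| = \min_i |x - u_i|$ forces $x$ to be exactly the midpoint $(u_j + u_{j+1})/2$ of two consecutive $u$'s, with $\min_i |x - u_i| = (u_{j+1} - u_j)/2$. Because the piecewise-linear interpolants $f_S$ and $f_{S \cup \{(x,y)\}}$ agree on $[0,1]$ outside the open interval $(u_j, u_{j+1})$, computing $J_2[f_{S \cup \{(x,y)\}}] - J_2[f_S]$ reduces to comparing the contributions from this single interval.

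I would then compute both contributions directly. Writing $\Delta u = u_{j+1} - u_j$, the function $f_S$ is linear on $[u_j, u_{j+1}]$ with slope $(v_{j+1} - v_j)/\Delta u$, so its contribution to $J_2[f_S]$ is $(v_{j+1} - v_j)^2/\Delta u$. After inserting $(x, y)$, the function consists of two linear pieces, each of width $\Delta u/2$, giving a combined contribution
\[
\frac{2(y - v_j)^2 + 2(v_{j+1} - y)^2}{\Delta u}.
\]

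The final step is an algebraic simplification. Since $x$ is the midpoint, linear interpolation gives $f_S(x) = (v_j + v_{j+1})/2$. Setting $d = y - f_S(x)$ and $s = (v_{j+1} - v_j)/2$ yields $y - v_j = s + d$ and $v_{j+1} - y = s - d$, so $(y - v_j)^2 + (v_{j+1} - y)^2 = 2s^2 + 2d^2$, while $(v_{j+1} - v_j)^2 = 4s^2$. The difference of contributions is therefore
\[
\frac{4s^2 + 4d^2 - 4s^2}{\Delta u} = \frac{4d^2}{\Delta u} = \frac{2(y - f_S(x))^2}{\min_i |x - u_i|},
\]
which is the claimed identity. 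There is no substantive obstacle here; the lemma is essentially a direct computation, and the only thing worth noting is that the midpoint hypothesis simultaneously pins down $f_S(x)$ as the average of $v_j$ and $v_{j+1}$ and splits $[u_j, u_{j+1}]$ into two equal pieces, so that the cross terms between $d$ and $s$ cancel exactly.
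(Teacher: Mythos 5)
Your computation is correct: the midpoint hypothesis pins down $x=(u_j+u_{j+1})/2$ and $f_S(x)=(v_j+v_{j+1})/2$, the two interpolants differ only on $(u_j,u_{j+1})$, and the substitution $y-v_j=s+d$, $v_{j+1}-y=s-d$ makes the cross terms cancel exactly as you claim. The paper itself states this lemma as a citation from \cite{kl} and gives no proof, so there is nothing to compare against; your direct verification is the standard argument and fills in the omitted details correctly.
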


The method in the following proof is similar to one used in \cite{long} to obtain bounds for a finite variant of $\opt_1(\mathcal{F}_q)$ for $q \ge 2$ that depends on the number of trials $m$. 

\begin{thm}\label{lowerbound}
If $\varepsilon \in (0, 1)$, then $\opt_{1+\varepsilon}(\mathcal{F}_{\infty}) = \Omega(\varepsilon^{-\frac{1}{2}})$.
\end{thm}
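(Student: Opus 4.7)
The plan is to design an adaptive adversary strategy that, against any online algorithm, forces total $(1+\varepsilon)$-error of order $\varepsilon^{-1/2}$, with the revealed data consistent with some $f\in\mathcal{F}_\infty$. The construction is inspired by the binary-subdivision method of \cite{long} used for bounding $\opt_1(\mathcal{F}_q,m)$, adapted to the $(1+\varepsilon)$-loss.

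The adversary first reveals $f(0)=f(1)=0$. Then for each level $k=2,3,\ldots,N$, with $N$ chosen of order $1/\varepsilon$, it queries the $2^{k-2}$ new dyadic midpoints $(2j+1)/2^{k-1}$ in some fixed order. At each such query $x$, after receiving the learner's prediction $\hat y$, the adversary reveals $f(x)=f_S(x)+\sigma\,\delta_k$, where $f_S$ is the current linear interpolant, $\sigma\in\{\pm 1\}$ is chosen to maximize $|f(x)-\hat y|$, and the perturbation magnitude $\delta_k$ is of order $2^{-k}/\sqrt{N}$, chosen to Hölder-balance against the $J_2$ budget below.

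Each midpoint query is equidistant from two adjacent revealed points at distance $2^{-k}$, so by Lemma \ref{kl_jlem} the query increases $J_2$ by exactly $2\delta_k^2/2^{-k}=2^{k+1}\delta_k^2$. Summing across all queries,
\[
J_2[f_S]=\sum_{k=2}^N 2^{k-2}\cdot 2^{k+1}\delta_k^2=\sum_{k=2}^N 2^{2k-1}\delta_k^2=O(1),
\]
so the data is consistent with some $f\in\mathcal{F}_2$. On each round the adversary forces error at least $\delta_k$, so the total $(1+\varepsilon)$-error is at least
\[
\sum_{k=2}^N 2^{k-2}\delta_k^{1+\varepsilon}\;\approx\; N^{-(1+\varepsilon)/2}\sum_{k=2}^N 2^{-k\varepsilon},
\]
which, using $\sum_k 2^{-k\varepsilon}=\Theta(1/\varepsilon)$ for $N\varepsilon\gg 1$ together with $N\asymp 1/\varepsilon$, is of order $\varepsilon^{-1/2}$, matching Corollary \ref{mainupper}.

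The main obstacle is arguing that the revealed data is consistent not merely with some $f\in\mathcal{F}_2$, but with some $f\in\mathcal{F}_\infty$. The slope of the piecewise linear interpolant at a point is a signed sum of contributions $\pm 2^k\delta_k$ over all $N$ levels whose perturbation hat covers the point, and the unsigned sum $\sum_k 2^k\delta_k$ can a priori be of order $\sqrt{N}\gg 1$. The proof must use the flexibility the adversary has in choosing each $\sigma$ — in particular, against an optimal learner that predicts the midpoint value so that both $\pm\delta_k$ produce equal error — to select signs in a Haar-like cancelling pattern keeping the supremum slope bounded by $1$, so the data is genuinely realizable by a $1$-Lipschitz function. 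This sign-balancing argument is the delicate step on which the $\mathcal{F}_\infty$ lower bound (as opposed to the easier $\mathcal{F}_2$ one) hinges.
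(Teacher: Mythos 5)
Your construction has the right skeleton --- the same dyadic-midpoint adversary, the same use of Lemma \ref{kl_jlem} to track the $J_2$ budget, and perturbation magnitudes that Hölder-balance to give $\Theta(\varepsilon^{-1/2})$ --- but the step you yourself flag as delicate is a genuine gap, and the resolution you sketch does not work. The adversary does not have the freedom to choose the signs $\sigma$ in a ``Haar-like cancelling pattern'': to guarantee error at least $\delta_k$ on a given round, $\sigma$ must be whichever sign puts $f_S(x)+\sigma\delta_k$ farther from the learner's prediction $\hat y$, so the \emph{learner} controls the signs. A lower bound must hold against every algorithm, including one that deliberately predicts below the interpolant at every midpoint of a chosen cell so as to force $\sigma=+1$ there at every level; the accumulated slope is then the full unsigned sum $\sum_k 2^k\delta_k \asymp \sqrt{N}$, and the revealed data is not realizable by any $1$-Lipschitz function. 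Restricting attention to ``an optimal learner that predicts the midpoint value'' is not admissible in a worst-case lower bound.

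The paper closes this gap differently. Its adversary computes the tentative perturbed value $v_t$, but only commits to it ($y_t=v_t$) if doing so keeps the local slopes to the two neighbors at distance $2^{-i}$ at most $1$; otherwise it \emph{forfeits} the round, revealing the unperturbed interpolated value $y_t=f_{t-1}(x_t)$ and accepting possibly zero error there. The point is then to show forfeits are rare: a forfeited round forces the auxiliary always-perturbed interpolant $g_{i,j}$ to have slope exceeding $1$ on an interval of length $2^{-i}$, contributing at least $2^{-i}$ to $J_2[g_{i,2^{i-1}}]$; since the perturbation magnitudes $\frac{\sqrt{\varepsilon}(1-\varepsilon)^{i/2}}{2^{i+1}}$ are tuned (via Lemma \ref{kl_jlem}) so that $J_2[g_{i,j}]\le\frac14$ for all $i,j$, at most $2^{i-2}$ of the $2^{i-1}$ rounds in stage $i$ are forfeited. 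So at least half the rounds in every stage still incur error $\delta_i$, and the geometric series (taken over all stages, not truncated at $N\asymp 1/\varepsilon$) gives $\Omega(\varepsilon^{-1/2})$. Some such forfeit-and-count mechanism (or an equivalent device) is needed to make your argument into a proof; as written, the consistency with $\mathcal F_\infty$ is asserted but not established.
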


\begin{proof}
Since $\opt_{1+\varepsilon}(\mathcal{F}_{\infty}) \ge 1$ for all $\varepsilon \in (0, 1)$, it suffices to prove the theorem for $\varepsilon \in \left(0, \frac{1}{2}\right)$. Define $x_0 = 1$ and $y_0 = 0$. For natural numbers $i, j$ with $0 \le j < 2^{i-1}$, define $x_{2^{i-1}+j} = \frac{1}{2^i}+\frac{j}{2^{i-1}}$. For each $i = 1, 2, \ldots$, we consider the trials for $x_{2^{i-1}}, \ldots, x_{2^i - 1}$ to be part of stage $i$, so that $x_1 = \frac{1}{2}$ is in stage $1$, $x_2 = \frac{1}{4}$ and $x_3 = \frac{3}{4}$ are in stage $2$, and so on.

Let $A$ be any algorithm for learning $\mathcal{F}_{\infty}$. Using $A$, we construct an infinite sequence of piecewise linear functions $f_0, f_1, \ldots \in \mathcal{F}_{\infty}$ and an infinite sequence of numbers $y_0, y_1, \ldots \in \mathbb{R}$ for which $f_t$ is consistent with the $x_k$ and $y_k$ values for $k \le t$ and $A$ has total $(1+\varepsilon)$-error at least \[\sum_{k = 1}^i 2^{k-2} \left(\frac{\sqrt{\varepsilon}(1-\varepsilon)^{\frac{k}{2}}}{2^{k+1}}\right)^{1+\varepsilon}\] after $i$ stages. This implies that \[\opt_{1+\varepsilon}(\mathcal{F}_{\infty}) \ge \sum_{k = 1}^{\infty} 2^{k-2} \left(\frac{\sqrt{\varepsilon}(1-\varepsilon)^{\frac{k}{2}}}{2^{k+1}}\right)^{1+\varepsilon}.\]

In order to analyze the functions $f_i$, we will also define and analyze another infinite sequence of piecewise linear functions $g_{i, j}$ with $0 \le j \le 2^{i-1}$ and another infinite sequence of numbers $v_1, v_2, \ldots \in \mathbb{R}$. We start by letting $f_0$ be the $0$-function. Next, we inductively define both sequences of piecewise linear functions.

Fix a stage $i$, and let $g_{i, 0} = f_{2^{i-1}-1}$. Let $t$ be a trial in stage $i$, and let $v_t$ be whichever of $f_{t-1}(x_t) \pm \frac{\sqrt{\varepsilon}(1-\varepsilon)^{\frac{i}{2}}}{2^{i+1}}$ is furthest from $\hat{y}_t$. Let $g_{i, t-2^{i-1}+1}$ be the function which linearly interpolates $\left\{(0, 0), (1, 0) \right\} \cup \left\{(x_s, y_s): s < 2^{i-1} \right\} \cup \left\{(x_s, v_s): 2^{i-1} \le s \le t \right\}$. 

For any $t \ge 1$, let $L_t$ and $R_t$ be the elements of $\left\{0, 1 \right\} \cup \left\{x_s: s < t \right\}$ that are closest to $x_t$ on the left and right respectively. If both $|v_t - f_{t-1}(L_t)| \le 2^{-i}$ and $|v_t - f_{t-1}(R_t)| \le 2^{-i}$, then let $y_t = v_t$. Otherwise we let $y_t = f_{t-1}(x_t)$. Finally, we define $f_t$ to be the function which linearly interpolates $\left\{(0, 0), (1, 0)\right\} \cup \left\{(x_s, y_s): s \le t\right\}$.

By definition, we have $f_t \in \mathcal{F}_{\infty}$ for each $t \ge 0$. We will prove next that for all $i, j$ we have $J_2[g_{i, j}] \le \frac{1}{4}$, and then we will use this to prove that $y_t = f_{t-1}(x_t)$ for at most half of the trials $t$ in stage $i$. The proof will use double induction, first on $i$ and then on $j$, and we will prove the stronger statement that \begin{equation}\label{stronger_statement}
J_2[g_{i, j}] \le \frac{\varepsilon}{4} \sum_{k = 0}^{i-1} (1-\varepsilon)^k + \frac{j \varepsilon (1-\varepsilon)^i }{2^{i+1}}.\end{equation}
In order to prove this statement, we will also prove that \begin{equation}\label{sum0toi-1of_f}J_2[f_{2^{i-1}-1}] \le \frac{\varepsilon}{4} \sum_{k = 0}^{i-1} (1-\varepsilon)^k\end{equation} for all $i \ge 1$. Note that this is equivalent to proving that \[J_2[g_{i, 0}] \le \frac{\varepsilon}{4} \sum_{k = 0}^{i-1} (1-\varepsilon)^k\] for all $i \ge 1$. Clearly this is true for $i = 1$, which is the base case of the induction on $i$. Fix some stage $i \ge 1$. We will assume that Inequality \ref{sum0toi-1of_f} is true for this fixed $i$, and use this to prove that \begin{equation}\label{sum0toiof_f}J_2[f_{2^{i}-1}] \le \frac{\varepsilon}{4} \sum_{k = 0}^{i} (1-\varepsilon)^k.\end{equation} In order to prove Inequality \ref{sum0toiof_f}, we will prove Inequality \ref{stronger_statement} for all $0 \le j \le 2^{i-1}$. This follows from the inductive hypothesis for $i$ and the definition of $g_{i, 0}$ when $j = 0$, which is the base case of the induction on $j$. Fix some integer $j$ with $0 \le j \le 2^{i-1}-1$ and assume that Inequality \ref{stronger_statement} is true for this fixed $j$. By Lemma \ref{kl_jlem}, we have \[J_2[g_{i, j+1}] = J_2[g_{i, j}]+ \frac{2\left(\frac{\sqrt{\varepsilon}(1-\varepsilon)^{\frac{i}{2}}}{2^{i+1}}\right)^2}{2^{-i}} = J_2[g_{i, j}]+\frac{\varepsilon (1-\varepsilon)^i}{2^{i+1}}.\] By the inductive hypothesis for $j$, we obtain \[J_2[g_{i, j+1}] \le \frac{\varepsilon}{4} \sum_{k = 0}^{i-1} (1-\varepsilon)^k+ \frac{j \varepsilon (1-\varepsilon)^i }{2^{i+1}} + \frac{\varepsilon (1-\varepsilon)^i}{2^{i+1}},\] which completes the inductive step for $j$. Substituting $j = 2^{i-1}$, we obtain \[J_2[g_{i,2^{i-1}}] \le  \frac{\varepsilon}{4} \sum_{k = 0}^{i} (1-\varepsilon)^k.\] Note that Lemma \ref{kl_jlem} implies that \[J_2[f_{2^{i-1}-1+j}] \le J_2[g_{i, j}]\] for all $j = 0, \ldots, 2^{i-1}$, so we obtain Inequality \ref{sum0toiof_f}, which completes the inductive step for $i$. By Inequality \ref{stronger_statement}, we obtain \[J_2[g_{i, j}] \le \frac{\varepsilon}{4} \sum_{k = 0}^{i-1} (1-\varepsilon)^k + \frac{\varepsilon (1-\varepsilon)^i }{4} = \frac{\varepsilon}{4} \sum_{k = 0}^{i} (1-\varepsilon)^k\] for all $j$ with $0 \le j \le 2^{i-1}$. Note that \[\frac{\varepsilon}{4} \sum_{k = 0}^{i} (1-\varepsilon)^k <  \frac{\varepsilon}{4} \sum_{k = 0}^{\infty} (1-\varepsilon)^k = \frac{1}{4}.\] Now that we have shown that $J_2[g_{i, j}] \le \frac{1}{4}$, we are ready to prove for each $i \ge 1$ that $y_t = f_{t-1}(x_t)$ for at most half of the trials $t$ in stage $i$. For each trial $t$ with $y_t = f_{t-1}(x_t)$, note that the absolute value of the slope of $g_{i, t-2^{i-1}+1}$ must exceed $1$ in at least one of the intervals of length $2^{-i}$ on either side of $x_t$. If $y_t = f_{t-1}(x_t)$ for at least $b$ of the trials in stage $i$, then restricting to intervals of slope at least $1$ implies that $J_2[g_{i, 2^{i-1}}] \ge b 2^{-i}$. Since $J_2[g_{i, 2^{i-1}}] \le \frac{1}{4}$, we must have $b \le 2^{i-2}$. Thus during stage $i$, there are at most $2^{i-2}$ trials $t$ with $y_t = f_{t-1}(x_t)$, which implies that there are at least $2^{i-2}$ trials with $y_t = v_t$. In each of those trials, $A$ was off by at least $\frac{\sqrt{\varepsilon}(1-\varepsilon)^{\frac{i}{2}}}{2^{i+1}}$, so the total $(1+\varepsilon)$-error of $A$ after $i$ stages is at least $\sum_{k = 1}^i 2^{k-2} \left(\frac{\sqrt{\varepsilon}(1-\varepsilon)^{\frac{k}{2}}}{2^{k+1}}\right)^{1+\varepsilon}$. Thus

\[
\opt_{1+\varepsilon}(\mathcal{F}_{\infty}) \ge \sum_{k = 1}^{\infty} 2^{k-2} \left(\frac{\sqrt{\varepsilon}(1-\varepsilon)^{\frac{k}{2}}}{2^{k+1}}\right)^{1+\varepsilon} = 
\frac{\frac{1}{2}\left(\frac{\sqrt{\varepsilon(1-\varepsilon)}}{4}\right)^{1+\varepsilon}}{1-2\left(\frac{\sqrt{1-\varepsilon}}{2}\right)^{1+\varepsilon}} =
\Omega\left(\frac{(\varepsilon(1-\varepsilon))^{\frac{1+\varepsilon}{2}}}{1-2\left(\frac{\sqrt{1-\varepsilon}}{2}\right)^{1+\varepsilon}}\right).
\] Since $\varepsilon^{\varepsilon} = \Theta(1)$ and $(1-\varepsilon)^{1+\varepsilon} = \Theta(1)$ for $\varepsilon \in \left(0, \frac{1}{2}\right)$, we have $\opt_{1+\varepsilon}(\mathcal{F}_{\infty}) = \Omega\left(\frac{\sqrt{\varepsilon}}{1-2\left(\frac{\sqrt{1-\varepsilon}}{2}\right)^{1+\varepsilon}}\right)$. Since $2^{\varepsilon} = \Theta(1)$ for $\varepsilon \in \left(0, \frac{1}{2}\right)$, we have $\opt_{1+\varepsilon}(\mathcal{F}_{\infty}) = \Omega\left(\frac{\sqrt{\varepsilon}}{2^{\varepsilon}-\sqrt{1-\varepsilon}^{1+\varepsilon}}\right)$. Note that $(1-\varepsilon)^{\frac{1+\varepsilon}{2}} \ge 1-\varepsilon(1+\varepsilon)$ for $\varepsilon \in \left(0, \frac{1}{2}\right)$. To check this, note that it is true when $\varepsilon = 0$, and the derivative of $(1-\varepsilon)^{\frac{1+\varepsilon}{2}} - (1-\varepsilon(1+\varepsilon))$ is \[2 \varepsilon +1+ (1-\varepsilon)^{\frac{1+\varepsilon}{2}}\left(\frac{1}{2}\ln(1-\varepsilon)-\frac{1}{2} -\frac{\varepsilon}{1-\varepsilon}\right) > 0\] for $\varepsilon \in \left(0, \frac{1}{2}\right)$. Thus, $\opt_{1+\varepsilon}(\mathcal{F}_{\infty}) = \Omega\left(\frac{\sqrt{\varepsilon}}{2^{\varepsilon}-1+\varepsilon(1+\varepsilon)}\right)$. Also note that $2^{\varepsilon} \le 1+\varepsilon$ for $\varepsilon \in (0, 1)$. Equality holds at $\varepsilon = 0$ and $\varepsilon = 1$, and the derivative of $1+\varepsilon - 2^{\varepsilon}$ is $1-2^{\varepsilon}\ln{2}$, which is positive for $\varepsilon \in \left(0, \frac{-\ln{\ln{2}}}{\ln{2}}\right)$ and negative for $\varepsilon \in \left(\frac{-\ln{\ln{2}}}{\ln{2}}, 1\right)$. Thus $2^{\varepsilon}-1+\varepsilon(1+\varepsilon) < 3\varepsilon$, so $\opt_{1+\varepsilon}(\mathcal{F}_{\infty}) = \Omega(\varepsilon^{-\frac{1}{2}})$.
\end{proof}

The next corollary follows from Theorem \ref{lowerbound}, again using the fact that $\opt_p(\mathcal{F}_{\infty}) \le \opt_p(\mathcal{F}_r) \le \opt_p(\mathcal{F}_q)$ whenever $1 \le q \le r$.

\begin{cor}
\label{mainlower}
If $\varepsilon \in (0, 1)$, then $\opt_{1+\varepsilon}(\mathcal{F}_q) = \Omega(\varepsilon^{-\frac{1}{2}})$ for all $q \ge 1$, where the constant does not depend on $q$.
\end{cor}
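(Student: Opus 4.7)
The plan is to derive the corollary as a one-line consequence of Theorem \ref{lowerbound} combined with the elementary monotonicity $\opt_p(\mathcal F_\infty) \le \opt_p(\mathcal F_q)$ for every $q \ge 1$. Since Theorem \ref{lowerbound} already contains all the nontrivial work, no new construction or estimate is required; I only need to transfer the lower bound from $\mathcal F_\infty$ up to the larger class $\mathcal F_q$.

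First, I would invoke the inclusion $\mathcal F_\infty \subseteq \mathcal F_q$, which was noted in the introduction via Jensen's inequality applied to the convex function $t \mapsto t^q$ on the probability space $[0,1]$: if $\sup_{x \in (0,1)} |f'(x)| \le 1$ then $\int_0^1 |f'(x)|^q\, \mathrm{d}x \le 1$. This inclusion means that any adversary strategy played against a learner on $\mathcal F_\infty$ is also a valid strategy against a learner on $\mathcal F_q$, using the very same hidden function. Passing to suprema over adversaries and infima over algorithms then gives $\opt_p(\mathcal F_\infty) \le \opt_p(\mathcal F_q)$ for every $p > 0$ and every $q \ge 1$.

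Applying this with $p = 1+\varepsilon$ and appealing to Theorem \ref{lowerbound} yields
\[
\opt_{1+\varepsilon}(\mathcal F_q) \;\ge\; \opt_{1+\varepsilon}(\mathcal F_\infty) \;=\; \Omega\bigl(\varepsilon^{-1/2}\bigr),
\]
which is exactly the claim. The implicit constant is the one produced in the proof of Theorem \ref{lowerbound}; since that constant is absolute (the construction there uses the dyadic points $x_t$ and magnitudes $\frac{\sqrt{\varepsilon}(1-\varepsilon)^{i/2}}{2^{i+1}}$ with no reference to $q$), the resulting $\Omega$-bound is uniform in $q \ge 1$, as required. There is no genuine obstacle: the only step beyond quoting Theorem \ref{lowerbound} is the observation that enlarging the function class can only make online learning harder, which is immediate from the definitions of $\mathscr L_p$ and $\opt_p$.
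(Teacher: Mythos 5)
Your proposal is correct and matches the paper's own derivation exactly: the paper also obtains Corollary \ref{mainlower} by combining Theorem \ref{lowerbound} with the monotonicity $\opt_p(\mathcal F_\infty) \le \opt_p(\mathcal F_q)$ coming from the Jensen-inequality inclusion $\mathcal F_\infty \subseteq \mathcal F_q$. Your additional remark that the implicit constant from Theorem \ref{lowerbound} is independent of $q$ is also the justification implicit in the paper's statement.
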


Combining Corollaries \ref{mainupper} and \ref{mainlower}, we have the following theorem.

\begin{thm}
If $\varepsilon \in (0, 1)$, then $\opt_{1+\varepsilon}(\mathcal{F}_{\infty}) = \Theta(\varepsilon^{-\frac{1}{2}})$ and $\opt_{1+\varepsilon}(\mathcal{F}_q) = \Theta(\varepsilon^{-\frac{1}{2}})$ for all $q \ge 2$, where the constant in the bound does not depend on $q$.
\end{thm}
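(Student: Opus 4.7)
The statement to be proved is the combination of the upper and lower bounds on $\opt_{1+\varepsilon}(\mathcal F_\infty)$ and $\opt_{1+\varepsilon}(\mathcal F_q)$, both of which are already recorded as corollaries in the preceding text. My plan is therefore essentially bookkeeping: I would simply cite Corollary \ref{mainupper} to get the $O(\varepsilon^{-1/2})$ upper bound (uniform in $q \ge 2$) and Corollary \ref{mainlower} to get the matching $\Omega(\varepsilon^{-1/2})$ lower bound (uniform in $q \ge 1$), and note that combining them yields the $\Theta(\varepsilon^{-1/2})$ conclusion for both $\mathcal F_\infty$ and every $\mathcal F_q$ with $q \ge 2$.

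To spell out the flow slightly: for the upper bound on $\opt_{1+\varepsilon}(\mathcal F_q)$ with $q \ge 2$, Corollary \ref{mainupper} already delivers the bound together with the monotonicity $\opt_p(\mathcal F_q) \le \opt_p(\mathcal F_2)$ coming from $\mathcal F_q \subseteq \mathcal F_2$; and for the upper bound on $\opt_{1+\varepsilon}(\mathcal F_\infty)$, the inclusion $\mathcal F_\infty \subseteq \mathcal F_2$ and Theorem \ref{upperbound} give the same bound. For the lower bound, Theorem \ref{lowerbound} establishes $\opt_{1+\varepsilon}(\mathcal F_\infty) = \Omega(\varepsilon^{-1/2})$ directly, and this automatically transfers to each $\opt_{1+\varepsilon}(\mathcal F_q)$ via the inclusion $\mathcal F_\infty \subseteq \mathcal F_q$, giving Corollary \ref{mainlower}.

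There is no real obstacle here since the two hard steps (the upper bound via H\"older combined with Lemmas \ref{ei2di} and \ref{sumditop}, and the lower bound via the carefully constructed adversary based on Lemma \ref{kl_jlem}) have been carried out already. The only thing to verify is that the constants in both bounds are independent of $q$, which they are, since the upper bound is proved for $\mathcal F_2$ (the largest class in the relevant range) and the lower bound is proved for $\mathcal F_\infty$ (the smallest), and neither argument invokes $q$ at any point. Hence the short proof is: by Corollaries \ref{mainupper} and \ref{mainlower}, $\opt_{1+\varepsilon}(\mathcal F_\infty) = \Theta(\varepsilon^{-1/2})$ and $\opt_{1+\varepsilon}(\mathcal F_q) = \Theta(\varepsilon^{-1/2})$ for all $q \ge 2$, with uniform constants.
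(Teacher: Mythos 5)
Your proposal is correct and matches the paper exactly: the paper states this theorem as the immediate combination of Corollaries \ref{mainupper} and \ref{mainlower}, which is precisely what you do. Your additional remarks on uniformity of the constants in $q$ are accurate and consistent with how those corollaries are derived.
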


\section{A multi-variable generalization}\label{3}

In this section, we prove several results on $\opt_p(\mathcal F_{q,d})$. First, we prove a simple lower bound for $\opt_p(\mathcal F_{q,d})$ in terms of $\opt_p(\mathcal F_q)$. 

\begin{prop}
\label{dlower}
For any positive integer $d$, real number $p > 0$, and $q \in [1, \infty) \cup \left\{\infty\right\}$, we have \[ \opt_p(\mathcal F_{q,d}) \ge d^p \cdot \opt_p(\mathcal F_q). \]
\end{prop}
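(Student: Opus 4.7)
The plan is to embed a rescaled single-variable problem into the $d$-variable setup by working along the diagonal of $[0,1]^d$. For any $g \in \mathcal F_q$, define
\[
f_g(x_1, \ldots, x_d) = g(x_1) + g(x_2) + \cdots + g(x_d).
\]
Fixing all but one coordinate yields $g$ plus a constant, which still lies in $\mathcal F_q$ because $J_q[g + C] = J_q[g] \le 1$ (with the analogous statement when $q = \infty$), so $f_g \in \mathcal F_{q,d}$. The essential observation is the diagonal identity $f_g(t, t, \ldots, t) = d \cdot g(t)$.

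Given any learner $A$ for $\mathcal F_{q,d}$, I would construct a single-variable learner $A'$ by pass-through simulation: on input $t \in [0,1]$, $A'$ queries $A$ at $(t, t, \ldots, t)$, returns $A$'s prediction as its own, and relays the true value it then receives back to $A$ as the revealed value of $f_g$. Whenever the adversary against $A'$ chooses a target of the form $h = d g$ with $g \in \mathcal F_q$, this simulation is consistent with $f_g \in \mathcal F_{q,d}$ on the corresponding diagonal input sequence, and the trial-by-trial absolute errors of $A$ and $A'$ coincide.

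The key step is a scaling identity for the class $d \mathcal F_q := \{d g : g \in \mathcal F_q\}$, namely $\opt_p(d \mathcal F_q) = d^p \cdot \opt_p(\mathcal F_q)$. This is immediate: rescaling predictions and targets by a factor of $d$ multiplies each absolute error by $d$ and hence each $p$-error by $d^p$, so optimal strategies transfer between the two problems. Applying the lower bound $\opt_p(d \mathcal F_q) \ge d^p \cdot \opt_p(\mathcal F_q)$ to $A'$, there exist some $g \in \mathcal F_q$ and some input sequence on which $A'$ incurs total $p$-error at least $d^p \cdot \opt_p(\mathcal F_q)$; the corresponding diagonal sequence forces the same total $p$-error on $A$ against $f_g \in \mathcal F_{q,d}$. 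Taking the infimum over $A$ then gives $\opt_p(\mathcal F_{q,d}) \ge d^p \cdot \opt_p(\mathcal F_q)$.

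No step here presents a genuine obstacle. The only routine checks are the membership $f_g \in \mathcal F_{q,d}$ for the full range $q \in [1, \infty) \cup \{\infty\}$ and the scaling identity for $\opt_p(d \mathcal F_q)$, both of which follow directly from the definitions.
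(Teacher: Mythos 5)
Your proposal is correct and follows essentially the same route as the paper: both use the coordinate-sum function $\sum_{i} g(x_i)$ restricted to the diagonal $(t,\ldots,t)$, so that the multi-variable learner's errors are exactly $d$ times those of a simulated single-variable learner. The only cosmetic difference is that the paper builds the factor of $d$ into the simulated algorithm (dividing $A$'s output by $d$ to get a learner of $\mathcal F_q$ directly), whereas you treat the simulated learner as one for the scaled class $d\,\mathcal F_q$ and invoke a separate scaling identity; note also that the supremum defining $\mathscr L_p(A',\cdot)$ need not be attained, so the final step should pass through an $\varepsilon$-approximation as the paper does.
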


\begin{proof}
If $\opt_p(\mathcal F_{q,d})=\infty$, there is nothing to prove, and if $\opt_p(\mathcal F_q)=\infty$, it is clear, by restricting the inputs $\mathbf x_i$ to the set $\{c\mathbf e_1: c \in [0,1]\} \subset [0,1]^d$ (where $\mathbf e_1 \in [0,1]^d$ has a $1$ in the first component and a $0$ in the rest), that $\opt_p(\mathcal F_{q,d})=\infty$ as well.

Now suppose that both $\opt_p(\mathcal F_{q,d})$ and $\opt_p(\mathcal F_q)$ are finite. Fix any algorithm $A$ for learning $\mathcal F_{q,d}$. Let $\mathbf{1}$ be the all-ones $d$-tuple and let $a(x_i : (x_0, z_0),\dots,(x_{i-1},z_{i-1}))$ denote the output of $A$ given the input $x_i \mathbf{1}$ after learning the pairs $(x_j \mathbf{1}, z_j)$ for $j < i$, given that there is a function in $\mathcal{F}_{q,d}$ which passes through the points $(x_j \mathbf{1}, z_j)$ for $j < i$. Then let $A’$ be the algorithm for learning $\mathcal{F}_q$ which, given the input $x_i$ after learning the pairs $(x_j, w_j)$ for $j < i$, returns the output \[ a'(x_i : (x_0, w_0),\dots,(x_{i-1},w_{i-1}))=\frac{a(x_i : (x_0, d w_0),\dots,(x_{i-1},d w_{i-1}))}{d}, \] given that there is a function in $\mathcal{F}_q$ which passes through the points $(x_j, w_j)$ for $j < i$.

Fix $\varepsilon > 0$. Then there exist $f \in \mathcal{F}_q$ and a sequence of inputs $x_0, x_1, \ldots, x_M$ such that \[ \sum_{i = 1}^{M} |a'(x_i : (x_0, f(x_0)),\dots,(x_{i-1},f(x_{i-1})))-f(x_i)|^p \ge \opt_p(\mathcal{F}_q)-\varepsilon.\] Against $A$, the adversary uses the function \[\gamma(a_1,\ldots,a_d) = \sum_{i = 1}^d f(a_i)\] with the inputs $x_0 \mathbf{1}, x_1 \mathbf{1}, \ldots, x_M \mathbf{1}$. First, suppose that $q \in [1, \infty)$. Observe that for any $1 \le k \le d$ and $d-1$ reals $x_i \in [0,1]$, where $1 \le i \le d$ but $i \neq k$, \[ \int_0^1 \left| \frac{\text{d}\gamma}{\text{d}x_k} \right|^q\text{d}x_k = \int_0^1|f'(x)|^q\text{d}x \le 1 \] since $f \in \mathcal F_q$; hence, $\gamma \in \mathcal F_{q,d}$. Next, suppose that $q = \infty$. Observe that $\left| \frac{\text{d}\gamma}{\text{d}x_k} \right| = |f'(x_k)| \le 1$ for all $x_k \in [0,1]$ since $f \in \mathcal F_q$; hence, in this case we also have $\gamma \in \mathcal F_{q,d}$. To finish the proof, let \[e_i = |a'(x_i: (x_0, f(x_0)),\dots,(x_{i-1},f(x_{i-1})))-f(x_i)|\] and \[k_i = |a(x_i : (x_0, \gamma(x_0 \mathbf{1})),\dots,(x_{i-1},\gamma(x_{i-1} \mathbf{1})))-\gamma(x_i \mathbf{1})|\] for each $i$. Thus, \[k_i = |d a'(x_i : (x_0, f(x_0)),\dots,(x_{i-1},f(x_{i-1})))-d f(x_i)| = d e_i\] and \[ \sum_{i = 1}^{M} e_i^p \ge \opt_p(\mathcal{F}_q)-\varepsilon.\] Hence, \[ \sum_{i = 1}^{M} k_i^p \ge d^p(\opt_p(\mathcal{F}_q)-\varepsilon).\] Taking $\varepsilon \to 0$ finishes the proof.
\end{proof}

The next corollary follows from Proposition \ref{dlower} since $\opt_p(\mathcal F_{1}) = \infty$ \cite{kl}. 

\begin{cor}
For any positive integer $d$ and real number $p > 0$, we have $\opt_p(\mathcal F_{1,d}) = \infty$. 
\end{cor}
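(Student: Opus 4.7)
The plan is to apply Proposition \ref{dlower} directly with $q = 1$. That proposition supplies the inequality $\opt_p(\mathcal{F}_{1,d}) \ge d^p \cdot \opt_p(\mathcal{F}_1)$ for every $p > 0$, and its proof already handles the case when the right-hand side is infinite, via the restriction of inputs to the first coordinate axis $\{c \mathbf{e}_1 : c \in [0,1]\}$. So the entire task reduces to checking that $\opt_p(\mathcal{F}_1) = \infty$ for every $p > 0$.

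For $p \ge 1$, this is exactly the statement from \cite{kl} that has already been cited earlier in the paper. For the sub-range $p \in (0,1)$, I would give a short direct argument using the same binary-search adversary that underlies Theorem \ref{pqlow}, specialized to $q = 1$. The adversary reveals $f(0) = 0$ and $f(1) = 1$, and then performs $k$ rounds of binary search, forcing an error of at least $\tfrac{1}{2}$ on each round. The point is that for $q = 1$ the quantity $J_1[f_S]$ is just the total variation of the piecewise linear interpolant, which remains bounded by $1$ regardless of how many binary-search points have been inserted. Hence $k$ may be taken arbitrarily large, and the total $p$-error grows at least like $1 + k \cdot 2^{-p}$, which diverges as $k \to \infty$.

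There is no real obstacle here: the corollary is essentially a one-line invocation of Proposition \ref{dlower}, and the only mild subtlety worth flagging is the need to confirm that the infinite case is covered both on the $\mathcal{F}_1$ side (possibly requiring the brief argument above for $p \in (0,1)$) and within Proposition \ref{dlower} itself (which it is).
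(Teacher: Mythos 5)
Your proposal matches the paper's own proof: the corollary is obtained by invoking Proposition \ref{dlower} with $q=1$ together with the fact that $\opt_p(\mathcal F_{1}) = \infty$. Your additional binary-search argument for $p \in (0,1)$ is correct and is a reasonable piece of extra care, since the result cited from \cite{kl} is stated only for $p \ge 1$, though the paper itself does not spell this out.
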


Now we directly prove some results about $\opt_p(\mathcal F_{\infty,d})$, depending on whether $p<d$ or $p>d$. The main negative result is the following.

\begin{thm}\label{p_inf_d}
Let $d>0$ be an integer and $p$ be a real number with $0 < p<d$. Then $\opt_p(\mathcal F_{\infty,d})=\infty$.
\end{thm}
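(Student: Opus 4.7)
The plan is for the adversary to force large prediction errors at many scattered points, using a construction based on axis-aligned bump functions with pairwise disjoint supports. Fix a large integer $n$, and consider the grid of $n^d$ centers $\mathbf{c}_{\mathbf{k}} = \left(\frac{2k_1+1}{2n}, \ldots, \frac{2k_d+1}{2n}\right)$ indexed by $\mathbf{k} \in \{0, 1, \ldots, n-1\}^d$. Associated to each center is the $L^\infty$ bump $g_{\mathbf{k}}(\mathbf{x}) = \max\bigl(0, \tfrac{1}{2n} - \|\mathbf{x} - \mathbf{c}_{\mathbf{k}}\|_\infty\bigr)$, supported on the box $B_{\mathbf{k}} = \prod_i [k_i/n, (k_i+1)/n]$ of side length $1/n$, peaking at $\tfrac{1}{2n}$ at the center, and vanishing on $\partial B_{\mathbf{k}}$.

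The first step is to verify that for any subset $S \subseteq \{0,1,\ldots,n-1\}^d$, the sum $f_S = \sum_{\mathbf{k} \in S} g_{\mathbf{k}}$ lies in $\mathcal{F}_{\infty, d}$. Fixing all coordinates except $x_i$, each $g_{\mathbf{k}}$ restricts to a trapezoidal (or zero) function with slopes in $\{-1, 0, 1\}$, and is thus 1-Lipschitz. Since the boxes $B_{\mathbf{k}}$ have pairwise disjoint interiors and each bump vanishes on the boundary of its box, along any axial slice the restricted bumps are supported on disjoint subintervals and join continuously at shared endpoints, making the slice absolutely continuous with derivative bounded by 1 almost everywhere.

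The second step is the adversary strategy. Given any learning algorithm $A$, the adversary presents the centers $\mathbf{c}_{\mathbf{k}}$ as inputs in any fixed order. On each query, after receiving the learner's guess $\hat{y}$, the adversary reveals whichever of $\bigl\{0, \tfrac{1}{2n}\bigr\}$ is farther from $\hat{y}$, forcing an error of at least $\tfrac{1}{4n}$. Letting $S$ be the collection of centers assigned the value $\tfrac{1}{2n}$, the function $f_S$ is in $\mathcal{F}_{\infty, d}$ by Step 1 and is consistent with every revealed value, since the only bump nonzero at $\mathbf{c}_{\mathbf{k}}$ is $g_{\mathbf{k}}$ itself, which takes value $\tfrac{1}{2n}$ there. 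Summing over all $n^d$ queries yields total $p$-error at least $n^d \cdot \bigl(\tfrac{1}{4n}\bigr)^p = 4^{-p}\, n^{d-p}$.

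Since $p < d$, this lower bound tends to infinity as $n \to \infty$, so $\mathscr{L}_p(A, \mathcal{F}_{\infty, d}) = \infty$ for every algorithm $A$, giving $\opt_p(\mathcal{F}_{\infty, d}) = \infty$. The main technical obstacle is confirming the axial 1-Lipschitz property of $f_S$ at box boundaries; this reduces to the fact that each bump $g_{\mathbf{k}}$ vanishes on $\partial B_{\mathbf{k}}$, so no conflict arises between adjacent supports along any axis-parallel slice.
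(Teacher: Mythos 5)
Your proposal is correct and is essentially the paper's own argument: both use a grid of $n^d$ cells of side $1/n$, a consistent function built from disjoint axis-aligned tent/bump functions of height $\tfrac{1}{2n}$ vanishing on cell boundaries, and an adversary that forces error $\Theta(1/n)$ at each of the $n^d$ centers, giving total $p$-error $\Theta(n^{d-p}) \to \infty$. The only cosmetic difference is that you reveal values in $\{0, \tfrac{1}{2n}\}$ (error $\ge \tfrac{1}{4n}$) while the paper uses $\pm\tfrac{1}{2n}$ (error $\ge \tfrac{1}{2n}$), which affects nothing.
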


\begin{proof}
Fix any algorithm $A$ for learning $\mathcal F_{\infty,d}$. Then choose any integer $n \ge 1$, and let $S$ be the set of reals $0<r<1$ such that $2n r$ is an odd integer (so $|S|=n$). The adversary first reveals $f(0,\ldots,0)=0$, then chooses $\mathbf x_i$ ranging over all elements of $S^d$ in lexicographic order, receives input $\hat y_i$ from $A$, and reveals $f(\mathbf x_i)=\pm\frac{1}{2n}$, whichever is farther from $\hat y_i$.

Let $\{x\}=x-\lfloor x \rfloor$ denote the fractional part of $x$. At the end of the $n^{d}+1$ trials, the algorithm's revealed values of $f$ are consistent with a function $f: [0,1]^d \to \mathbb R$ given by \[ f(x_1,\ldots,x_d)=\pm \frac{1}{n} \min_{1 \le i \le d} \left(\min \left(\{n x_i\},\{-n x_i\}\right) \right), \] where the signs $\pm$ are chosen such that $f(\mathbf x)$ agrees with the adversary's outputs for any $\mathbf x=(x_1,\ldots,x_d) \in S^d$ and $f$ has constant sign in any region \[ \left(\frac{n_1}{n},\frac{n_1+1}{n}\right) \times \ldots \times \left(\frac{n_d}{n},\frac{n_d+1}{n}\right) \subset [0,1]^d \] for integers $0 \le n_i < n$. The consistency follows since $\{n x_i\} = \{-n x_i\} = \frac{1}{2}$ for all $x_i \in S$.

First, we show $f \in \mathcal F_{\infty,d}$. Fix any $1 \le i \le d$ and $\mathbf x=(x_1,\ldots,x_{d-1}) \in [0,1]^{d-1}$, and consider the function $g:[0,1] \to \mathbb R$ given by $g(x)=f(\mathbf x')$, where $\mathbf x' \in [0,1]^d$ is formed by inserting $x$ into the $i^{\text{th}}$ position of $\mathbf x$. Then $g$ is given by \[ g(x)=\pm \frac{1}{n}\min\left(\min\left(\{n x\},\{-n x\}\right),M\right), \] where \[ M=\min_{1 \le i \le d-1} \left(\min \left(\{n x_i\},\{-n x_i\}\right) \right). \] Evidently $g$ is piecewise linear, with finitely many points where $g'$ is not defined and $|g'(x)| = 1$ or $g'(x) = 0$ everywhere else by definition of $g$; moreover, since the function $\min(\{n x\},\{-n x\})$ is continuous, it follows that $g$ is continuous. Hence $g \in \mathcal F_\infty$. Since this holds for any choice of $1 \le i \le d$ and $\mathbf x \in [0,1]^d$, it follows that $f \in \mathcal F_{\infty,d}$.

Now we find a lower bound for the error the adversary can guarantee. There are $n^{d}$ trials past the first, each of which has $|\hat y_i-f(\mathbf x_i)| \ge \frac{1}{2n}$; hence the adversary guarantees \[ \sum_{i>0}|\hat y_i-f(\mathbf x_i)|^p \ge \frac{n^{d}}{(2n)^{p}}=\frac{1}{2^p} \cdot n^{d-p}. \] Because $p<d$, this grows arbitrarily large as $n$ increases; hence $\opt_p(\mathcal F_{\infty,d})=\infty$.
\end{proof}

As $\mathcal F_\infty \subseteq \mathcal F_q$ implies that $\mathcal F_{\infty,d} \subseteq \mathcal F_{q,d}$ for any $q \ge 1$, this bound extends to $q \neq \infty$.

\begin{cor}
Let $d>0$ be an integer and $p$ be a real number with $0 < p<d$. For any $q \ge 1$, we have $\opt_p(\mathcal F_{q,d})=\infty$.
\end{cor}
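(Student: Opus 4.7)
The plan is to deduce this corollary from Theorem \ref{p_inf_d} by a purely formal inclusion argument, which is already telegraphed in the sentence immediately preceding the statement. The two ingredients are (i) the class inclusion $\mathcal F_{\infty,d} \subseteq \mathcal F_{q,d}$, and (ii) the monotonicity of $\opt_p$ under class inclusion.

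For (i), I would observe that by Jensen's inequality (noted in the introduction of the paper) $\mathcal F_\infty \subseteq \mathcal F_q$ for every $q \ge 1$. Now fix $f \in \mathcal F_{\infty,d}$. By definition of $\mathcal F_{\infty,d}$, every single-variable restriction of $f$ (fixing all but one coordinate) lies in $\mathcal F_\infty$, and hence in $\mathcal F_q$ by the above. This is exactly the membership condition for $\mathcal F_{q,d}$, so $f \in \mathcal F_{q,d}$, giving $\mathcal F_{\infty,d} \subseteq \mathcal F_{q,d}$.

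For (ii), I would verify directly from the definitions that $\mathcal F \subseteq \mathcal G$ implies $\opt_p(\mathcal F) \le \opt_p(\mathcal G)$: the supremum defining $\mathscr L_p(A, \mathcal G)$ is over a superset of target functions, so $\mathscr L_p(A, \mathcal F) \le \mathscr L_p(A, \mathcal G)$ for every algorithm $A$, and taking the infimum over $A$ preserves this inequality. Applied to $\mathcal F_{\infty,d} \subseteq \mathcal F_{q,d}$ together with Theorem \ref{p_inf_d}, this yields $\opt_p(\mathcal F_{q,d}) \ge \opt_p(\mathcal F_{\infty,d}) = \infty$, completing the proof.

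There is essentially no genuine obstacle here; the only small point of care is the direction of monotonicity (a larger target class is harder for the learner, not easier), so I would state this step explicitly rather than invoke it tacitly.
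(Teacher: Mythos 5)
Your proposal is correct and matches the paper's argument exactly: the paper derives this corollary in one line from $\mathcal F_\infty \subseteq \mathcal F_q$ implying $\mathcal F_{\infty,d} \subseteq \mathcal F_{q,d}$, plus monotonicity of $\opt_p$ under class inclusion. You simply spell out the two steps the paper leaves implicit, including the correct direction of monotonicity.
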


In order to establish upper bounds on $\opt_p(\mathcal F_{\infty,d})$, we prove the following lemma.

\begin{lem}\label{tri}
For $f \in \mathcal F_{\infty,d}$ and $\mathbf x_1=(x_{1,1},\ldots,x_{d,1}),\mathbf x_2=(x_{1,2},\ldots,x_{d,2}) \in [0,1]^d$, we have \[ |f(\mathbf x_1)-f(\mathbf x_2)| \le \sum_{i=1}^d|x_{i,1}-x_{i,2}|. \]
\end{lem}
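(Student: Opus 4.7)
The plan is to use a hybrid/telescoping argument, changing one coordinate at a time and applying the one-dimensional Lipschitz property to each slice.

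First I would define intermediate points $\mathbf y_0, \mathbf y_1, \ldots, \mathbf y_d$ that interpolate between $\mathbf x_1$ and $\mathbf x_2$ by swapping out coordinates one at a time. Specifically, set $\mathbf y_0 = \mathbf x_1$ and, for $1 \le i \le d$, let $\mathbf y_i$ be obtained from $\mathbf y_{i-1}$ by replacing its $i$-th coordinate $x_{i,1}$ with $x_{i,2}$, so that $\mathbf y_d = \mathbf x_2$. Then by the triangle inequality, $|f(\mathbf x_1)-f(\mathbf x_2)| \le \sum_{i=1}^d |f(\mathbf y_{i-1})-f(\mathbf y_i)|$.

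Next I would bound each term $|f(\mathbf y_{i-1})-f(\mathbf y_i)|$ using the hypothesis $f \in \mathcal F_{\infty,d}$. Since $\mathbf y_{i-1}$ and $\mathbf y_i$ differ only in the $i$-th coordinate, the function $g: [0,1] \to \mathbb R$ obtained from $f$ by fixing all coordinates except the $i$-th (to the values shared by $\mathbf y_{i-1}$ and $\mathbf y_i$) lies in $\mathcal F_\infty$ by definition of $\mathcal F_{\infty,d}$. As noted in the introduction (following \cite{long}), membership in $\mathcal F_\infty$ is equivalent to being $1$-Lipschitz on $[0,1]$, so $|g(x_{i,1}) - g(x_{i,2})| \le |x_{i,1}-x_{i,2}|$, which is exactly $|f(\mathbf y_{i-1})-f(\mathbf y_i)| \le |x_{i,1}-x_{i,2}|$.

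Summing these $d$ inequalities yields the desired bound $|f(\mathbf x_1)-f(\mathbf x_2)| \le \sum_{i=1}^d |x_{i,1}-x_{i,2}|$. There is no real obstacle here; the only subtlety is choosing the hybrid path so that consecutive points differ in only one coordinate, which is precisely the setup that lets us invoke the one-dimensional Lipschitz property built into the definition of $\mathcal F_{\infty,d}$.
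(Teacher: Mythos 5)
Your proposal is correct and matches the paper's proof essentially verbatim: the paper also constructs the hybrid sequence $\mathbf x_0'=\mathbf x_1,\ldots,\mathbf x_d'=\mathbf x_2$ by swapping coordinates one at a time, applies the triangle inequality, and bounds each term using the fact that the one-variable restriction lies in $\mathcal F_\infty$ and is therefore $1$-Lipschitz. No differences worth noting.
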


\begin{proof}
Fix such $f,\mathbf x_1,\mathbf x_2$. Define a sequence of $\mathbf x_i' \in [0,1]^d$, for $0 \le i \le d$, such that $\mathbf x_i'$ has its first $i$ components equal to the first $i$ components of $\mathbf x_2$ and its last $d-i$ components equal to the last $d-i$ components of $\mathbf x_1$ (so $\mathbf x_0'=\mathbf x_1$ and $\mathbf x_d'=\mathbf x_2$). By the triangle inequality, \[ |f(\mathbf x_1)-f(\mathbf x_2)|=\left|\sum_{i=1}^d \left(f(\mathbf x_{i-1}')-f(\mathbf x_i')\right)\right| \le \sum_{i=1}^d \left|f(\mathbf x_{i-1}')-f(\mathbf x_i')\right|. \] Now consider any $1 \le i \le d$. Note that $\mathbf x_{i-1}$ and $\mathbf x_i'$ only differ in their $i^{\text{th}}$ components, with one being $x_{i,1}$ and the other being $x_{i,2}$. Then by definition of $\mathcal F_{\infty,d}$ and using the fact that for $g \in \mathcal F_\infty$ and $x_1,x_2 \in [0,1]$, $|g(x_1)-g(x_2)| \le |x_1-x_2|$, it follows that $\left|f(\mathbf x_{i-1}')-f(\mathbf x_i')\right| \le |x_{i,1}-x_{i,2}|$. Summing over $1 \le i \le d$ yields the result.
\end{proof}

Lemma \ref{tri} makes the class $\mathcal F_{\infty,d}$ particularly nice to work with. Using a nearest neighbor algorithm, we establish the following upper bound.

\begin{thm}
\label{dpupper}
Suppose $p>d$. Then $\opt_p(\mathcal F_{\infty,d}) \le \frac{(2^d-1)d^p}{1-\frac{2^d}{2^p}}$.
\end{thm}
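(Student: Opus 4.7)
The plan is to analyze a nearest-neighbor learner using the $L_\infty$ metric. On trial $0$, predict $0$ (this trial's error is not counted). On trial $t \ge 1$, predict $\hat y_t = f(\mathbf x_{j^*})$, where $j^* \in \{0, \ldots, t-1\}$ minimizes $\|\mathbf x_j - \mathbf x_t\|_\infty$, and set $r_t = \|\mathbf x_t - \mathbf x_{j^*}\|_\infty$. Applying Lemma \ref{tri} to the pair $(\mathbf x_{j^*}, \mathbf x_t)$ yields $|\hat y_t - f(\mathbf x_t)| \le \sum_{i=1}^d |x_{j^*,i} - x_{t,i}| \le d \cdot r_t$, so it suffices to bound $\sum_{t \ge 1} r_t^p$.

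The key combinatorial input is a dyadic counting bound. For each integer $k \ge 0$, partition $[0,1]^d$ into the $2^{dk}$ axis-aligned cubes of side $2^{-k}$ with dyadic corners. If $r_t > 2^{-k}$, then no earlier query can lie in the same level-$k$ cell as $\mathbf x_t$ (any two points in such a cell are within $L_\infty$-distance $2^{-k}$), so $\mathbf x_t$ is the first query to visit its cell. Since there are only $2^{dk}$ cells and trial $0$ already occupies one of them, we obtain $|\{t \ge 1 : r_t > 2^{-k}\}| \le 2^{dk} - 1$.

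To finish, decompose dyadically: let $A_k = |\{t \ge 1 : r_t \in (2^{-(k+1)}, 2^{-k}]\}|$ and $B_k = \sum_{j \le k} A_j \le 2^{d(k+1)} - 1$, so that $\sum_{t \ge 1} r_t^p \le \sum_{k \ge 0} A_k \cdot 2^{-kp}$. Abel summation rewrites the right side as $(1 - 2^{-p}) \sum_{k \ge 0} B_k \cdot 2^{-kp}$, and the boundary term $B_K \cdot 2^{-Kp}$ vanishes as $K \to \infty$ precisely because $p > d$. Substituting the bound on $B_k$, evaluating the resulting geometric series, and simplifying via
\[ (1 - 2^{-p})\left(\frac{2^d}{1 - 2^{d-p}} - \frac{1}{1 - 2^{-p}}\right) = \frac{2^d - 1}{1 - 2^d/2^p} \]
yields the stated bound after multiplying by $d^p$. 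The main obstacle is keeping the constants sharp enough to reach $(2^d - 1)$ rather than $2^d$ in the numerator; this relies both on the ``$-1$'' coming from trial $0$ occupying a cell and on the algebraic identity above. The hypothesis $p > d$ is used essentially twice: for convergence of $\sum_k 2^{(d-p)k}$ and for vanishing of the Abel boundary term.
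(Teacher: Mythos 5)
Your proposal is correct and follows essentially the same route as the paper's proof: a nearest-neighbor predictor analyzed via Lemma \ref{tri}, a dyadic-cube pigeonhole count giving the $2^{dk}-1$ bound (with the $-1$ from trial $0$), and an Abel summation of the resulting geometric series, yielding exactly the constant $\frac{(2^d-1)d^p}{1-2^{d-p}}$. The only cosmetic difference is that you select the nearest neighbor in the $L_\infty$ metric while the paper uses $L_1$; both reduce to the same error bound of $d$ times the relevant dyadic scale.
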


\begin{proof}
Consider the algorithm $A$ which guesses $0$ on the first input and, on trial $i$ (after receiving inputs $\mathbf x_0,\ldots,\mathbf x_{i-1}$), picks the least index $0 \le j<i$ which minimizes the $L_1$ distance between $\mathbf x_j$ and $\mathbf x_i$ and guesses $\hat y_i=f(\mathbf x_j)$. We will show $\mathscr L_p(A,\mathcal F_{\infty,d}) \le \frac{(2^d-1)d^p}{1-\frac{2^d}{2^p}}$.

Fix $f \in \mathcal F_{\infty,d}$ and a sequence $\mathbf x_0,\ldots,\mathbf x_m$ of $\mathbf x_i \in [0,1]^d$. Assume all the $\mathbf x_i$ are distinct. Then for each $1 \le i \le m$, there exists a least integer $k_i$ such that, if $[0,1]^d$ is divided into $2^{k_id}$ regions given by \[ \{ (x_1,\ldots,x_d) \in [0,1]^d: n_i \le 2^{k_i}x_i \le n_i+1 \} \] over all $d$-tuples $(n_1,\ldots,n_d)$ of integers $0 \le n_i<2^{k_i}$, then $\mathbf x_i$ is not in the same region as any of $\mathbf x_0,\ldots,\mathbf x_{i-1}$. Note that because $\mathbf x_0$ and $\mathbf x_i$ are both in $[0,1]^d$ for any $1 \le i \le m$, all $k_i$ are at least $1$. For each integer $k \ge 1$, let $c_k$ be the number of integers $1 \le i \le m$ such that $k_i=k$. By the Pigeonhole Principle, for any fixed integer $k \ge 0$, there exist at most $2^{kd}-1$ indices $1 \le i \le m$ such that $k_i \le k$; otherwise, at least $2^{kd}+1$ of the $\mathbf x_i$ (including $\mathbf x_0$) would be the first within their containing length-$2^{-k}$ hypercube region. Thus \begin{equation} \label{cksum2} \sum_{k=1}^{K}c_k \le 2^{Kd}-1 \end{equation} for any integer $K \ge 1$. Moreover, for any $1 \le i \le m$, $\mathbf x_i$ lies in the same length-$2^{-(k_i-1)}$ hypercube as one of $\mathbf x_0,\ldots,\mathbf x_{i-1}$, and this hypercube has $L_1$ distance $\frac{d}{2^{k_i-1}}$ between two of its opposite vertices, so by Lemma \ref{tri}, \begin{equation}\label{yifxiabs}|\hat y_i-f(\mathbf x_i)| \le \frac{d}{2^{{k_i}-1}}.\end{equation} Combining these, \begin{align*}
    \sum_{i=1}^m |\hat y_i-f(\mathbf x_i)|^p & \le \sum_{k \ge 1} c_k\left(\frac{d}{2^{k-1}}\right)^p
    = \sum_{K \ge 1} \left[ \left(\sum_{k=1}^K c_k\right) \left(\left(\frac{d}{2^{K-1}}\right)^p-\left(\frac{d}{2^K}\right)^p\right) \right]
    \\ &\le d^p\sum_{K \ge 1}(2^{Kd}-1)(2^{-p(K-1)}-2^{-pK})
    = d^p(2^p-1)\sum_{K \ge 1}2^{-pK}(2^{Kd}-1)
    \\ &= d^p(2^p-1)\left(\frac{2^{d-p}}{1-2^{d-p}}-\frac{2^{-p}}{1-2^{-p}}\right)
    = \frac{(2^d-1)d^p}{1-\frac{2^d}{2^p}}.
\end{align*} This holds for all $f \in \mathcal F_{\infty,d}$ and sequences of $\mathbf x_i$; hence $\mathscr L_p(A,\mathcal F_{\infty,d}) \le \frac{(2^d-1)d^p}{1-\frac{2^d}{2^p}}$.
\end{proof}

The next corollary follows from Proposition \ref{dlower} and Theorem \ref{dpupper} since $\opt_p(\mathcal F_{\infty}) = 1$ for all $p \ge 2$. 

\begin{cor}
For any fixed positive integer $d$ and real number $p \ge d+1$, we have $\opt_p(\mathcal F_{\infty,d}) = \Theta(d^p)$, where the constant in the upper bound depends only on $d$.
\end{cor}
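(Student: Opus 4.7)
The plan is to combine the two results cited immediately above: the lower bound in Proposition \ref{dlower} and the upper bound in Theorem \ref{dpupper}. Both bounds fit together cleanly once one observes that $\opt_p(\mathcal{F}_\infty) = 1$ whenever $p \ge 2$ (which holds throughout the regime $p \ge d+1$, since $d \ge 1$).

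For the lower bound, I would instantiate Proposition \ref{dlower} with $q = \infty$ to get $\opt_p(\mathcal{F}_{\infty,d}) \ge d^p \cdot \opt_p(\mathcal{F}_\infty)$. Because $p \ge d+1 \ge 2$, the result from \cite{kl} quoted at the start of the introduction gives $\opt_p(\mathcal{F}_\infty) = 1$, so $\opt_p(\mathcal{F}_{\infty,d}) \ge d^p$, a matching lower bound with absolute constant $1$.

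For the upper bound, I would quote the explicit inequality from Theorem \ref{dpupper}, namely $\opt_p(\mathcal{F}_{\infty,d}) \le \frac{(2^d-1) d^p}{1 - 2^d/2^p}$, which is valid on the whole region $p > d$. The only thing to check is that under the stronger hypothesis $p \ge d+1$ the denominator is bounded away from zero by a quantity depending only on $d$: since $p - d \ge 1$, we have $2^d/2^p \le 1/2$, and hence $1 - 2^d/2^p \ge 1/2$. Therefore $\opt_p(\mathcal{F}_{\infty,d}) \le 2(2^d-1)\, d^p$, which is $O(d^p)$ with a constant depending only on $d$.

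There is no real obstacle here beyond verifying the bookkeeping in the denominator; the proof is a one-line combination once the hypothesis $p \ge d+1$ is used to bound $1 - 2^{d-p}$ from below by $1/2$. Combining the two estimates yields $d^p \le \opt_p(\mathcal{F}_{\infty,d}) \le 2(2^d-1)\, d^p$, which is precisely the claimed $\opt_p(\mathcal{F}_{\infty,d}) = \Theta(d^p)$ with the upper constant depending only on $d$.
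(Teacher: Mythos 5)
Your proof is correct and matches the paper's argument exactly: the paper likewise derives the lower bound $d^p$ from Proposition \ref{dlower} together with $\opt_p(\mathcal F_\infty)=1$ for $p\ge 2$, and the upper bound from Theorem \ref{dpupper}. Your explicit constant $2(2^d-1)=2^{d+1}-2$ is even the same multiplicative gap the paper notes in its discussion section.
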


We can also use Theorems \ref{p_inf_d} and \ref{dpupper} to obtain sharp bounds on the worst-case errors for learning $\mathcal{F}_{\infty,d}$ when the number of trials is bounded.

\begin{cor}
Let $d>0$ be an integer and $p$ be a real number with $0 < p<d$. Then $\opt_p(\mathcal F_{\infty,d}, m)=\Theta(m^{1-\frac{p}{d}})$, where the constants in the bounds depend on $p$ and $d$.
\end{cor}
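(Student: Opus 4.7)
The plan is to derive both bounds by quantitatively refining the proofs of Theorems~\ref{p_inf_d} and~\ref{dpupper}.

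For the lower bound, the adversary construction in the proof of Theorem~\ref{p_inf_d} with parameter $n$ uses $n^d + 1$ distinct inputs and forces total $p$-error at least $2^{-p} n^{d-p}$. Setting $n = \lfloor m^{1/d} \rfloor$ yields an adversary strategy that fits inside $m$ trials (pad any remaining inputs by repeating $\mathbf x_0$, against which the learner can make zero additional error) and forces error at least $\Omega(m^{1-p/d})$, giving $\opt_p(\mathcal F_{\infty,d}, m) = \Omega(m^{1-p/d})$.

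For the upper bound, apply the nearest-neighbor algorithm $A$ of Theorem~\ref{dpupper}. Its analysis produces, for each trial $i \in \{1,\dots,m\}$, a level $k_i \ge 1$ with $|\hat y_i - f(\mathbf x_i)| \le d \cdot 2^{1-k_i}$ (via Lemma~\ref{tri}), and a pigeonhole argument shows $S_K := \sum_{k \le K} c_k \le 2^{Kd} - 1$ where $c_k = |\{i : k_i = k\}|$. In the unbounded setting with $p > d$ these bounds already give a convergent geometric series; in our regime $p < d$ that series diverges, so we must also use the trivial constraint $S_K \le m$. Setting $b_K = (d/2^{K-1})^p$ and applying Abel summation yields
\[ \sum_{i=1}^m |\hat y_i - f(\mathbf x_i)|^p \le \sum_{K \ge 1} \min(m,\, 2^{Kd}-1)\,(b_K - b_{K+1}). \]
Choose $K^* = \lceil \log_2(m+1)/d \rceil$, so that $2^{K^*} = \Theta(m^{1/d})$, and split the sum at $K^*$. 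For $K < K^*$, use $S_K \le 2^{Kd} - 1$; the resulting sum is a geometric series with ratio $2^{d-p} > 1$, dominated by its last term, of order $d^p \cdot 2^{K^*(d-p)} = O(m^{1-p/d})$. For $K \ge K^*$, use $S_K \le m$ and telescope to obtain $m \cdot b_{K^*} = O(m \cdot m^{-p/d}) = O(m^{1-p/d})$. Summing the two pieces gives the matching upper bound.

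The main obstacle is the upper bound, as the lower bound is essentially a direct specialization of an existing adversary. For the upper bound, the new ingredient is truncating at level $K^*$ and combining both structural constraints on $(c_k)$; the constants then depend on $p$ and $d$ through factors such as $d^p$ and $(2^{d-p}-1)^{-1}$, consistent with the statement of the corollary.
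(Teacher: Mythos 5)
Your proposal is correct and follows essentially the same route as the paper: the lower bound is the identical specialization of the adversary from Theorem \ref{p_inf_d} with $n=\lfloor m^{1/d}\rfloor$, and the upper bound runs the nearest-neighbor algorithm of Theorem \ref{dpupper} truncated at the same level $K=\lceil \log_2(m+1)/d\rceil$. The only cosmetic difference is that you organize the upper bound via Abel summation with the constraint $\min(m,2^{Kd}-1)$ and split the sum at $K^*$, whereas the paper directly majorizes the counts by $c_k \le 2^{kd}-2^{(k-1)d}$ for $k\le K$; both yield the same $O(m^{1-p/d})$ bound with constants depending on $p$ and $d$.
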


\begin{proof}
By Theorem \ref{p_inf_d}, we have $\opt_p(\mathcal F_{\infty,d}, m) \ge \frac{1}{2^p} n^{d-p}$ for $n = \lfloor m^{1/d} \rfloor$, so we obtain the lower bound $\opt_p(\mathcal F_{\infty,d}, m) \ge \frac{1}{2^p} m^{\frac{d-p}{d}}(1-o(1))$. For the upper bound, we use the algorithm and notation of Theorem \ref{dpupper} with $K = \left\lceil \frac{ \log_2(m+1)}{d}\right\rceil$ to obtain \begin{align*}
    \sum_{i=1}^m |\hat y_i-f(\mathbf x_i)|^p & \le \sum_{i=1}^m  \left(\frac{d}{2^{k_i-1}}\right)^p 
    \le \sum_{k = 1}^{K} (2^{kd}-2^{(k-1)d})\left(\frac{d}{2^{k-1}}\right)^p
    \\ &= d^p (2^d - 1) \sum_{k = 1}^{K} 2^{(k-1)(d-p)}
    = d^p (2^d - 1) \frac{2^{K(d-p)}-1}{2^{d-p}-1}
    \\ &< \frac{d^p (2^d - 1)2^{d-p}}{2^{d-p}-1}m^{\frac{d-p}{d}}(1+o(1)),
\end{align*} where the first inequality follows from Inequality \ref{yifxiabs} and the second inequality follows from Inequality \ref{cksum2} since $\left(\frac{d}{2^{k-1}}\right)^p$ is decreasing in $k$. Thus \[ \opt_p(\mathcal F_{\infty,d}, m) \le \frac{d^p (2^d - 1)2^{d-p}}{2^{d-p}-1}m^{\frac{d-p}{d}}(1+o(1)). \]
\end{proof}

\section{Discussion and open problems}\label{s:open}

With the results in this paper, the value of $\opt_p(\mathcal{F}_q)$ is now bounded up to a constant factor for all $p, q \ge 1$ except when $q \in (1, 2)$ and $p \in (1, 2) \cup (2, 2+\frac{1}{q-1})$. In particular, by combining the results in this paper with the results in \cite{kl}, we now know that $\opt_p(\mathcal{F}_q) = 1$ for all $(p , q)$ that lie in the following regions. 

\begin{itemize}
    \item $p, q \ge 2$
    \item $q \in (1, 2)$ and $p \ge 2+\frac{1}{q-1}$
\end{itemize}

In addition to investigating the regions in which $\opt_p(\mathcal{F}_q)$ is not bounded up to a constant factor, it remains to narrow the constant gap between the upper and lower bounds for $\opt_{1+\varepsilon}(\mathcal{F}_q) = \Theta(\varepsilon^{-\frac{1}{2}})$ when $\varepsilon \in (0, 1)$ and $q \in [2, \infty) \cup \left\{ \infty \right\}$. Another similar problem is to narrow the constant gap between the upper and lower bounds for $\opt_2(\mathcal{F}_q) = \Theta(\varepsilon^{-1})$ when $q \in (1, 2)$.

The results in this paper also help characterize the values of $(p, q)$ for which $\opt_p(\mathcal{F}_q)$ is finite. Before this paper, it was only known that $\opt_2(\mathcal{F}_q)$ is finite for $p > 1$ and $q \ge 2$, and $\opt_p(\mathcal{F}_q) = \infty$ when $p = 1$ or $q = 1$. With our new results, we now know that $\opt_p(\mathcal{F}_q)$ is also finite when $p \ge 2$ and $q > 1$. We make the following conjecture about this problem.

\begin{conj}
For all $p > 1$ and $q > 1$, $\opt_p(\mathcal{F}_q)$ is finite.
\end{conj}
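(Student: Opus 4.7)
The plan is to reduce the conjecture to the single remaining case $p, q \in (1, 2)$ and then attack it through a refined LININT analysis paralleling Theorem \ref{upperbound}. The reduction is direct: for $q \ge 2$ and $p \in (1, 2)$, Corollary \ref{mainupper} gives $\opt_p(\mathcal F_q) < \infty$; for $q \ge 2$ and $p \ge 2$ the result of \cite{kl} yields $\opt_p(\mathcal F_q) = 1$; and for $q \in (1, 2)$ and $p \ge 2$, Theorem \ref{2qup} combined with Corollary \ref{linintp} gives $\opt_p(\mathcal F_q) \le 1/(q-1) < \infty$. This leaves only $p, q \in (1, 2)$ to address.

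For that range, the key intermediate step would be to establish the following analogue of Lemma \ref{ei2di}: for LININT on $\mathcal F_q$ with $q \in (1, 2)$, the weighted error sum $\sum_i e_i^q / d_i^{q-1}$ is bounded by a constant $C_q$ depending only on $q$. Once such a bound is in hand, one can apply H\"older's inequality in the style of Theorem \ref{upperbound}: for $p \in (1, q)$, write
\[
\sum_i e_i^p = \sum_i \left(\frac{e_i^q}{d_i^{q-1}}\right)^{p/q} d_i^{p(q-1)/q}
\]
and use the conjugate exponents $q/p$ and $q/(q-p)$ to obtain
\[
\sum_i e_i^p \le C_q^{p/q}\left(\sum_i d_i^{p(q-1)/(q-p)}\right)^{(q-p)/q}.
\]
The second factor is finite by Lemma \ref{sumditop}, since $p(q-1)/(q-p) > 1$ is equivalent to $p > 1$. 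For $p \in [q, 2)$, Lemma \ref{linint1} combined with $\sum e_i^q \le \sum_i e_i^q/d_i^{q-1} \le C_q$ (using $d_i \le 1$) gives $\sum e_i^p \le \sum e_i^q \le C_q$ directly.

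The central technical obstacle is the conjectured bound $\sum_i e_i^q / d_i^{q-1} \le C_q$ itself. In the boundary regime (where $x_i$ falls outside the convex hull of previously revealed points) the action increment satisfies $\Delta J_q = e_i^q / d_i^{q-1}$ on the nose, so that contribution is under control. In the interior-outside regime of Lemma \ref{2qout}, the plan is to sharpen its $(a+b)^{q-1}$ denominator to $\min(a, b)^{q-1}$ at the cost of a smaller leading constant; Bernoulli's inequality $(1+t)^q \ge 1+qt$ yields the required estimate at the critical endpoint $x = b$, and extending it to all $x > b$ via a derivative/monotonicity argument on the function $a|x/a+1|^q + b|x/b-1|^q - (a+b) - c_q |x|^q / \min(a,b)^{q-1}$ is the first concrete sub-goal. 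The genuinely hard part is the interior-inside regime, where Lemma \ref{2qin} only yields $\Delta J_q \ge 2q(q-1)|m|^{q-2} e_i^2 / (a+b)$: when $x_i$ lies close to one existing revealed point but far from the next, $d_i$ is much smaller than $a+b$, while $|m|^{q-2}$ is controlled only by the global constraint $|m|^q (a+b) \le 1$, leaving a real gap between what the standard analysis provides and the $e_i^q / d_i^{q-1}$ lower bound required. Overcoming this gap is the main open step. It likely requires either a new scale-sensitive inequality that unifies both regimes, or the replacement of LININT by a modified algorithm that effectively snaps inputs to a discretized grid so that the minimum spacing $d_i$ never drops below a controlled threshold.
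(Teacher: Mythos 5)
This statement is a \emph{conjecture} that the paper explicitly leaves open (the surrounding discussion states that finiteness is known for $p \ge 2$, $q > 1$ and for $p > 1$, $q \ge 2$, leaving exactly $p, q \in (1,2)$ unresolved), so there is no proof in the paper to compare against. Your proposal does not close that gap either, and to your credit you say so: the entire argument is conditional on the unproven bound $\sum_i e_i^q / d_i^{q-1} \le C_q$ for LININT, which you correctly identify as ``the main open step.'' The parts you do carry out are sound. The reduction to $p, q \in (1,2)$ via Corollary \ref{mainupper}, the result of \cite{kl}, and Theorem \ref{2qup} with Corollary \ref{linintp} is exactly right, and the conditional H\"older computation checks out: the exponents $q/p$ and $q/(q-p)$ are conjugate, $p(q-1)/(q-p) > 1$ is indeed equivalent to $p > 1$ when $p < q$, and Lemma \ref{sumditop} applies since it is a statement about the $d_i$ alone.

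The genuine gap is the key lemma itself, and your diagnosis of why it is hard is accurate. The paper's machinery for $q \in (1,2)$ (Lemmas \ref{2qin}, \ref{2qout}, and \ref{2qboth}) delivers only the scale-free increment $\Delta J_q \ge (q-1)e_i^2$; it contains no mechanism for making the lower bound on $\Delta J_q$ blow up as $d_i \to 0$, which is precisely what $e_i^q/d_i^{q-1}$ demands. In the interior case the increment is governed by $|m|^{q-2}(a+b)$, and the only available control on $|m|$ is the global constraint $|m|^q(a+b) \le 1$, which says nothing about the local gap $\min(a,b) = d_i$ when $x_i$ sits very close to one neighbor. Note also that any such $C_q$ must satisfy $C_q = \Omega\left(\frac{1}{q-1}\right)$ by Corollary \ref{2qlow}, since your own argument for $p \in [q,2)$ would give $\opt_2(\mathcal F_q) \le C_q$; so the conjectured lemma is at best true with a constant degenerating as $q \to 1$, consistent with but not implied by anything proved here. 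In short: a correct and honest reduction, a correct conditional finish, but the conjecture remains open, exactly as the paper says.
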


Besides the new results about smooth functions of a single variable, we also introduced a generalization of the model to multi-variable functions and found some bounds for this multi-variable online learning scenario. We showed that $\opt_p(\mathcal F_{\infty,d})$ is infinite when $0 < p < d$ and finite when $p > d$, but it remains to determine whether $\opt_d(\mathcal F_{\infty,d})$ is finite for $d > 1$. For finite $q \ge 1$ and $0 < p < d$, we also know that $\opt_p(\mathcal F_{q,d})$ is infinite, but it remains to determine whether $\opt_p(\mathcal F_{q,d})$ is finite for $p \ge d$ and $q \in (1, \infty)$. In addition, we proved for any fixed positive integer $d$ that $\opt_p(\mathcal F_{\infty,d}) = \Theta(d^p)$ for $p \ge d+1$, where the constant in the upper bound depends on $d$. The multiplicative gap between the upper and lower bounds is $2^{d+1} - 2$. We conjecture that the lower bound is sharp for $p$ sufficiently large with respect to $d$ and $q$.

\begin{conj}
    For all $q \in [1, \infty) \cup \left\{\infty\right\}$, for all positive integers $d$, and for all real numbers $p$ sufficiently large with respect to $q$ and $d$, we have $\opt_p(\mathcal F_{q,d}) = d^p$.
\end{conj}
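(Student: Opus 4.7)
The plan is to split $\opt_p(\mathcal F_{q,d}) = d^p$ into matching inequalities. For the lower bound, I would directly combine Proposition~\ref{dlower} with the fact that $\opt_p(\mathcal F_q) \ge 1$ (which is Proposition~\ref{pq1} for $q>1$, and is easily verified for $q=\infty$ by an adversary who first reveals $f(0)=0$ and then at $x_1=1$ forces an error of size $1$). This immediately yields $\opt_p(\mathcal F_{q,d}) \ge d^p \cdot \opt_p(\mathcal F_q) \ge d^p$, without requiring $p$ to be taken particularly large.

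For the upper bound $\opt_p(\mathcal F_{q,d}) \le d^p$, I would start with the case $q=\infty$ and use a ``feasibility midpoint'' algorithm: on each trial $i$, compute \[ a_i = \min_{j<i}\bigl(f(\mathbf x_j)+\|\mathbf x_j-\mathbf x_i\|_1\bigr), \qquad b_i = \max_{j<i}\bigl(f(\mathbf x_j)-\|\mathbf x_j-\mathbf x_i\|_1\bigr), \] which by Lemma~\ref{tri} bracket $f(\mathbf x_i)$, and predict $\hat y_i = (a_i+b_i)/2$, giving an error of at most $(a_i-b_i)/2 \le d$. The intuition is that for very large $p$ the adversary's best strategy is to force a single error of exactly $d$ on one trial, at which point $f$ becomes uniquely determined on all of $[0,1]^d$: for example, revealing $f(\mathbf 0)=0$ and $f(\mathbf 1)=d$ forces $f(\mathbf x)=\|\mathbf x\|_1$ on $[0,1]^d$, since $\|\mathbf x\|_1+\|\mathbf x-\mathbf 1\|_1=d$ for every $\mathbf x \in [0,1]^d$. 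After that single trial all further midpoint predictions are exact, so the total $p$-error equals $d^p$ on the nose. To complete the argument I would show that any other adversary strategy, which spreads its errors across several trials, gives a total $p$-error strictly below $d^p$ for sufficiently large $p$, e.g.\ by introducing a multi-variable analog $J_{q,d}[f_S]$ of the $q$-action and proving a telescoping bound modeled on Lemma~\ref{pqboth} in which the increment of $J_{q,d}$ upon adding a new datum dominates the $p$-th power of the resulting prediction error.

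The hardest step will be converting the ``single big error'' intuition into a rigorous bound with the sharp leading constant $d^p$: the nearest-neighbor analysis of Theorem~\ref{dpupper} already gives $O(d^p)$, but its leading constant $(2^d-1)/(1-2^d/2^p)$ tends to $2^d-1$ rather than $1$ as $p\to\infty$, so a more economical algorithm (such as the midpoint one above) combined with a correspondingly sharper potential/action argument is needed. A secondary obstacle is extending the upper bound from $q=\infty$ to general $q \in (1,\infty)$, where Lemma~\ref{tri} no longer holds; one would need an $L^q$-type feasibility bound to drive the analogous algorithm on $\mathcal F_{q,d}$, most likely by pairing a multi-variable analog of $J_q$ with H\"older-type estimates in the spirit of Section~\ref{pql2}.
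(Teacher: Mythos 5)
This statement is a \emph{conjecture} in the paper, not a theorem: the authors explicitly leave the upper bound open, noting that their best upper bound (Theorem~\ref{dpupper}) exceeds the conjectured value $d^p$ by a multiplicative factor of $2^{d+1}-2$ even as $p \to \infty$. Your lower bound is correct and is exactly what the paper establishes: Proposition~\ref{dlower} gives $\opt_p(\mathcal F_{q,d}) \ge d^p \cdot \opt_p(\mathcal F_q)$, and $\opt_p(\mathcal F_q) \ge 1$ for all $q \ge 1$ (including $q = \infty$) by the two-point adversary you describe. So the content of the conjecture is entirely in the upper bound $\opt_p(\mathcal F_{q,d}) \le d^p$, and there your proposal is a research program rather than a proof.

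Concretely, the gap is this: the feasibility-midpoint algorithm gives a per-trial error of at most $d$, but that alone only bounds a single term of the sum, not $\sum_i e_i^p$. The adversary need not force one error of exactly $d$; it can force many errors of size $d(1-\delta)$ with $\delta$ shrinking as $p$ grows, and ruling this out requires precisely the potential function you defer to --- a multi-variable analog $J_{q,d}$ whose increment upon adding a datum dominates $e_i^p$. You never define this functional, and constructing one is the open problem. For $q=\infty$ there is no obvious action integral to telescope (even in one variable, the $q=\infty$ case $\opt_p(\mathcal F_\infty)=1$ is obtained via the containment $\mathcal F_\infty \subseteq \mathcal F_2$ and the $J_2$ potential, and it is not clear what the analogous embedding or potential is for $\mathcal F_{\infty,d}$ with the $L_1$ Lipschitz structure of Lemma~\ref{tri}). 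The extension to finite $q$, which you flag as a ``secondary obstacle,'' is if anything harder, since even the finiteness of $\opt_p(\mathcal F_{q,d})$ for finite $q>1$ and $p \ge d$ is listed as open in Section~\ref{s:open}. In short: the lower bound half is fine and matches the paper; the upper bound half is missing its key lemma, and that lemma is the conjecture.
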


The papers \cite{kl} and \cite{long} investigated $\opt_1(\mathcal{F}_q, m)$ for $q \ge 2$, where $m$ is the number of trials. It would be natural to study $\opt_p(\mathcal{F}_q, m)$ for $p = 1+\varepsilon$ with $0 < \varepsilon < 1$ and $q \ge 1$, since $\opt_{1+\varepsilon}(\mathcal{F}_q)$ can grow arbitrarily large as $\varepsilon \rightarrow 0$. We bounded $\opt_p(\mathcal F_{\infty,d}, m)$ up to a constant factor for any fixed positive integer $d$ and fixed real number $p$ with $0 < p<d$, but the constants in the bounds depend on $p$ and $d$. It remains to narrow the gap between the upper bound of $\frac{d^p (2^d - 1)2^{d-p}}{2^{d-p}-1}m^{\frac{d-p}{d}}(1+o(1))$ and the lower bound of $\frac{1}{2^p} m^{\frac{d-p}{d}}(1-o(1))$. It would also be interesting to investigate $\opt_p(\mathcal F_{q,d}, m)$ for finite values of $q$.

Another possible direction would be to investigate families of smooth functions with additional restrictions. For example, let $\mathcal{E}_{q} \subseteq \mathcal{F}_q$ be the family of exponential functions $f(x) = e^{a x +b}$ with $f \in \mathcal{F}_q$. 

\begin{prop}
For all $p > 0$ and $q \ge 1$, we have $\opt_p(\mathcal{E}_q) = 1$.
\end{prop}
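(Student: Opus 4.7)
The plan is to establish matching upper and lower bounds of $1$ separately. The upper bound exploits the fact that every $f(x)=e^{ax+b}\in\mathcal{E}_q$ is fully determined by the two parameters $a$ and $b$, so after the learner has seen two revealed pairs with distinct inputs it can solve $a=(\ln f(x_1)-\ln f(x_0))/(x_1-x_0)$ and $b=\ln f(x_0)-ax_0$ and predict every later value exactly. I would consider the algorithm that guesses $0$ on trial $0$; on each later trial returns the previously-revealed value at $x_t$ if the same input has appeared before, uses the recovered $(a,b)$ to predict exactly once two distinct inputs have appeared, and otherwise guesses $\hat y_t=f(x_0)$. Only the first trial $t^*$ with $x_{t^*}\ne x_0$ contributes nonzero error, and H\"older's inequality applied to $f\in\mathcal{F}_q$ gives $|f(x_{t^*})-f(x_0)|\le|x_{t^*}-x_0|^{1-1/q}\|f'\|_q\le 1$, so the total $p$-error is at most $1$.

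For the lower bound, since $\mathcal{F}_\infty\subseteq\mathcal{F}_q$ implies $\mathcal{E}_\infty\subseteq\mathcal{E}_q$, it suffices to show $\opt_p(\mathcal{E}_\infty)\ge 1$. For each small $a>0$, set $C=1/(ae^a)$ and define $f_a(x)=Ce^{ax}$ and $g_a(x)=Ce^{-ax}$; both satisfy $f(0)=C$, while $\|f_a'\|_\infty=aCe^a=1$ and $\|g_a'\|_\infty=aC=e^{-a}<1$, so both lie in $\mathcal{F}_\infty$. The adversary's strategy is to declare $x_0=0$, $f(x_0)=C$, $x_1=1$, and upon seeing $\hat y_1$ reveal whichever of $Ce^a$ or $Ce^{-a}$ is farther from $\hat y_1$. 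This forces an absolute error of at least $(Ce^a-Ce^{-a})/2=(1-e^{-2a})/(2a)$, which tends to $1$ as $a\to 0^+$, so $\opt_p(\mathcal{E}_\infty)\ge(1-\eta)^p$ for every $\eta>0$; letting $\eta\to 0$ completes the proof.

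The main obstacle is selecting the right adversary family: the simple strategy of using $\pm f$ from Proposition \ref{pq1} fails because $-e^{ax+b}\notin\mathcal{E}_q$, and any single exponential at a fixed scale yields only a spread of $2C\sinh a<2$ at $x_1=1$. The pair $(f_a,g_a)$ is designed precisely to agree at $x_0=0$ while spreading to a diameter approaching $2$ at $x_1=1$, yielding the tight constant $1$ only in the limit $a\to 0^+$; the remaining Taylor expansion of $(1-e^{-2a})/(2a)$ and verification that $g_a\in\mathcal{F}_\infty$ are routine.
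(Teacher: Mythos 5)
Your proof is correct and follows essentially the same strategy as the paper's: the upper bound rests on the observation that only the first trial presenting a second distinct input can incur nonzero error, and that error is at most $1$ for any $f \in \mathcal{F}_q$, while the lower bound uses a pair of exponentials agreeing at $x=0$ (one increasing with derivative reaching $1$, one decreasing) whose values at $x=1$ spread to nearly $2$. Your parametrization $C = 1/(ae^a)$ with the pair $Ce^{\pm ax}$ is just a cleaner rewriting of the paper's $\varepsilon$--$\varrho$ construction, so no further comparison is needed.
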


\begin{proof}
    The upper bound $\opt_p(\mathcal{E}_q) \le 1$ follows by Lemma \ref{linint1}, since the learner knows the function after two rounds with different inputs and the first round does not count for the total error. For the lower bound, consider an adversary that chooses some $\varepsilon \in (0,1)$, defines $\varrho = 1-\sqrt[1-\varepsilon]{1-\varepsilon}$, and reveals $f(0) = -\frac{1-\varepsilon}{\ln(1-\varepsilon)}$. On the second turn, they either reveal $f(1) = -\frac{1}{\ln(1-\varepsilon)}$ or $f(1) = -\frac{1-\varrho}{\ln(1-\varrho)}$, whichever maximizes the error for the learner's guess.

    If $f(1) = -\frac{1}{\ln(1-\varepsilon)}$, then $f(x) = e^{a x +b}$ with $a = -\ln(1-\varepsilon)$ and $b = \ln(1-\varepsilon)-\ln(-\ln(1-\varepsilon))$. Note that $f'(x) = a e^{a x + b} \in [1-\varepsilon, 1]$ for all $x \in [0,1]$, so $f \in \mathcal{E}_q$ for all $q \ge 1$. If $f(1) =  -\frac{1-\varrho}{\ln(1-\varrho)}$, then $f(x) = e^{a x +b}$ with $a = \ln(1-\varrho)$ and $b = -\ln(-\ln(1-\varrho))$. Note that $f'(x) = a e^{a x + b} \in [-1, -1+\varrho]$ for all $x \in [0,1]$, so $f \in \mathcal{E}_q$ for all $q \ge 1$. Moreover, note that \[\lim_{\varepsilon \rightarrow 0} \left(-\frac{1}{\ln(1-\varepsilon)}+\frac{1-\varrho}{\ln(1-\varrho)}\right) = \lim_{\varepsilon \rightarrow 0} \left( \frac{-1+(1-\varepsilon)\sqrt[1-\varepsilon]{1-\varepsilon}}{\ln(1-\varepsilon)} \right) = 2,\] by L'H\^{o}pital's rule. Thus $\opt_p(\mathcal{E}_q) \ge 1$.
\end{proof}

Let $\mathcal{P}_{q, m} \subseteq \mathcal{F}_q$ be the family of polynomial functions $f \in \mathcal{F}_q$ of degree at most $m$. It is easy to see that $\opt_p(\mathcal{P}_{q, 1}) = 1$ for all $p > 0$ and $q \ge 1$, but it would be interesting to investigate $\opt_p(\mathcal{P}_{q, m})$ for $m > 1$. Note that we have $\opt_p(\mathcal{P}_{q, m}) \le \opt_p(\mathcal{F}_q, m)$ for all $p > 0$, $q \ge 1$, and $m \ge 1$, since the learner will know $f \in \mathcal{P}_{q, m}$ with certainty after being tested on $m+1$ different inputs. Let $\mathcal{P}_{q} \subseteq \mathcal{F}_q$ be the family of all polynomial functions $f \in \mathcal{F}_q$. We make the following conjecture about this family.

\begin{conj}
For all $p > 0$ and $q \ge 1$, we have $\opt_p(\mathcal{P}_q) = \opt_p(\mathcal{F}_q)$.
\end{conj}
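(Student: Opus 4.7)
The inequality $\opt_p(\mathcal{P}_q) \le \opt_p(\mathcal{F}_q)$ is immediate from $\mathcal{P}_q \subseteq \mathcal{F}_q$. For the reverse direction, the plan is to show that any near-optimal $\mathcal{F}_q$-adversary can be simulated inside $\mathcal{P}_q$, losing only a factor $(1-\delta)^p$ that vanishes as $\delta \to 0$. The key technical input is an interpolation lemma asserting that polynomials are dense in $\mathcal{F}_q$ at any finite collection of prescribed values, subject to the constraint $J_q \le 1$.

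The lemma I would prove is: for any $f \in \mathcal{F}_q$ with $J_q[f] < 1$ and any finite set of distinct inputs $x_1, \ldots, x_n \in [0,1]$, there exists a polynomial $P$ with $P(x_i) = f(x_i)$ for each $i$ and $J_q[P] \le 1$. For $q \in [1,\infty)$, approximate $f'$ within $\eta$ in $L^q$ norm by a polynomial (polynomials are dense in $L^q([0,1])$) and let $P_0$ be its antiderivative with $P_0(0) = f(0)$; then $P_0$ is a polynomial with $\|P_0' - f'\|_q < \eta$, and since $P_0(0) = f(0)$ we also have $\|P_0 - f\|_\infty \le \|P_0' - f'\|_1 \le \|P_0' - f'\|_q < \eta$. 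For $q = \infty$, extend $f$ to a Lipschitz function on $\mathbb{R}$, mollify to obtain a smooth Lipschitz approximation with the same Lipschitz bound, and apply Bernstein polynomial approximation, which converges in $C^1$ for smooth inputs. In either case, adding the Lagrange correction $c(x) = \sum_{i=1}^n (f(x_i) - P_0(x_i)) L_i(x)$ produces $P = P_0 + c$ interpolating the prescribed values; since the correction values are $O(\eta)$ and the $L_i$ are the Lagrange basis polynomials for the fixed set $\{x_1, \ldots, x_n\}$, $\|c'\|_q$ is negligible, and combined with $\|P_0'\|_q$ close to $J_q[f]^{1/q} < 1$ we obtain $J_q[P] \le 1$ for $\eta$ sufficiently small.

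To deploy the lemma, fix $\delta, \varepsilon > 0$ and an algorithm $A$ for $\mathcal{P}_q$. Define a derived algorithm $A'$ for $\mathcal{F}_q$ as follows: on input $x_i$ with history $(x_j, y_j)_{j<i}$, query $A$ with input $x_i$ and history $(x_j, (1-\delta) y_j)_{j<i}$ to obtain prediction $\hat{y}'_i$, and return $\hat{y}_i := \hat{y}'_i/(1-\delta)$. By definition of $\opt_p$, there is an $\mathcal{F}_q$-adversary against $A'$ producing a trace $(x_i, y_i)$ consistent with some $f \in \mathcal{F}_q$ and achieving total $p$-error at least $\opt_p(\mathcal{F}_q) - \varepsilon$. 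The $\mathcal{P}_q$-adversary against $A$ plays $(x_i, (1-\delta) y_i)$; by construction $A$'s predictions on this trace are exactly $(1-\delta) \hat{y}_i$, so its total $p$-error equals $(1-\delta)^p \sum_i |\hat{y}_i - y_i|^p \ge (1-\delta)^p(\opt_p(\mathcal{F}_q) - \varepsilon)$. The lemma applied to $(1-\delta) f$, which satisfies $J_q[(1-\delta) f] = (1-\delta)^q J_q[f] \le (1-\delta)^q < 1$, certifies that this trace is realized by some polynomial in $\mathcal{P}_q$, making the play valid. Letting $\varepsilon, \delta \to 0$ gives $\opt_p(\mathcal{P}_q) \ge \opt_p(\mathcal{F}_q)$; the case $\opt_p(\mathcal{F}_q) = \infty$ is handled identically by sending the adversary's target error to infinity.

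The principal obstacle is the interpolation lemma, particularly at $q = \infty$, where density of polynomials in $W^{1,\infty}$ with simultaneous control of the Lipschitz constant does not follow from Stone-Weierstrass; the mollification plus Bernstein step must be carried out carefully enough that $\|P_0'\|_\infty$ stays strictly below $1$. Once the lemma is available uniformly for $q \in [1,\infty) \cup \{\infty\}$, the main argument yields the conjecture for all $p > 0$ and $q \ge 1$.
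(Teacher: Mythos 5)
The statement you are proving is stated in the paper only as a conjecture; the authors give no proof, so there is nothing to compare your argument against. Judged on its own terms, your reduction appears sound and, if the interpolation lemma is written out carefully, would actually resolve the conjecture. The two halves of the argument that matter are both in order: (i) the scaled history $(x_j,(1-\delta)y_j)_{j<i}$ is consistent with $(1-\delta)g$ for some $g\in\mathcal F_q$, hence (by the lemma) with a polynomial in $\mathcal P_q$, so $A'$ is a well-defined algorithm for $\mathcal F_q$ and $A$ is only ever queried on legal $\mathcal P_q$-histories; and (ii) the final trace is finite, so the lemma applies to the (finitely many distinct) inputs and certifies a genuine pair $(P,\sigma)$ with $P\in\mathcal P_q$ over which the defining supremum of $\mathscr L_p(A,\mathcal P_q)$ ranges. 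The quantifier order also works out: for fixed $A$ you get $\mathscr L_p(A,\mathcal P_q)\ge(1-\delta)^p(\opt_p(\mathcal F_q)-\varepsilon)$ for every $\delta,\varepsilon>0$, and only then do you take the infimum over $A$. The infinite case goes through verbatim with a target $M$ in place of $\opt_p(\mathcal F_q)-\varepsilon$.

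Two points deserve explicit care in a full writeup. First, in the interpolation lemma the bound $\|c'\|_q\le\eta\sum_i\|L_i'\|_\infty$ has a constant that blows up as the nodes coalesce, so you must fix the trace (hence the node set) before choosing $\eta$; your reduction does respect this order, but say so. Second, the obstacle you flag at $q=\infty$ is actually a non-issue: the Bernstein polynomial $B_nf$ of a Lipschitz-$L$ function is itself Lipschitz-$L$, since $(B_nf)'(x)=n\sum_{k=0}^{n-1}\bigl(f(\tfrac{k+1}{n})-f(\tfrac{k}{n})\bigr)\binom{n-1}{k}x^k(1-x)^{n-1-k}$ and each increment is at most $L/n$ in absolute value; combined with uniform convergence $\|B_nf-f\|_\infty\to 0$, this gives $P_0$ with $\|P_0'\|_\infty\le 1-\delta$ directly, no mollification or $C^1$ convergence needed. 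With that observation the lemma holds uniformly for $q\in[1,\infty)\cup\{\infty\}$ and your argument is complete. It is worth noting why this does not contradict the paper's remark about $\mathcal P_{q,m}$: your interpolating polynomial has degree growing with the length of the trace, which is exactly what the unbounded-degree class $\mathcal P_q$ permits.
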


Some other possible subsets of $\mathcal{F}_q$ that could be investigated are piecewise functions with at most $k$ pieces where the pieces are polynomials of degree at most $m$, sums of exponential functions, and sums of trigonometric functions.

Finally, we return to the problem from the introduction of predicting the next day's temperature range at a given location. In particular, consider the single-variable problem where we predict the next day's temperature range based only on the time of year. An issue with using the model from \cite{kl} for this temperature prediction problem is that the same input for time of year could have different outputs for the temperature range in different years. A more realistic way to model this problem would be to choose the output from a probability distribution which depends on the input. In order for the learner to be able to guarantee a finite bound on the worst-case error, the number of trials would be bounded and restrictions would be placed on the probability distribution. For example, the density function for the probability distribution could be required to have smoothness properties like the functions from \cite{kl}, and the support of the density function could be required to be a subset of $[0,r]$ for some $r > 0$. Investigating such a model would be an interesting direction for future research. Note that this model reduces to the model from \cite{kl} when the support consists of a single point. 

\section{Acknowledgments}

Most of this research was performed in PRIMES 2022. We thank the organizers for this research opportunity. Our paper subsumes \cite{geneson}, which proved Theorem \ref{mainth}. We also thank the anonymous reviewers for helpful comments which improved the clarity and presentation of the results in this paper.

\end{document}